\newcommand\emu{\widehat{\mu}}
\newcommand\len[1]{\text{length} \left( #1 \right)}
\newcommand{\op}{\text{op}}
\newcommand{\tilS}{\widetilde{S}}
\def \bal#1\eal{\begin{align}#1\end{align}}
\def \bals#1\eals{\begin{align*}#1\end{align*}}
\def\tparam{\theta^{*}}
\def\eparam{\widehat{\theta}}
\def\grad{\nabla}
\def\st{\text{s.t.}}
\def\real{\mathbb{R}}
\def\E{\mathbb{E}}
\def\P{\mathbb{P}}
\def \poprisk{R}
\def\stepSize{{\eta}}
\newcommand{\true}{\theta^*}
\newcommand{\risk}{\mathcal{R}}
\newcommand{\samplerisk}{\mathcal{R}_n}
\newcommand{\graddist}{P_g^{\theta}}
\newcommand{\thetastar}{\theta^*}
\newcommand{\dataset}{\mathcal{D}_n}
\newcommand{\releff}{\text{\sc RelEff}}
\newcommand{\gmom}{\text{\sc Gmom}}
\newcommand{\poploss}{\overline{\mathcal{L}}}
\newcommand{\inprod}[2]{\ensuremath{\langle {#1}, \; {#2}
\rangle}}
\newcommand{\smconst}{\gamma}
\newcommand{\truesamp}{S_P}
\newcommand{\ngood}{n_P}
\newcommand{\tiln}{\widetilde{n}}
\newcommand{\tilp}{\widetilde{p}}
\newcommand{\widetildelta}{\widetilde{\delta}}
\newcommand{\tildelta}{\widetilde{\delta}}
\newcommand{\link}{\Phi}
\newcommand{\indic}[1]{ \mathbb{I} \tlprn {#1}}
\newcommand{\winner}{{\widehat{\theta}_{j^*}}}
\begin{document}

\begin{center} 
{\LARGE{\bf{Robust Estimation via Robust Gradient Estimation}}}

\vspace*{.3in}

{\large{
\begin{tabular}{ccccc}
Adarsh Prasad$^\ddagger$~~~Arun Sai Suggala$^\ddagger$~~~Sivaraman Balakrishnan$^\dagger$~~~Pradeep Ravikumar$^\ddagger$ \\
\end{tabular}

\vspace*{.1in}

\begin{tabular}{ccc}
Machine Learning Department$^{\ddagger}$ \\
Department of Statistics$^{\dagger}$ \\
\end{tabular}

\begin{tabular}{c}
Carnegie Mellon University, \\
Pittsburgh, PA 15213.
\end{tabular}

\vspace*{.2in}

}}

\begin{abstract}
We provide a new computationally-efficient class of estimators for risk minimization. We show that these estimators are robust for general statistical models, under varied robustness settings, including in the classical Huber $\epsilon$-contamination model, and in heavy-tailed settings. Our workhorse is a novel robust variant of gradient descent, and we provide conditions under which our gradient descent variant provides accurate estimators in a general convex risk minimization problem. We provide specific consequences of our theory for linear regression, logistic regression and for canonical parameter estimation in an exponential family. These results provide some of the first computationally tractable and provably robust estimators for these canonical statistical models. Finally, we study the empirical performance of our proposed methods on synthetic and real datasets, and find that our methods convincingly outperform a variety of baselines. 
\end{abstract}
\end{center}

\section{Introduction}
In classical analyses of statistical estimators, statistical guarantees are derived under strong model assumptions, and in most cases these guarantees hold only in the absence of arbitrary outliers, and other deviations from the model assumptions. 
Strong model assumptions are rarely met in practice, and this has motivated the development of robust inferential procedures, and which has a rich history in statistics with seminal contributions due to Box \cite{box53}, Tukey \cite{tukey1975mathematics}, Huber \cite{Huber81}, Hampel \cite{hampel2011robust} and several others. These have led to rich statistical concepts such as the influence function, the breakdown point, and the Huber $\epsilon$-contamination 
model, to assess the robustness of estimators. Despite this progress however, 
the statistical methods with the strongest robustness guarantees are computationally intractable, for instance those 
based on non-convex $M$-estimators \cite{Huber81}, $\ell_1$ tournaments \cite{devroye1985nonparametric,yatracos1985,donoho1988automatic} and
notions of depth \cite{mizera2002depth,gao2017robust,chen2015robust}. 

In this paper, we present a general class of estimators that are computationally tractable, and have strong robustness guarantees. The estimators we propose are obtained by robustifying iterative updates of risk minimization, and are broadly applicable to a wide-range of parametric statistical models. In the risk minimization framework, the target parameter $\thetastar$ is defined as the solution to an optimization problem:
\begin{align}
\label{eqn:risk_intro}
\thetastar =  \argmin_{\theta \in \Theta} \mathcal{R}(\theta) \equiv  \argmin_{\theta \in \Theta} \mathbb{E}_{z \sim P}\left[\poploss(\theta; z)\right],
\end{align}
where $\poploss$ is an appropriate loss-function, $R$ is the population risk and $\Theta$ is the set of feasible parameters. The statistical inference problem within the risk minimization framework is then to compute an approximate minimizer to the above program when given access to samples $\dataset = \{z_1,\ldots,z_n\}$. A classical approach to do so is via \emph{empirical risk minimization} (ERM), where we substitute the empirical expectation given the samples for the population expectation in the specification of the risk objective. While most modern statistical estimators use the above empirical risk minimization framework, a standard assumption that is imposed on $\dataset$ is that the data has no outliers, and has no arbitrary deviations from model assumptions; i.e., it is typically assumed that each of the $z_i$'s are independent and identically distributed according to the distribution $P$. Moreover, many analyses of risk minimization further assume that $P$
follows a sub-gaussian distribution, or has otherwise well-controlled tails in order to appropriately control the deviation between the population risk and its empirical counterpart. Due in part to these caveats with ERM, the seminal work of $M$-estimation replaces the risk minimization objective with a robust counterpart, so that the minimizer of the empirical expectation of the robust counterpart is more robust than the ERM minimizer. As noted above, for strong statistical guarantees, these in turn require solving computationally intractable non-convex optimization programs.

In contrast to this classical work, we propose a class of estimators that have a shift in perspective: rather than specify a robust objective, we consider an algorithm, namely projected gradient descent, that directly optimizes the population risk objective in Eq. \eqref{eqn:risk_intro}, and focus on making this algorithm robust. Thus, in contrast to specifying the robust parameter estimate as the solution to an optimization program as in $M$-estimation, which in turn could be computationally intractable, we specify the robust parameter estimate as the limit of a sequence of iterative updates that are individually robust as well as computationally tractable. We find that this shift in perspective leads to estimators that are both computationally tractable as well with strong robustness guarantees, that are as broadly applicable as ERM or $M$-estimators, and moreover with a unified statistical treatment for varied statistical models.


In addition to being applicable to a variety of statistical models, our general results are also applicable to a variety of \emph{notions of robustness}. In this paper, we derive corollaries in particular for two canonical robustness settings:
\begin{enumerate}
\item[(a)] {\bf Robustness to arbitrary outliers: } In this setting, we focus on Huber's $\epsilon$-contamination model, where rather than observe samples directly from $P$ in~\eqref{eqn:risk_intro} we instead observe samples drawn from $P_{\epsilon}$ which for an \emph{arbitrary} distribution $Q$ is defined as:
\begin{align*}
P_{\epsilon} = (1-\epsilon) P + \epsilon Q. 
\end{align*}
The distribution $Q$ allows for arbitrary outliers, which may correspond to gross corruptions or more subtle deviations from the assumed model. This model can be equivalently viewed as model mis-specfication in the Total Variation (TV) metric. 
\item[(b)] {\bf Robustness to heavy-tails: } In this setting, we are interested in developing estimators under weak moment assumptions. We assume that the distribution $P$ from which we obtain samples only has finite low-order moments (see Section~\ref{sec:heavy} for a precise characterization). Such heavy tailed distributions arise frequently in the analysis of financial data and large-scale biological datasets (see for instance examples in \cite{fan16,zhou2017}). 
In contrast to classical analyses of empirical risk minimization \cite{VdG-book}, in this setting the empirical risk is not uniformly close to the population risk, and methods that directly minimize the empirical risk perform poorly (see Section~\ref{sec:experiments}).


\end{enumerate}
While we provide corollaries demonstrating robustness with respect to the above deviations, we emphasize that our framework is more general. Below, we provide an outline of our results and contributions.

\begin{enumerate}
\item \textbf{Estimators.} Our first contribution is to introduce a new class of robust estimators for risk minimization~\eqref{eqn:risk_intro}. These estimators are based on robustly estimating gradients of the population risk to then plug in to a projected gradient descent algorithm, and are computationally tractable by design. A crucial ingredient of our framework is the design of robust gradient estimators for the population risk in~\eqref{eqn:risk_intro}. Our main insight is that in this general risk minimization setting, the gradient of the population risk is simply a multivariate mean vector, and we can leverage prior work on mean estimation to design robust gradient estimators. Thus, for our two canonical robustness cases, we develop such robust gradient estimators building on prior work for robust mean estimation in the Huber model \cite{lai2016agnostic}, and in the heavy-tailed model \cite{minsker2015geometric}. Another perspective of our framework is that it significantly generalizes the applicability of \emph{mean estimation} methods to general parametric models.
\item \textbf{Empirical Investigations.} Our estimators are computationally practical, and accordingly, our second contribution is to conduct extensive numerical experiments on real and simulated data with our proposed class of estimators. We provide guidelines for tuning parameter selection, and compare the proposed estimators with several competitive baselines~\cite{bhatia2015robust,ransac81,Huber81}. 
We find that our estimators consistently perform well across different settings, and across various metrics.
\item \textbf{Statistical Guarantees.} Finally, we provide rigorous robustness guarantees for the estimators we propose for a variety of classical statistical models: linear regression, logistic regression, and exponential family models. Our contributions in this direction are two-fold: building on prior work~\cite{balakrishnan2017statistical} we provide a general result on the stability of gradient descent for risk minimization, and show that under certain conditions, gradient descent can be quite tolerant to inaccurate gradient estimates. Subsequently, in concrete settings, we provide a careful analysis of the quality of gradient estimation afforded by our proposed gradient estimators, and combine these results to obtain guarantees on our final parameter estimates. 
\end{enumerate}
Broadly, as we discuss in the sequel, our work suggests that our class of estimators based on robust gradient estimation offer a variety of practical, conceptual, statistical and computational advantages for robust estimation. They provide the general applicability of classical $M$-estimators, together with computational practicality  even for large-scale models, as well as  strong robustness guarantees.


\subsection{Related Work}
There has been extensive work in the broad area of robust statistics (see for instance \citep{hampel2011robust} and references therein); we focus this section on some lines of work that are most related to this paper. For the robustness setting of $\epsilon$-contaminated models, several classical estimators have been developed that are optimally robust for a variety of inferential tasks, including hypothesis testing \cite{huber1965robust}, mean estimation \cite{mizera2002depth}, general parametric estimation \cite{chen2016general,donoho1988automatic,yatracos1985}, and non-parametric estimation \cite{devroye1985nonparametric}. However, a major drawback with this classical line of work has been that most of the estimators with strong robustness guarantees are computationally intractable~\citep{tukey1975mathematics}, while the remaining ones use heuristics and are consequently not optimal \citep{hastings1947low}. A complementary line of recent research \citep{chen2015robust,gao2017robust} has focused on providing minimax upper and lower bounds on the performance of estimators under $\epsilon$-contamination model, without the constraint of computational tractability. Recently, there has been a flurry of research in theoretical computer science~\citep{diakonikolas2016robust,lai2016agnostic,du2017computationally,li17robust,charikar2016learning} designing provably robust estimators which are computationally tractable while achieving near-optimal contamination dependence, for special classes of problems such as computing means and covariances. Some of the proposed algorithms are however not computationally practical as they rely on the ellipsoid algorithm or require solving semi-definite programs \citep{diakonikolas2016robust,du2017computationally,li17robust,charikar2016learning} which can be slow for modern problem sizes. 

While in the general $\epsilon$-contamination setting, the contamination distribution could be arbitrary, there has been a lot of work in settings where the contamination distribution is restricted in various ways. For example, recent work in high-dimensional statistics (for instance \citep{candes2011robust,loh2011high, yi2016,loh2017,chen2013}) have studied problems like principal component analysis and linear regression under the assumption that the corruptions are evenly spread throughout the dataset.  

For the robustness setting of heavy tailed distributions, robust estimators aim to relax the sub-gaussian or sub-exponential distributional assumptions that are typically imposed on the target distribution, and allow it to be a heavy tailed distribution. Most approaches in this category substitute the \emph{empirical mean} of the risk objective in risk minimization with robust mean estimators such as \cite{lerasle2011robust,catoni2012challenging} that exhibit sub-gaussian type concentration around the true mean for distributions satisfying mild moment assumptions. The median-of-means estimator \cite{lerasle2011robust} and Catoni's mean estimator \cite{catoni2012challenging} are two popular examples of such robust mean estimators. In particular, \citet{hsu2016loss} use the median-of-means estimator to solve the corresponding robust variant of ERM. Although this estimator has strong theoretical guarantees, and is computationally tractable, as noted by the authors in \cite{hsu2016loss} it performs poorly in practice. In recent work \citet{brownlees2015empirical} use the Catoni's mean estimator to solve the corresponding robust variant of ERM. The authors provide risk bounds similar to bounds one can achieve under sub-gaussian distributional assumptions. However, their estimator is not easily computable and the authors do not provide a practical algorithm to compute the estimator. Other recent work by \citet{lerasle2011robust, lugosi2016risk} use similar ideas to derive estimators that perform well theoretically, in heavy-tailed situations. However, these approaches involve optimization of complex objectives for which no computationally tractable algorithms exist. We emphasize that in contrast to our work, these works focus on robustly estimating the population risk which does not directly lead to a computable estimator. In contrast, we consider robustly estimating the \emph{gradient} of the population risk, and embedding these estimates within the iterative algorithm of projected gradient descent, which leads naturally to a computionally practical estimator. 

\subsection{Outline}
We conclude this section with a brief outline of the remainder of the paper. In Section~\ref{sec:background}, we provide some background on risk minimization and the running robustness settings of Huber contamination, and heavy-tailed noise models. 
In Section~\ref{sec:gradientOracles}, we introduce our class of robust estimators, and provide concrete 
algorithms for the $\epsilon$-contaminated and heavy-tailed settings. In Section~\ref{sec:experiments} we study the empirical performance of our estimator on a variety of tasks and datasets.  We complement our empirical results with theoretical guarantees in Sections~\ref{sec:theory_prelims},~\ref{sec:consequences_huber} and~\ref{sec:consequences_heavy}. We defer technical details to the Appendix. Finally, we conclude in Section~\ref{sec:discussion} with a discussion of some open problems.




\section{Background and Problem Setup}
\label{sec:background}
In this section we provide the necessary background on risk minimization, gradient descent, and introduce two notions of robustness that we consider in this work.
\subsection{Risk Minimization and Parametric Estimation}
In the setting of risk minimization, we assume that we have access to a differentiable loss function $\poploss: \Theta \times \mathcal{Z} \mapsto \mathbb{R}$, where $\Theta$ is a convex subset of $\real^p$. Let $\mathcal{R}(\theta) = \mathbb{E}_{z \sim P} \left[ \poploss(\theta; z) \right]$ be the population loss, also known as the \emph{risk}, and let $\tparam$ be the minimizer of the population risk $\mathcal{R}(\theta)$, over the set $\Theta$:
\begin{align}
\label{eqn:risk}
\true = \argmin_{\theta \in \Theta} \mathcal{R}(\theta).
\end{align}  
The goal of risk minimization is to minimize the population risk $\mathcal{R}(\theta)$, given only $n$ samples $\dataset = \{z_i\}_{i = 1}^n$, in order to estimate the unknown parameter $\tparam$. 

In this work we assume that the population risk is convex to ensure tractable minimization. Moreover, in order to ensure identifiability of the parameter $\true$, we impose two standard regularity conditions  \cite{bubeck2015convex} on the population risk. These properties are defined in terms of the error of the first-order Taylor approximation of the population risk, i.e. defining, $\tau(\theta_1,\theta_2) := \risk(\theta_1) - \risk(\theta_2) - \inprod{\nabla \risk(\theta_2)}{\theta_1 - \theta_2}$, we assume that
\begin{align}
\label{eqn:sc}
\frac{\tau_\ell}{2} \| \theta_1 - \theta_2\|_2^2 \leq \tau(\theta_1,\theta_2) \leq \frac{\tau_u}{2} \|\theta_1 - \theta_2\|_2^2,
\end{align}
where the parameters $\tau_\ell, \tau_u > 0$ denote the strong-convexity and smoothness parameters respectively.

%
%

\subsection{Illustrative Examples of Risk Minimization}
\label{sec:illex}
The framework of risk minimization is a central paradigm of statistical estimation and is widely applicable. In this section, we provide illustrative examples that fall under this framework. 

\subsubsection{Linear Regression} 
Here we observe paired samples $\{(x_1, y_1), \dots (x_n, y_n)\}$, where each $(x_i, y_i) \in \mathbb{R}^p \times \mathbb{R}$. We assume that the $(x,y)$ pairs sampled from the true distribution $P$ are linked via a linear model:
\begin{align}\label{eq:linregress}
y = \inprod{x}{\true} + w,
\end{align}
where $w$ is drawn from a zero-mean distribution such as normal distribution with variance $\sigma^2$ ($\calN(0,\sigma^2)$) or a more heavy-tailed distribution such as student-t or Pareto distribution. 
We suppose that under $P$ the covariates $x \in \mathbb{R}^p$, have mean 0, and covariance $\Sigma$. 

For this setting we use the squared loss as our loss function, which induces the following population risk:
\[
\poploss(\theta; (x,y)) = \frac{1}{2}\left(y - \left\langle x, \theta\right\rangle\right)^2, \quad \text{and} \quad \mathcal{R}(\theta) = \frac{1}{2}(\theta - \theta^*)^T\Sigma(\theta - \theta^*).
\]
Note that the true parameter $\theta^*$ is the minimizer of the population risk $\mathcal{R}(\theta)$. The strong-convexity and smoothness assumptions from~\eqref{eqn:sc} in this setting require that $\tau_\ell \leq \lambda_{\min}(\Sigma) \leq \lambda_{\max}(\Sigma) \leq \tau_u$.

\subsubsection{Generalized Linear Models}
Here we observe paired samples $\{(x_1, y_1), \dots (x_n, y_n)\}$, where each $(x_i, y_i) \in \mathbb{R}^p \times \calY$. We suppose that the $(x,y)$ pairs sampled from the true distribution $P$ are linked via a linear model such that when conditioned on the covariates  $x$, the response variable has the distribution:
\begin{align}\label{eqn:glm}
P(y | x) \propto \exp \left( \frac{y \inprod{x}{\true} - \link(\inprod{x}{\tparam}) }{c(\sigma)} \right)
\end{align}
Here $c(\sigma)$ is a fixed and known scale parameter and $\link : \real \mapsto \real$ is the link function. We focus on the random design setting where
the covariates $x \in \mathbb{R}^p$, have mean 0, and covariance $\Sigma$. 
We use the negative conditional log-likelihood as our loss function, i.e.
\begin{align}\label{eqn:glmfisherloss}
\poploss(\theta; (x,y)) =  - y \inprod{x}{\theta} + \link(\inprod{x}{\theta}). 
\end{align}
Once again, the true parameter $\theta^*$ is the minimizer of the resulting population risk $\mathcal{R}(\theta)$. It is easy to see that Linear Regression with Gaussian Noise lies in the family of generalized linear models. A popular instance of such  GLMs is a logistic regression model.
\paragraph{Logistic Regression.}  In this case the $(x,y)$ pairs are linked as:
\begin{align}\label{eq:logisticregress}
y = \begin{cases}
1~~~\text{with probability}~~~\frac{1}{1 + \exp ( - \inprod{x}{\true})}, \\
0~~~\text{otherwise.}
\end{cases}
\end{align}
This corresponds to setting $\link(t) = \log(1 + \exp(t))$ and $c(\gamma) = 1$ in~\eqref{eqn:glm}. The hessian of the population risk is given by
\[
\nabla^2\mathcal{R}(\theta) = \mathbb{E}\left[\frac{\exp{\left\langle x, \theta\right\rangle}}{(1+\exp{\left\langle x, \theta\right\rangle})^2}xx^T\right].
\]
Note that as $\theta$ diverges, the minimum eigenvalue of the hessian approaches $0$ and the loss is no longer strongly convex. To prevent this, in this case 
we take the parameter space $\Theta$ to be bounded.

\subsubsection{Exponential Families and Canonical Parameters.} Finally we consider the case where the true distribution $P$ is in exponential family
with canonical parameters $\true \in \mathbb{R}^p$, and a vector of sufficient statistics obtained from the map $\phi: \mathcal{Z} \mapsto \mathbb{R}^p$. Note that while the linear and logistic regression models are indeed in an exponential family, our interest in those cases was not in the canonical parameters. 

In more details, we can write the true distribution $P$ in this case as
\begin{align*}
P(z) = h(z) \exp \left( \inprod{\phi(z)}{\true} - A(\true) \right),
\end{align*}
where $h(z)$ is some base measure. The negative log-likelihood gives us the following loss function:
\begin{align}\label{eq:expfamily}
\poploss(\theta; z) = - \inprod{\phi(z)}{\theta} + A(\theta).
\end{align}
The strong-convexity and smoothness assumptions require that there are constants $\tau_\ell, \tau_u$ such that 
$\tau_\ell \leq
\nabla^2 A(\theta) \leq \tau_u$, for $\theta \in \Theta$.

\subsection{Empirical Risk Minimization}
Given data $\dataset = \{z_i\}_{i = 1}^n$, empirical risk minimization (ERM) substitutes the empirical expectation of the risk for the population risk in the risk minimization objective: 
\begin{align*}
\widehat{\theta}_n = \argmin_{\theta \in \Theta} \samplerisk(\theta) := \frac{1}{n} \sum_{i=1}^n \poploss(\theta; z_i).
\end{align*}

Most modern statistical estimators follow this ERM recipe above. When the loss is the log-likelihood of the statistical model, this reduces to the classical Maximum Likelihood Estimation (MLE) principle. The empirical risk minimizer is however a poor estimator of $\theta^*$ in the presence of outliers in the data: since ERM depends on the sample mean, outliers in the data can effect the sample mean and lead ERM to sub-optimal estimates. This observation has led to a large body of research that focuses on developing robust M-estimators, where we substitute in the empirical expectation of a robust counterpart of the loss function $\poploss$; the resulting estimators have favorable statistical properties, but are often computationally intractable.

\subsection{Projected Gradient Descent}
A popular approach for solving the empirical risk minimization problem is projected gradient descent. Projected gradient descent generates a sequence of iterates $\{\theta^t\}_{t=0}^{\infty}$, by refining an initial parameter $\theta_0 \in \Theta$ via the update:
\begin{align*}
\theta^{t+1} = \mathcal{P}_{\Theta}\left(\theta^t - \eta \nabla \samplerisk(\theta^t) \right),
\end{align*}
where $\eta > 0$ is the step size and $\mathcal{P}_{\theta}$ is the projection operator onto $\Theta$. While this gradient descent method is simple, it is not however robust to various deviations, for general convex losses. Accordingly, we have a small shift in perspective: instead of performing gradient descent on the empirical risk, we perform gradient descent on the population risk. Our work then relies on the important observation that this gradient of the population risk ($\mathbb{E}_{z \sim P}\left[\nabla \poploss(\theta;z)\right]$) is simply a mean vector: one that can be estimated robustly by leveraging recent advances in robust mean estimation \cite{lai2016agnostic, minsker2015geometric}. This leads to a general method for risk minimization based on embedding robust gradient estimation within a projected gradient descent algorithm  (see Algorithm~\ref{algo:festimation}).

\subsection{Robust Estimation}
\label{sec:heavy}
One of the goals of this work is to develop general statistical estimation methods that are robust under varied robustness settings. We derive corollaries in particular for two robustness models: Huber's $\epsilon$-contamination model, and the heavy-tailed model. We now briefly review these two notions of robustness.
\begin{enumerate}
\item[(a)] {\bf Huber's $\epsilon$-contamination model: } 
Huber~\citep{huber1964robust,huber1965robust} proposed the $\epsilon$-contamination model where we observe samples that are obtained from a mixture of the form
\begin{align} \label{eqn:mixture}
 P_\epsilon = (1-\epsilon) P + \epsilon Q ,
\end{align}
where $P$ is the true distribution, $\epsilon$ is the expected fraction of outliers and $Q$ is an \emph{arbitrary} outlier distribution.  Given i.i.d. observations drawn from $P_{\epsilon}$, our objective is to estimate $\theta^*$, the minimizer of the population risk $\mathcal{R}(\theta) = \mathbb{E}_{z \sim P} \left[ \poploss(\theta; z) \right]$, robust to the contamination from $Q$.

\item[(b)] {\bf Heavy-tailed model: }
In the heavy-tailed model it is assumed that the data follows a heavy-tailed distribution (i.e, $P$ is heavy-tailed). 
While heavy-tailed distributions have various possible characterizations: in this paper we consider a characterization via gradients. For a fixed $\theta \in \Theta$ we let $\graddist$ denote the multivariate distribution of the gradient of population loss, i.e. $\nabla \poploss(\theta; z).$ We refer to a potentially heavy-tailed distribution as one for which our only assumption on $\graddist$ is that it has finite second moments for any $\theta \in \Theta$. As we illustrate in Section~\ref{sec:consequences_heavy}, in various concrete examples this translates to relatively weak low-order moment assumptions on the data distribution $P$. 

Given $n$ i.i.d observations from $P$, our objective is to estimate the minimizer of the population risk. From a conceptual standpoint, the classical analysis of risk-minimization which relies on uniform concentration of the empirical risk around the true risk, fails in the heavy-tailed setting necessitating new estimators and analyses \cite{lugosi2017sub,lugosi2016risk,brownlees2015empirical,hsu2016loss}.

\end{enumerate}
\section{Robust Gradient Descent via Gradient Estimation}
\label{sec:gradientOracles}
 
Gradient descent and its variants are at the heart of modern optimization and are well-studied in the literature. 
Suppose we have access to the true distribution $P_{\tparam}$. Then to minimize the population risk $\risk(\theta)$, we can use projected gradient descent, where starting at some initial $\theta^0$ and for an appropriately chosen step-size $\eta$, we update our estimate according to:
\begin{equation}
\label{eqn:pgd_poploss}
\theta^{t+1}  \leftarrow \mathcal{P}_{\Theta}(\theta^t - \eta \nabla \risk(\theta^t)).
\end{equation}
However, we only have access to $n$ samples $\dataset = \{z_i\}_{i = 1}^n$. The key technical challenges are then to estimate the gradient of $\risk(\theta)$ from samples $\dataset$, and to ensure that an appropriate modification of gradient descent is stable to the resulting estimation error. 

To address the first challenge we observe that the gradient of the population risk at any point $\theta$ is the mean of a multivariate distribution, \ie $\grad \risk (\theta) = E_{z \sim P} \left[ \grad \poploss(\theta;z) \right]$. Accordingly, the problem of gradient estimation can be reduced to a multivariate mean estimation problem, where our goal is to \emph{robustly} estimate the true mean $\nabla \risk(\theta)$ from $n$ samples $\{\nabla \poploss(\theta; z_i)\}_{i = 1}^n$.  
For a given sample-size $n$ and confidence parameter $\delta \in (0,1)$ we define a gradient estimator: 
\begin{definition}
A function $g(\theta; \dataset, \delta)$ is a gradient estimator, if for functions $\alpha$ and $\beta$, with probability at least $1-\delta$, at any fixed $\theta \in \Theta$, the estimator satisfies the following inequality: 
\begin{equation}
\label{eqn:grad_estimator}
\|g(\theta; \dataset, \delta) - \nabla \risk(\theta)\|_2 \leq \alpha(n, \delta)\|\theta - \theta^*\|_2 + \beta(n, \delta).
\end{equation}
\end{definition} 
\noindent In subsequent sections, we will develop conditions under which we can obtain gradient estimators with strong control on the functions $\alpha(n,\delta)$ and $\beta(n,\delta)$ in the Huber and heavy-tailed models. Furthermore, by investigating the stability of gradient descent we will develop sufficient conditions on these functions such that gradient descent with an inaccurate gradient estimator still returns an accurate estimate. 

To minimize $\risk(\theta)$, we replace $\nabla \risk(\theta)$ in equation \eqref{eqn:pgd_poploss} with the gradient estimator $g(\theta; \dataset, \delta)$ and perform projected gradient descent. 
In order to avoid complex statistical dependency issues that can arise in the analysis of gradient descent, for our theoretical results we consider a sample-splitting variant of the algorithm where each iteration is performed on a fresh batch of samples. We summarize the overall robust gradient descent algorithm via gradient estimation in Algorithm~\ref{algo:festimation}. In contrast to $M$-estimation where we use robust estimates of the overall loss function, here we use robust estimates of the gradient, a small shift in perspective, but which has strong statistical and computational consequences: we obtain a computationally practical algorithm, and moreover with strong robustness guarantees via careful statistical analyses of the stability of the resulting biased and inexact gradient descent iterates.

We further assume that the number of gradient iterations $T$ is specified a-priori, and accordingly we define:
\begin{align}
\widetilde{n} = \floor{\frac{n}{T}} ~~~~\text{and}~~~~\widetilde{\delta} = \frac{\delta}{T}.
\end{align} 
\noindent
We discuss methods for selecting $T$, and the impact of sample-splitting in later sections. As confirmed in our experiments (see Section~\ref{sec:experiments}), sample-splitting should be viewed as a device introduced for theoretical convenience which can likely be eliminated via more complex uniform arguments (see for instance the work~\citep{balakrishnan2017statistical}). 

\vskip0.1in
\noindent
It can be seen that the key ingredient in the robust gradient descent Algorithm~\ref{algo:festimation} is a robust estimator of the gradients. Next, we consider the two notions of robustness described in Section \ref{sec:background}, and derive specific gradient estimators for each of the models using the framework described above. Although we derive corollaries of our general results for these two settings of Huber contamination and heavy-tailed models, we emphasize that our class of estimators are more general and are not restricted to these two notions of robustness. 

\begin{figure}
    \begin{minipage}{\textwidth}
      \begin{algorithm}[H]
\caption{Robust Gradient Descent}
\label{algo:festimation}
        \begin{algorithmic}
\Function{RGD (Gradient Estimator $g(\cdot)$, Data $\{z_1,\ldots,z_n\},$ Step Size $\eta$, Number of Iterations $T$, Confidence $\delta$)}{}
\State Split samples into $T$ subsets $\{\calZ_t\}_{t=1}^T $ of size $\widetilde{n}$.
\For{$t=0$ to $T-1$} 
\State $\theta^{t+1} = \argmin_{\theta \in \Theta} 
\norm{\theta - \left(\theta^{t} - \eta \, g(\theta^t; \mathcal{Z}_t, \widetilde{\delta}) \right)}{2}^2.$
\EndFor
\EndFunction
\end{algorithmic}
      \end{algorithm}
    \end{minipage}
  \end{figure} 
  
\subsection{Gradient Estimation in Huber's $\epsilon$-contamination model}


There has been a flurry of recent interest~\citep{diakonikolas2016robust,lai2016agnostic,du2017computationally,li17robust,charikar2016learning} in designing mean estimators which, under the Huber contamination model, can robustly estimate the mean of a random vector. While some of these results are focused on the case where the uncorrupted distribution is Gaussian, or isotropic,  we are more interested in robust mean oracles for more general distributions. \citet{lai2016agnostic} proposed a robust mean estimator for general distributions, satisfying weak moment assumptions, and we leverage the existence of such an estimator to design a \emph{Huber gradient estimator} $g(\theta; \dataset, \delta)$ which works in the Huber contamination model.

The estimator builds upon the fact that with a single dimension, it is relatively easy to estimate the gradient robustly. In higher dimensions, the crucial insight of \citet{lai2016agnostic} is that the effect of the contamination distribution $Q$ on the mean of uncontaminated distribution $P$ is effectively one-dimensional provided we can accurately estimate the direction along which the mean is shifted. In our context, if we can compute the gradient shift direction, i.e. the direction of the difference between the sample (corrupted) mean gradient and the true (population) gradient, then the true gradient can be estimated by using a robust 1D-mean algorithm along the gradient-shift direction and a non-robust sample-gradient in the orthogonal direction since the contamination has no effect on the gradient in this orthogonal direction. In order to identify this gradient shift direction, we follow \citet{lai2016agnostic} and use a recursive Singular Value Decomposition (SVD) based algorithm. In each stage of the recursion, we first remove gross-outliers via a truncation algorithm (described in more detail in the Appendix, and termed $ \text{{\sc HuberOutlierGradientTruncation}}$ in Algorithm~\ref{algo:huber_mean_estimation}). We subsequently identify two subspaces using an SVD -- a clean subspace where the contamination has a small effect on the mean and another subspace where the contamination has a potentially larger effect. We use a simple sample-mean estimator in the clean subspace and recurse our computation on the other subspace. Building on the work of \citet{lai2016agnostic}, in Lemma~\ref{lem:lai2016agnostic} and Appendix~\ref{sec:proof:lem:lai2016agnostic} we provide a careful non-asymptotic analysis of this gradient estimator. 

Algorithm~\ref{algo:huber_mean_estimation} presents the overall \emph{Huber gradient estimator} $g(\theta; \dataset, \delta)$.



\begin{algorithm}[H]
\centering
\caption{Huber Gradient Estimator}
        \begin{algorithmic}
\Function{HuberGradientEstimator(Sample Gradients $S = \{\nabla \poploss(\theta;z_i)\}_{i = 1}^n$, Corruption Level $\epsilon$, Dimension $p$, $\delta$)}{}
\State $\widetilde{S} = \text{{\sc HuberOutlierGradientTruncation}}(S,\epsilon,p,\delta)$.
\If{p=1} 
\State \Return $\text{mean}(\widetilde{S})$
\Else
\State Let $\Sigma_{\widetilde{S}}$ be the covariance matrix of $\widetilde{S}$. 

\State Let $V$ be the span of the top $p/2$ principal components of $\Sigma_{\widetilde{S}}$ and $W$ be its complement.

\State Set $S_1 := P_V (\widetilde{S})$ where $P_V$ is the projection operation on to $V$.

\State Let  $\widehat{\mu}_V := \text{\sc{HuberGradientEstimator}}(S_1,\epsilon,p/2,\delta)$. 

\State Let $\widehat{\mu}_W := \text{mean}(P_W \widetilde{S})$.

\State Let $\widehat{\mu} \in \mathbb{R}^p$ be such that $P_V(\widehat{\mu}) = \widehat{\mu}_V$, and $P_W(\widehat{\mu}) = \widehat{\mu}_W$.

\Return $\widehat{\mu}$.
\EndIf

\EndFunction
\end{algorithmic}
\label{algo:huber_mean_estimation}
\end{algorithm}
\subsection{Gradient Estimation in the Heavy-Tailed model}
To design gradient estimators for the heavy-tailed model, we leverage recent work on designing robust \emph{mean} estimators in this setting. These robust mean estimators build on the classical work of \citet{alon96,nemirovski1983problem} and \citet{jerrum86} on the so-called median-of-means estimator. For the problem of one-dimensional mean estimation, \citet{lerasle2011robust, catoni2012challenging} propose robust mean estimators that achieve exponential concentration around the true mean for any distribution with bounded second moment. In this work we require mean estimators for multivariate distributions. Several recent works (\cite{hsu2016loss,minsker2015geometric,lugosi2017sub}) extend the median-of-means estimator of to general metric spaces. In this paper we use the geometric median-of-means estimator (\gmom), which was originally proposed and analyzed by \citet{minsker2015geometric}, to design the gradient estimator $g(\theta; \dataset, \delta)$. 

The basic idea behind the \gmom~estimator is to first split the samples into non-overlapping
subsamples and estimate the sample mean of each of the subsamples. Then the \gmom~estimator is given by the median-of-means of the subsamples.  
Formally, let $\{x_i \dots x_n \}\in \mathbb{R}$ be $n$ i.i.d random variables sampled from a distribution $P$.  Then the \gmom~estimator for estimating the mean of $P$ can be described as follows.  Partition the $n$ samples into $b$ blocks $B_1, \dots B_b$ each of size $\floor{n/b}$. Let $\{\widehat{\mu}_1, \dots, \widehat{\mu}_b\}$ be the sample means in each block, where $\widehat{\mu}_i = \frac{1}{|B_i|}\sum_{x_j \in B_i}x_j$.  Then the \gmom~estimator is given by $\text{median}\{\widehat{\mu}_1, \dots \widehat{\mu}_b\}$. In high dimensions where different notions of the median have been considered \citet{minsker2015geometric} uses geometric median:
\[
\widehat{\mu} = \argmin_{\mu} \sum_{i = 1}^b\|\mu - \widehat{\mu}_i\|_2.
\]
Algorithm~\ref{algo:heavy_tailed_estimation} presents the gradient estimator $g(\theta; \dataset, \delta)$ obtained using \gmom~as the mean estimator.
\begin{algorithm}[H]
\centering
\caption{Heavy Tailed Gradient Estimator}
\label{algo:heavy_tailed_estimation}
        \begin{algorithmic}
\Function{HeavyTailedGradientEstimator(Sample Gradients $S = \{\nabla \poploss(\theta;z_i)\}_{i = 1}^n$, $\delta$)}{}
\State Define number of buckets $b = 1 + \lfloor 3.5\log{1/\delta}\rfloor$.

\State Partition $S$ into $b$ blocks $B_1, \dots B_b$ each of size $\floor{n/b}$.

\For{$i = 1 \dots n$}
 \State $\widehat{\mu}_i =  \frac{1}{|B_i|}\displaystyle\sum_{s \in B_i}s$.
\EndFor
\State Let $\widehat{\mu} = \displaystyle \argmin_{\mu} \sum_{i = 1}^b\|\mu - \widehat{\mu}_i\|_2$.

\Return $\widehat{\mu}$.
\EndFunction
\end{algorithmic}
\end{algorithm}

\subsection{Choice of Hyper-Parameters}

In this section, we discuss how to tune the hyperparameters for our algorithms. In particular, note that the gradient estimators described in Algorithms~\ref{algo:huber_mean_estimation}, \ref{algo:heavy_tailed_estimation} depend on corruption level $\epsilon$, and on confidence  $\delta$, which are not known in advance. 

Since the standard hyper-parameter selection techniques such as cross validation, hold-out validation, pick hyper-parameters that minimize the empirical mean of the loss on validation data, they can't be used in the presence of outliers in the data. One criteria we could use in such cases is to choose hyper-parameters that minimize a robust estimate of the population risk on validation data. However, we can't use any of the existing robust mean estimators to estimate the population risk because they themselves depend on hyper-parameters such as corruption level $\epsilon$.

\paragraph{Huber Contamination.} We now consider the Huber contamination model and propose a heuristic based on Scheffe's tournament estimator~\citep{yatracos1985,devroye2012combinatorial} for hyper-parameter selection. In particular, we consider the gradient descent procedure described in Algorithm~\ref{algo:huber_mean_estimation} and explain our technique for choosing $\epsilon, \delta$ using hold out cross validation.  Note that our goal is to pick hyper-parameters that minimize the population risk $\mathcal{R}(\theta)$. Under the assumption of strong convexity of $\mathcal{R}(\theta)$, this is equivalent to picking hyper-parameters that minimize the parameter error $\|\theta - \theta^*\|_2$. 

We begin with the problem of density estimation, where we are given $n$ i.i.d samples $\{z_i\}_{i = 1}^n$ from $(1-\epsilon) P_{\theta^*} + \epsilon Q$, where $P_{\theta^*}$  belongs to the class of distributions $\{P_{\theta}: \theta \in \Theta\}$,  and $Q$ is an arbitrary distribution. Our goal is to estimate $\theta^* \in \Theta$ from the samples.
Suppose $\left\{ P_{\eparam_1},P_{\eparam_2},\ldots, P_{\eparam_m} \right\}$ are the candidate solutions returned by Algorithm~\ref{algo:huber_mean_estimation} for different settings of $\epsilon, \delta$. Consider the following pairwise test function:
\begin{align} 
\phi_{jk} = \mathbb{I} \left\{ \abs{ \frac{1}{n_{val}} \sum \limits_{i=1}^{n_{val}} \indic{p_{\eparam_j}(z'_i) > {p_{\eparam_k}(z'_i)}} - P_{\eparam_j} \paren{p_{\eparam_j}(z) > p_{\eparam_k}(z) } } >  \right. \nonumber \\ \left. \abs{ \frac{1}{n_{val}} \sum \limits_{i=1}^{n_{val}} \indic{p_{\eparam_j}(z'_i) > {p_{\eparam_k}(z'_i)}} - P_{\eparam_k} \paren{p_{\eparam_j}(z) > p_{\eparam_k}(z) } } \right\},
\end{align}
where $p_{\eparam_j}$ is the probability density of $P_{\eparam_j}$, $\{z'_i\}_{i = 1}^{n_{val}}$ is the validation data and $\indic{.}$ is the indicator function.
When $\phi_{jk}=1$, then $\eparam_k$ is favored over $\eparam_j$ and when $\phi_{jk} =0$, then $\eparam_j$ is favored over $\eparam_k$. Then, the final estimate $P_\winner$ is given by
\[ j^* = \argmin \limits_{j \in [m]} \sum \limits_{ \substack{{k=1} \\ { k \neq j} } }^m \phi_{jk}   \]
Following \citep{devroye2012combinatorial}, it can be shown that the above procedure picks a $j^*$ such that $P_\winner$ is close to $P_{\theta^*}$ in TV metric. For distributions $\{P_{\theta}: \theta \in \Theta\}$ whose TV metric is roughly equivalent to the parameter error, the above procedure results in hyper-parameters which minimize the parameter error.
This procedure can be extended to supervised learning problems such as regression and classification.

\paragraph{Heavy-Tailed Distribution.} For the Heavy-Tailed setting we experimented with (a) empirical mean of the risk on validation data: $\frac{1}{n_{val}}\sum_{i=1}^{n_{val}} \poploss(\theta; z'_i)$ where $\{z'_i\}_{i=1}^{n_{val}}$ is the validation data, which does not require any tuning parameters, as well as (b) median of means based mean of the risk on validation data, for various confidence levels $\delta$. However, both the techniques in the context of hold-out validation resulted in models with similar performance. So, in our experiments with heavy tailed distributions, we present results obtained using the empirical risk as in (a) on hold-out validation data.

\section{Experiments}\label{sec:experiments}
In this section we demonstrate our proposed methods for the Huber contamination and heavy-tailed models, on a variety of simulated and real data examples. 

\subsection{Huber Contamination}
\label{sec:exp_huber}
We first consider the Huber contamination model and demonstrate the practical utility of gradient-descent based robust estimator described in Algorithms~\ref{algo:festimation} and \ref{algo:huber_mean_estimation}.
\subsubsection{Synthetic Experiments: Linear Regression}
\begin{figure}[!ht]
\centering
      \subfigure[\label{fig:regression_p} \small{Parameter error vs $p$ for $\epsilon = 0.1$} ]{\includegraphics[width=0.32\textwidth]{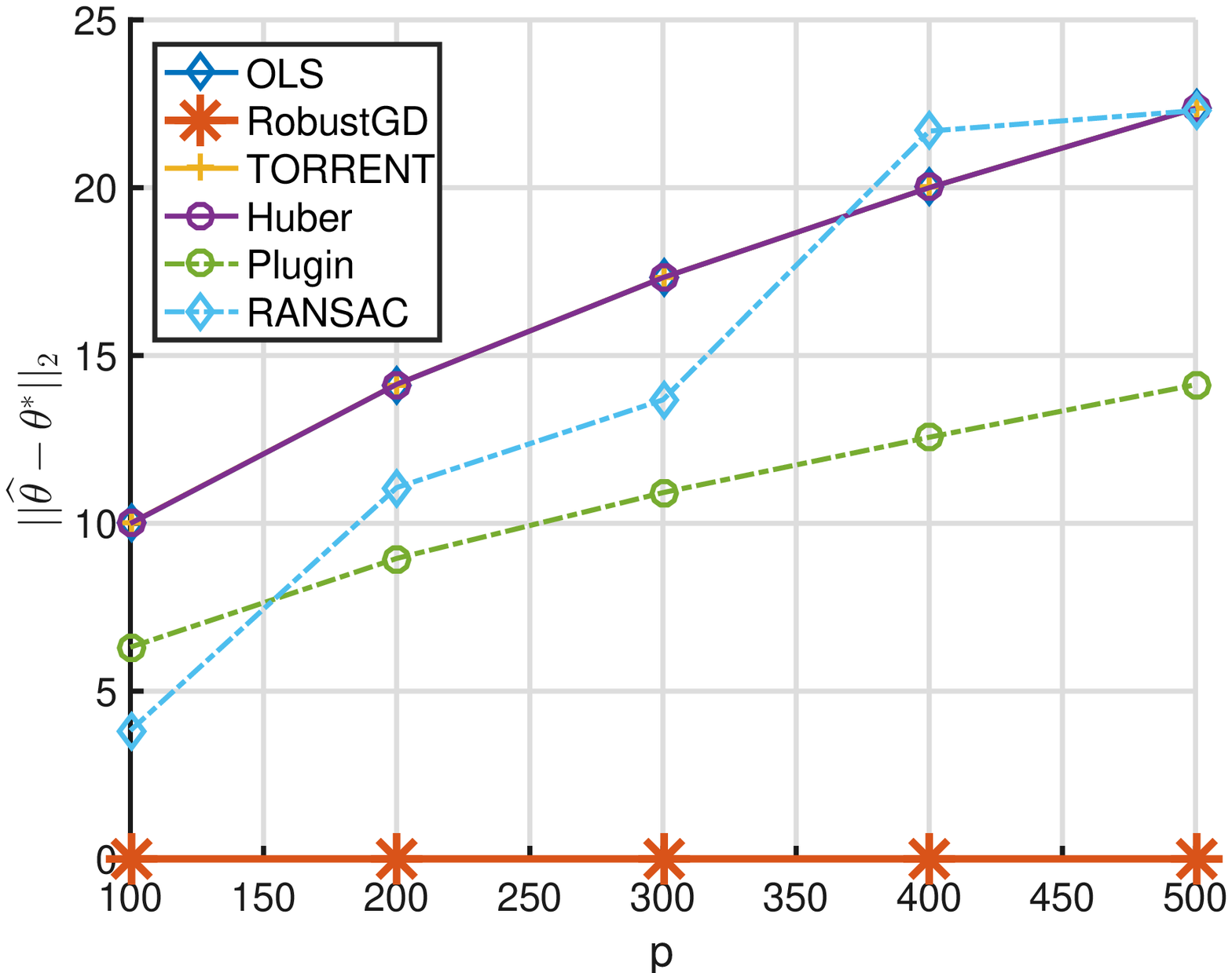}}   
      \subfigure[\label{fig:regression_iter_LRV_p} \small{Parameter error vs $\epsilon$}]{\includegraphics[width=0.32\textwidth]{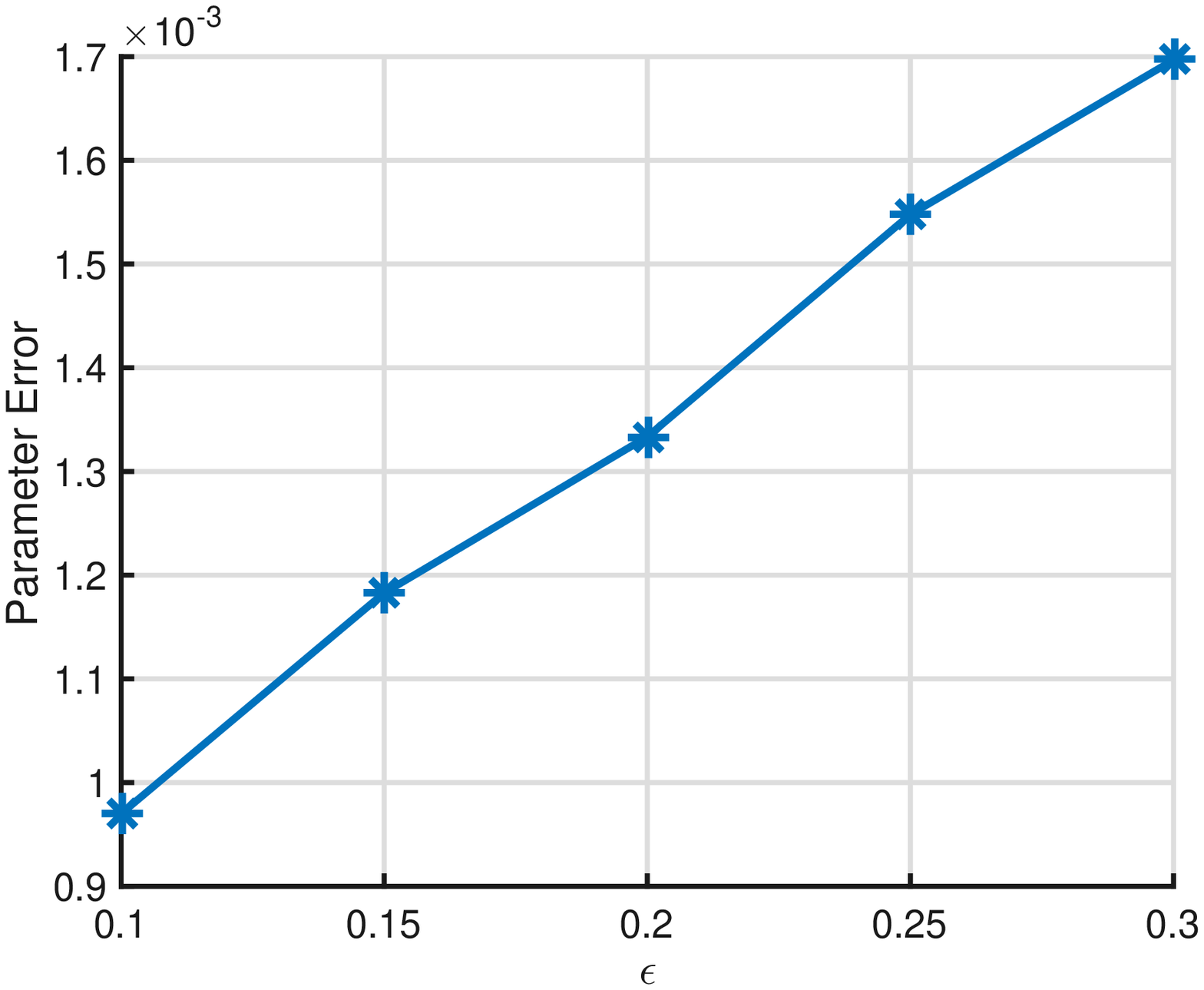}}
       \subfigure[\label{fig:regression_iter_LRV_GD}  \small{$\log(\|\theta^t - \theta^*\|_2)$ vs $t$ for different $\epsilon$.}]{\includegraphics[width=0.32\textwidth]{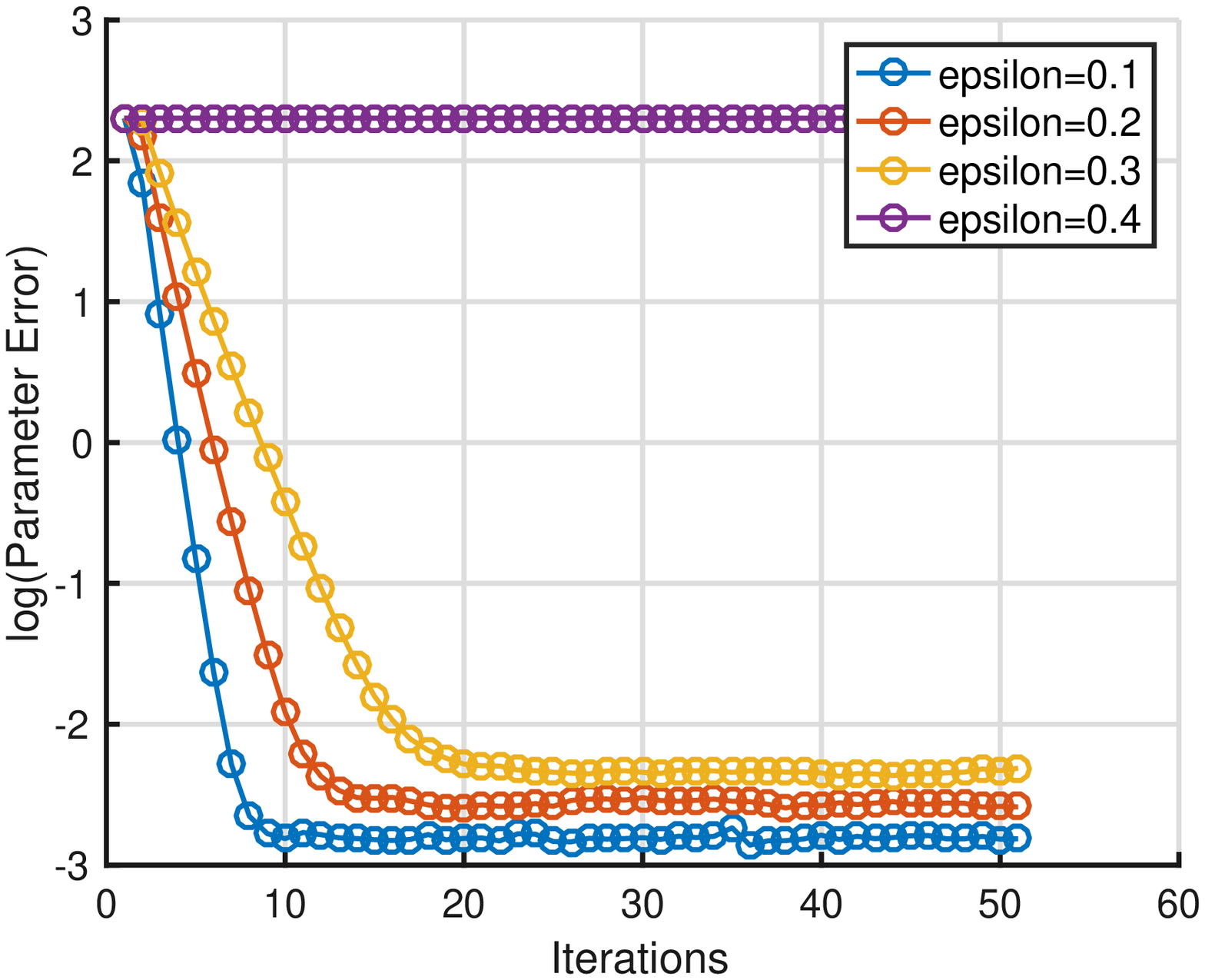}}
       
       \subfigure[\label{fig:regression_val_LRV_epsilon} \small{Hyperparameter Tuning vs $p$}]{\includegraphics[width=0.32\textwidth]{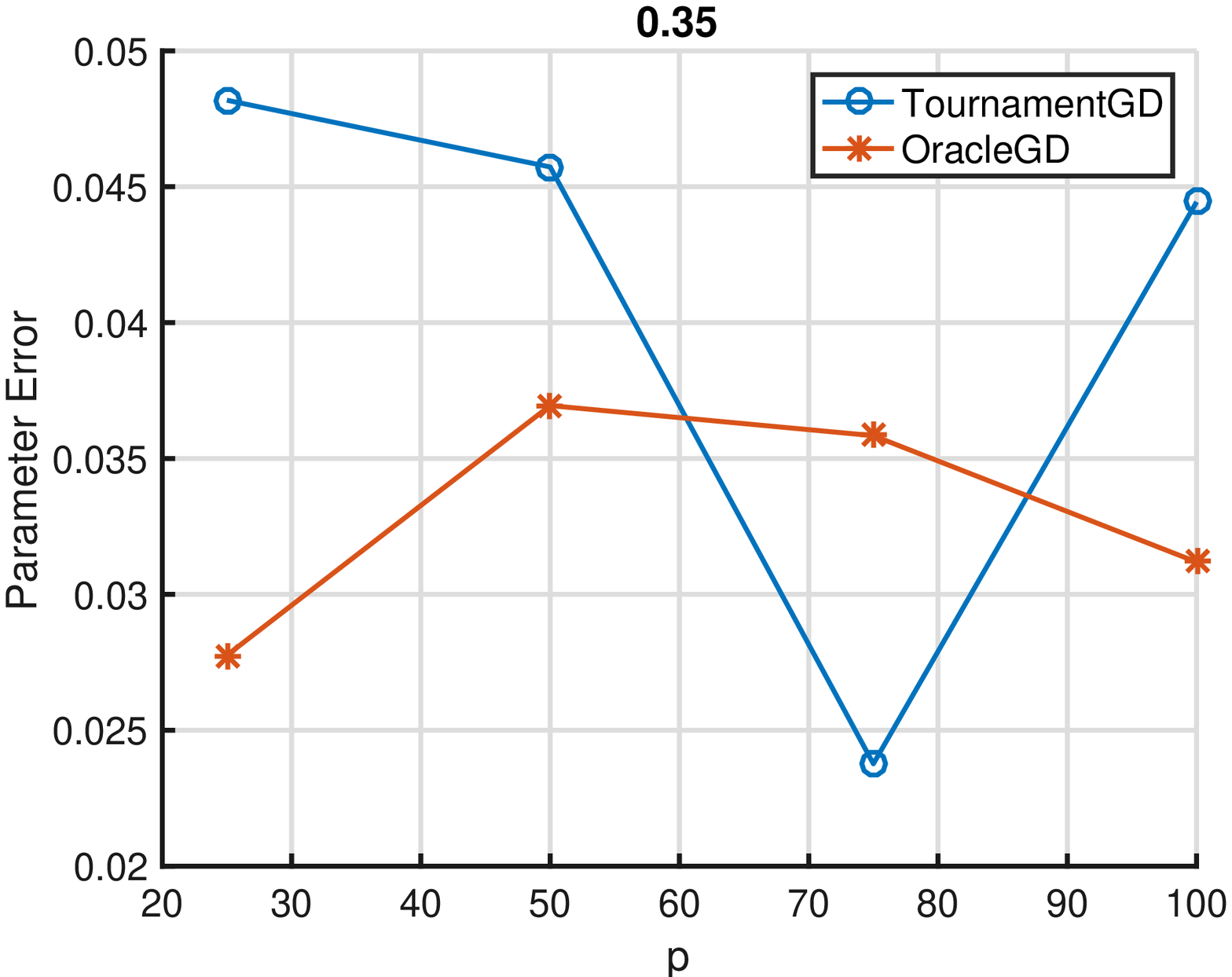}}
       \subfigure[\label{fig:regression_val_LRV_p}  \small{Hyperparameter Tuning vs $\epsilon$}]{\includegraphics[width=0.32\textwidth]{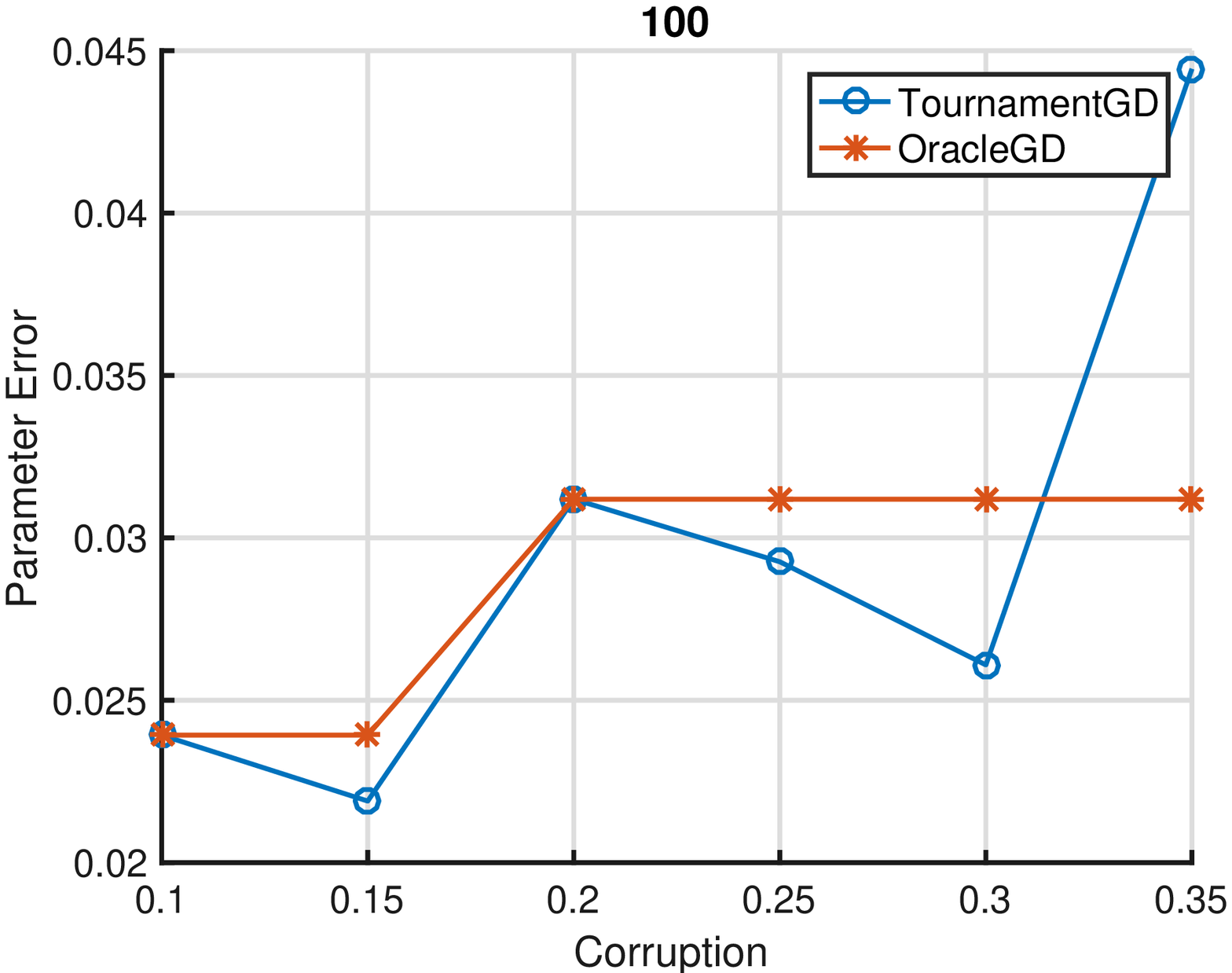}}
       
\caption{Robust Linear Regression.}
\label{fig:linear_regression}
\end{figure}
Recall the linear regression model described in~\eqref{eq:linregress} where we observe paired samples $\{(x_i, y_i)\}_{i = 1}^n$. We assume that the $(x,y)$ pairs sampled from the true distribution $P$ are linked via a linear model:
\begin{align*}
y = \inprod{x}{\true} + w.
\end{align*}
We now describe the experiment setup, the data model and present the results.
\paragraph{Setup.} We fix the contamination level $\epsilon = 0.1$ and $\sigma^2 = 0.1$. Next, we generate $(1 - \epsilon)n$ \emph{clean} covariates from a multivariate Gaussian $x \sim \calN(0,\calI_p)$, and we generate the corresponding clean responses using $y = \iprod{x}{\tparam}+w$ where $\tparam = [1,\ldots,1]^T$ and $w \sim \calN(0,\sigma^2)$. We simulate an outlier distribution by drawing the covariates from $\calN(0,p^2\calI_p)$, and setting the responses  to $0$. The total number of samples is set to be $(10\frac{p}{\epsilon^2})$. 
We note that the sample size we choose increases with the dimension. This scaling is used to ensure that the statistical (minimax) error, in the absence of any contamination, is roughly 0.001. An optimally robust method should have error close to 0.1 (roughly equal to corruption level), which ours does (see Figure~\ref{fig:linear_regression}).

\paragraph{Metric.} We measure the parameter error in $\ell_2$-norm. 
We also study the convergence properties of our proposed method, for different contamination levels $\epsilon$. We use code provided by \citet{lai2016agnostic} 
to implement our gradient estimator.

\paragraph{Baselines.} As our baselines, we use OLS, TORRENT~\cite{bhatia2015robust}, the Huber-loss M-estimator, RANSAC and a plugin estimator (detailed further in Section~{sec:linregtheory}, and which in a nutshell robustly estimates the sufficient statistics required for the OLS estimator). TORRENT is an iterative hard-thresholding based alternating minimization algorithm, where in one step, it calculates an active set of examples by keeping only $(1-\epsilon)n$ samples which have the smallest absolute values of residual $r = y - \iprod{x}{\theta^t}$, and in the other step it updates the current estimates by solving OLS on the active set. \citet{bhatia2015robust} have shown the superiority of TORRENT over a variety of other convex-penalty based outlier techniques, hence, we do not compare against those methods. The plugin estimator is implemented using 
Algorithm~\ref{algo:huber_mean_estimation} to estimate both the mean vector $\frac{1}{n} \sum_{i=1}^n y_i x_i$ and the covariance matrix $\frac{1}{n} \sum_{i=1}^n x_i x_i^T$, which are the required sufficient statistics for the OLS estimator.

\paragraph{Results.} We summarize our main findings here.
\begin{itemize}[leftmargin=*]
\item \textbf{Error vs dimension $p$:} All estimators except our proposed algorithm perform poorly with increasing dimension, as shown in Figure~\ref{fig:regression_p}. Note that the TORRENT algorithm has strong guarantees when 
only the response $y$ is corrupted but performs poorly in the Huber contamination model where both $x$ and $y$ may be contaminated. 
We find that the error for the robust plugin estimator increases with dimension. We investigate this theoretically in Section~\ref{sec:linregtheory}, where we find that the error of the plugin estimator grows with the norm of $\tparam$. In our experiments, we choose $\norm{\tparam}{2} = \sqrt{p}$, and thus Figure~\ref{fig:regression_p} corroborates Corollary~\ref{cor:linregress_plugin} in Section~\ref{sec:linregtheory}.
\item \textbf{Error vs $\epsilon$:} In Figure~\ref{fig:regression_iter_LRV_p} we find that the parameter error $\|\widehat{\theta} - \theta^*\|_2$ increases linearly with the contamination rate $\epsilon$ and we study this further in Section~\ref{sec:linregtheory}.
\item \textbf{Error vs iteration $t$:} Finally, Figure~\ref{fig:regression_iter_LRV_GD} shows that the convergence rate decreases with increasing contamination $\epsilon$ and after $\epsilon$ is high enough, the algorithm remains stuck at $\theta_0$, corroborating Lemma~\ref{lem:lowerBoundGradient} (in the Appendix). 
\item \textbf{Hyper-parameter Tuning:} In Figures~\ref{fig:regression_val_LRV_epsilon} and~\ref{fig:regression_val_LRV_p}, we find the final solution chosen by our tournament based heuristic for hyper-parameter selection (TournamentGD) has roughly the same performance as the algorithm which knows the true value of $\epsilon$ (OracleGD). In particular, our final error does not scale with $p$. 
\end{itemize}
Next, we study the performance of our proposed method in the context of classification. 
\subsubsection{Synthetic Experiments: Logistic Regression}
%
\begin{figure}[!ht]
\centering
      \subfigure[\label{fig:logistic_p} \small{0-1 Error vs $p$ at $\epsilon = 0.1$} ]{\includegraphics[width=0.32\textwidth]{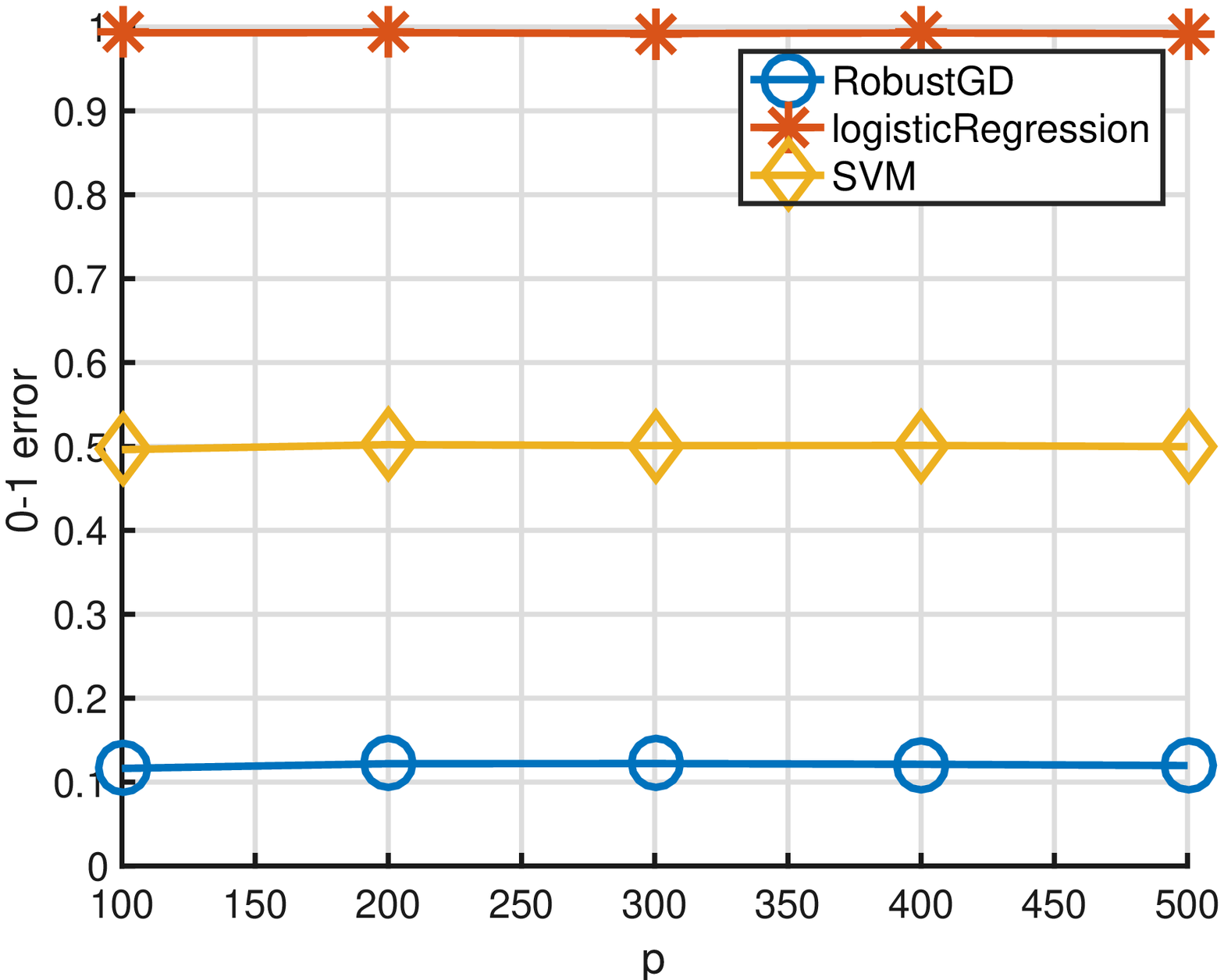}}   
      \subfigure[\label{fig:logistic_iter_LRV_p} \small{0-1 error vs $\epsilon$}]{\includegraphics[width=0.32\textwidth]{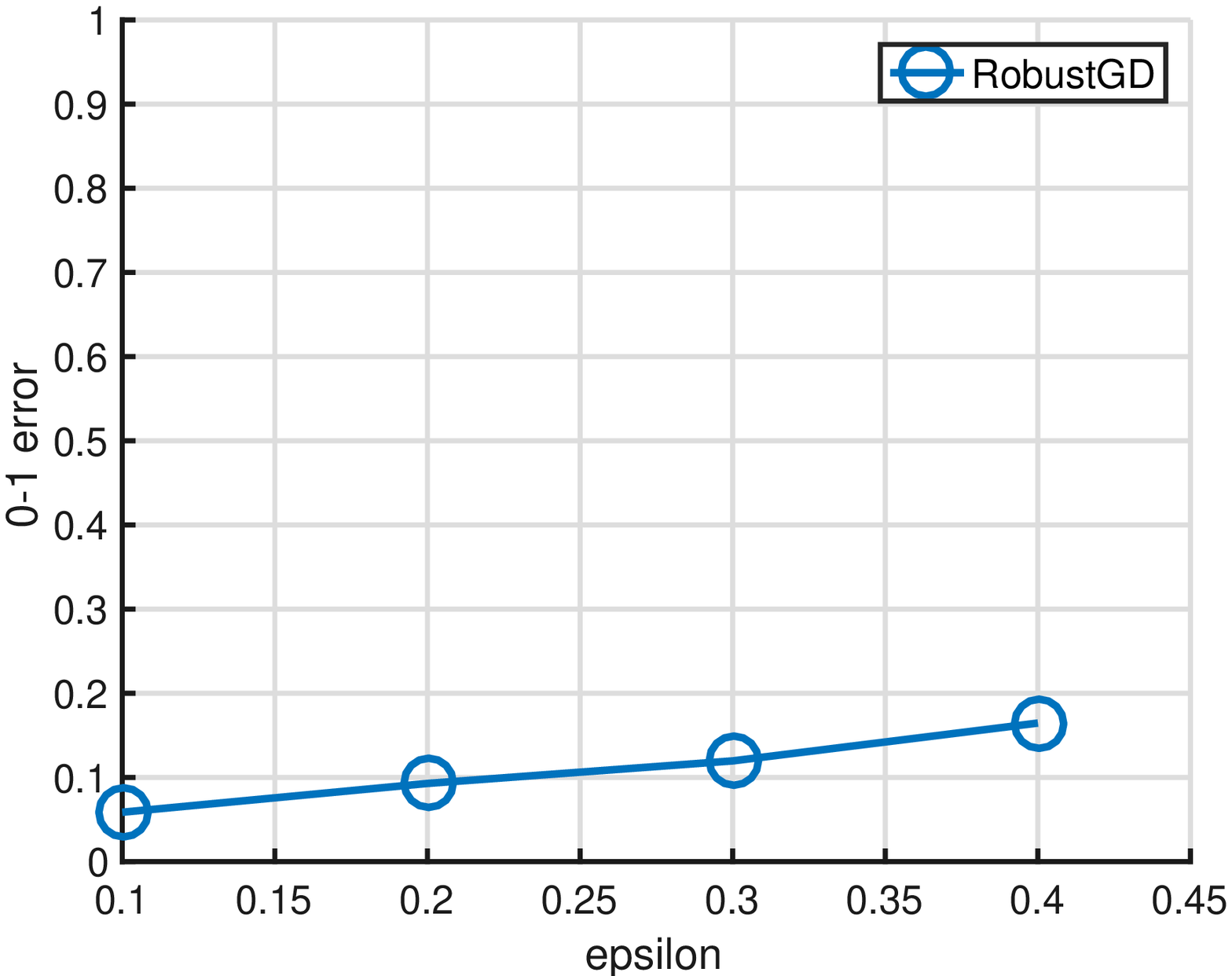}}
       \subfigure[\label{fig:logistic_iter_LRV_GD}  \small{$\log(\|\theta^t - \theta^*\|_2)$ vs $t$ for different $\epsilon$}]{\includegraphics[width=0.32\textwidth]{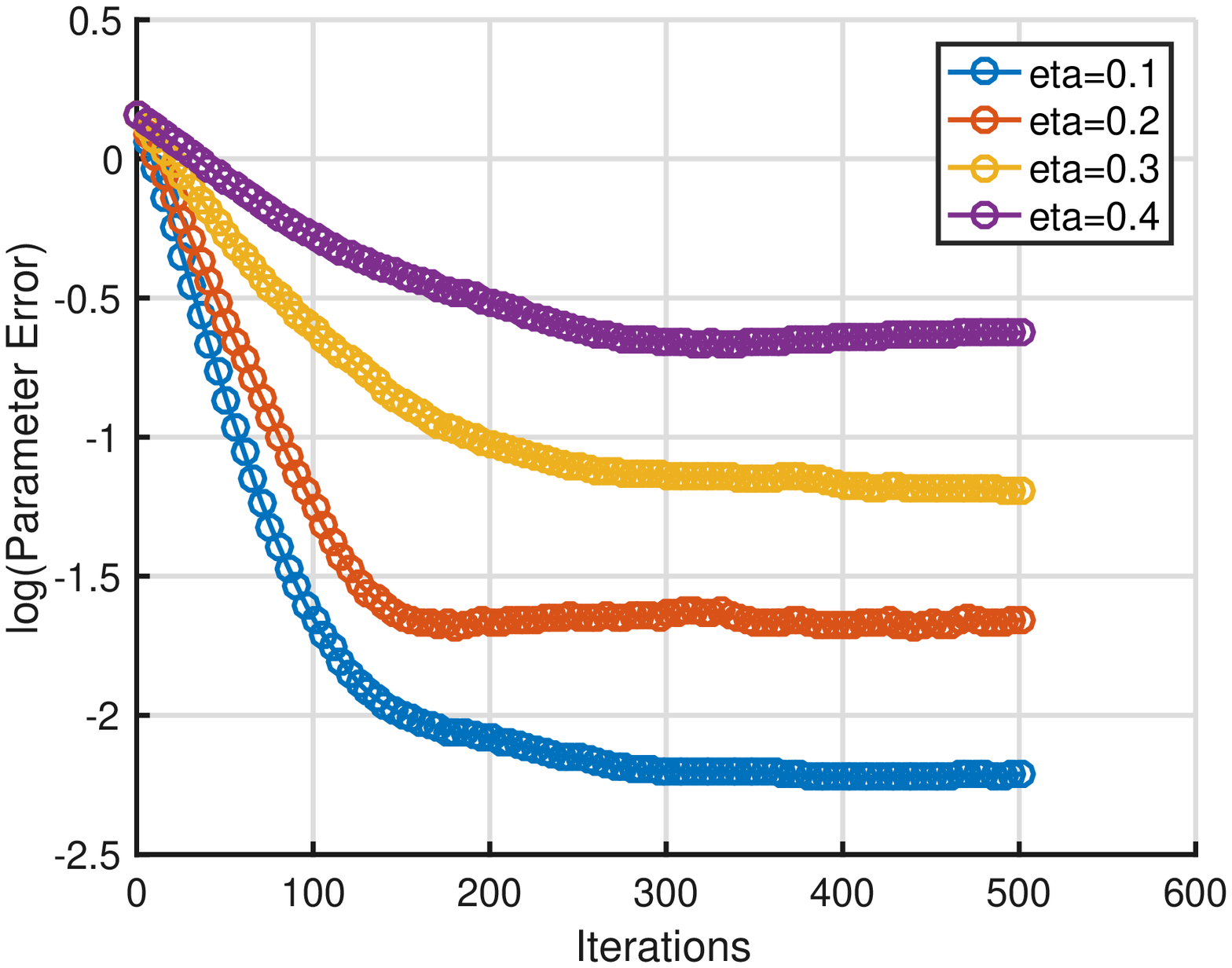}}
\caption{Robust Logistic Regression.}
\end{figure}
\paragraph{Setup.} We simulate a linearly separable classification problem, where the \emph{clean} covariates are sampled from $\calN(0,\calI_p)$, the corresponding clean responses are computed as $y = \text{sign}(\iprod{x}{\tparam})$ where $\tparam = [1/\sqrt{p},\ldots,1/\sqrt{p}]^T$. We simulate the outlier distribution by adding asymmetric noise, \ie we flip the labels of one class, and increase the variance of the corresponding covariates by multiplying them by $p^2$. The total number of samples are set to be $(10\frac{p}{\epsilon^2})$.
\paragraph{Metric.} We measure the 0-1 classification error on a held-out (clean) test set. We study how the 0-1 error changes with $p$ and $\epsilon$ and the parameter estimation error of our proposed method for different contamination levels $\epsilon$. 
\paragraph{Baselines.} We use the logistic regression MLE and the linear Support Vector Machine (SVM) as our baselines.
\paragraph{Results.} We summarize our main findings below:
\begin{itemize}[leftmargin=*]
\item \textbf{0/1 Error vs dimension $p$:} In Figure~\ref{fig:logistic_p} we observe that both the SVM and logistic regression MLE perform poorly with increasing dimension. The logistic regression MLE 
completely flips the labels and has a 0-1 error close to 1, 
whereas the linear SVM outputs a random hyperplane classifier that flips the label for roughly 
half of the dataset.

\item \textbf{0/1 Error vs $\epsilon$ and $t$:} Figures~\ref{fig:logistic_iter_LRV_p} and~\ref{fig:logistic_iter_LRV_GD} show qualitatively similar results to the linear regression setting, i.e. that the error of our proposed estimator degrades gracefully (and grows linearly) with the contamination level $\epsilon$ and that the gradient descent iterates converge linearly. 

\end{itemize}

\subsubsection{Robust Face Reconstruction}

{\tiny\begin{table}
\centering
\caption{Fitting to original image error.}
\label{table:rmse_res}
\begin{tabular}{@{}llllll@{}}
\toprule
          & Best~Possible & Proposed & TORRENT & OLS  & SCRRR \\ \midrule
Mean RMSE & 0.05         & 0.09      & 0.175   & 0.21 & 0.13 \\ \bottomrule
\end{tabular}
\end{table}}
\paragraph{Setup.} In this experiment, we show the efficacy of our algorithm by attempting to reconstruct face images that have been corrupted with heavy occlusion, where the occluding pixels play the role the outliers. We use the data from the Cropped Yale Dataset \cite{lee2005acquiring} . The dataset contains 38 subjects, and each image has $192 \times 168$ pixels. Following the methodology of \citet{wang2015self}, we choose 8 face images per subject, taken under mild illumination conditions and computed an eigenface set with 20 eigenfaces. Then given a new corrupted face image of a subject, the goal is to get the best reconstruction/approximation of the true face. To remove scaling effects, we normalized all images to $[0, 1]$ range. One image per person was used to test reconstruction. Occlusions were simulated by randomly placing $10$ blocks of size $30\times30$. We repeated this 10 times for each test image. In this setting, each image of a subject corresponds to a different task; \ie $X$ is a common fixed eigenface basis, ${y}$ is an observed(occluded) image, and the goal is to reconstruct(de-noise) the given image using the given basis. Note that in this example, we use a linear regression model as the uncontaminated statistical model, which is almost certainly not an exact match for the unknown ground truth distribution. Despite this model misspecification, as our results show, that robust mean based gradient algorithms do well.


\paragraph{Metric.} We use Root Mean Square Error (RMSE) between the original and reconstructed image to evaluate the performance of the algorithms.  We also compute the best possible reconstruction of the original face image by using the 20 eigenfaces.

\paragraph{Methods.} \citet{wang2015self} implemented popular robust estimators such as RANSAC, Huber Loss \etc and showed their poor performance. \citet{wang2015self} then  proposed an alternate robust regression algorithm called Self Scaled Regularized Robust Regression(SCRRR). Hence, use TORRENT, SCRRR and OLS as baselines. We also compare against the best possible RMSE obtained by reconstructing the un-occluded image using the eigenfaces.

\paragraph{Results.} Table~\ref{table:rmse_res} shows that the mean RMSE is best for our proposed gradient descent based method and that the recovered images are in most cases closer to the un-occluded original image.~(Figure~\ref{fig:eigen}). Figure~\ref{fig:eigen3} shows a case when none of the methods succeed in reconstruction.

\begin{figure}[H]
\label{fig:eigen}
\centering
      \subfigure[\label{fig:eigen1} Successful Reconstruction]{\includegraphics[width=0.32\textwidth]{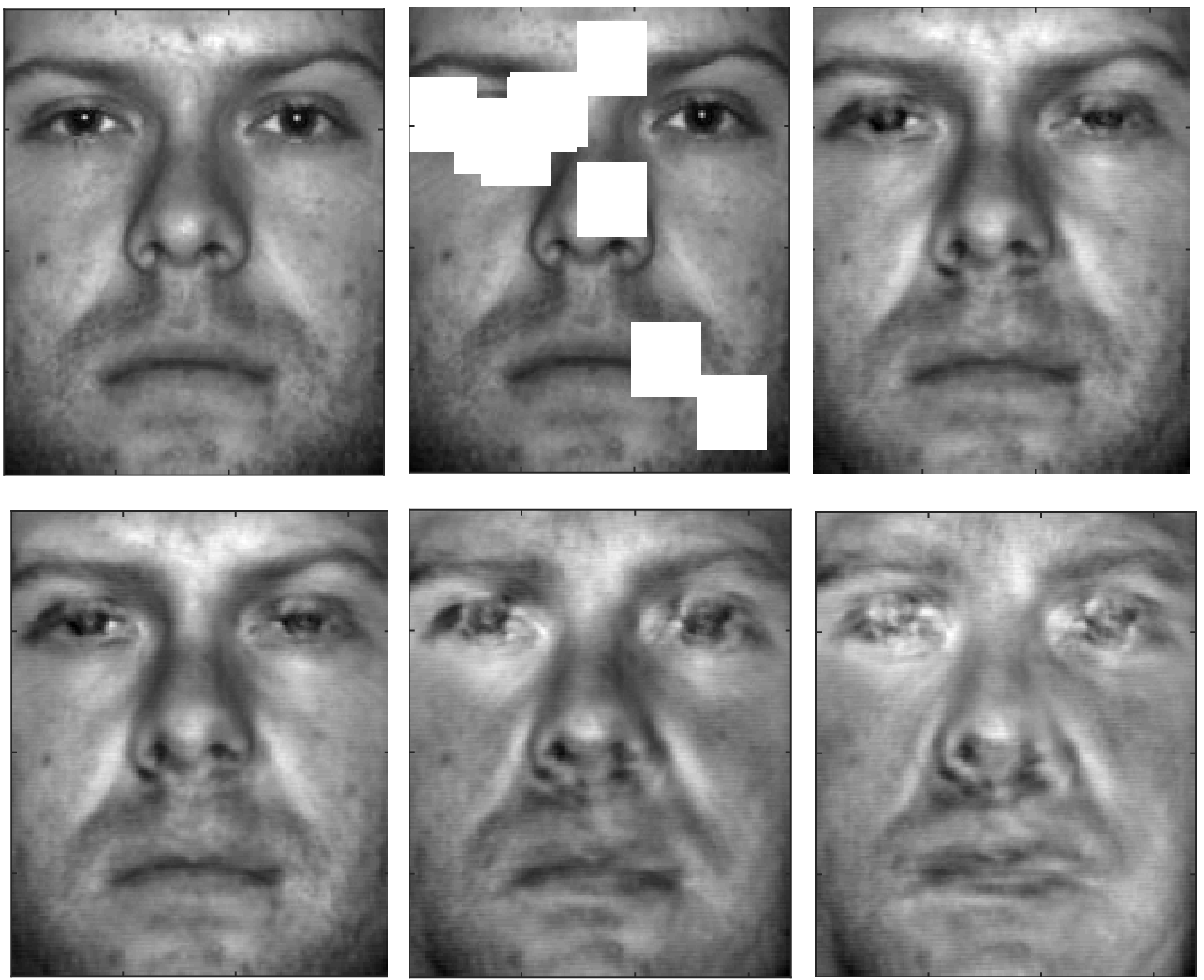}}   
      \hspace{1pt}
      \subfigure[\label{fig:eigen2} Successful Reconstruction]{\includegraphics[width=0.32\textwidth]{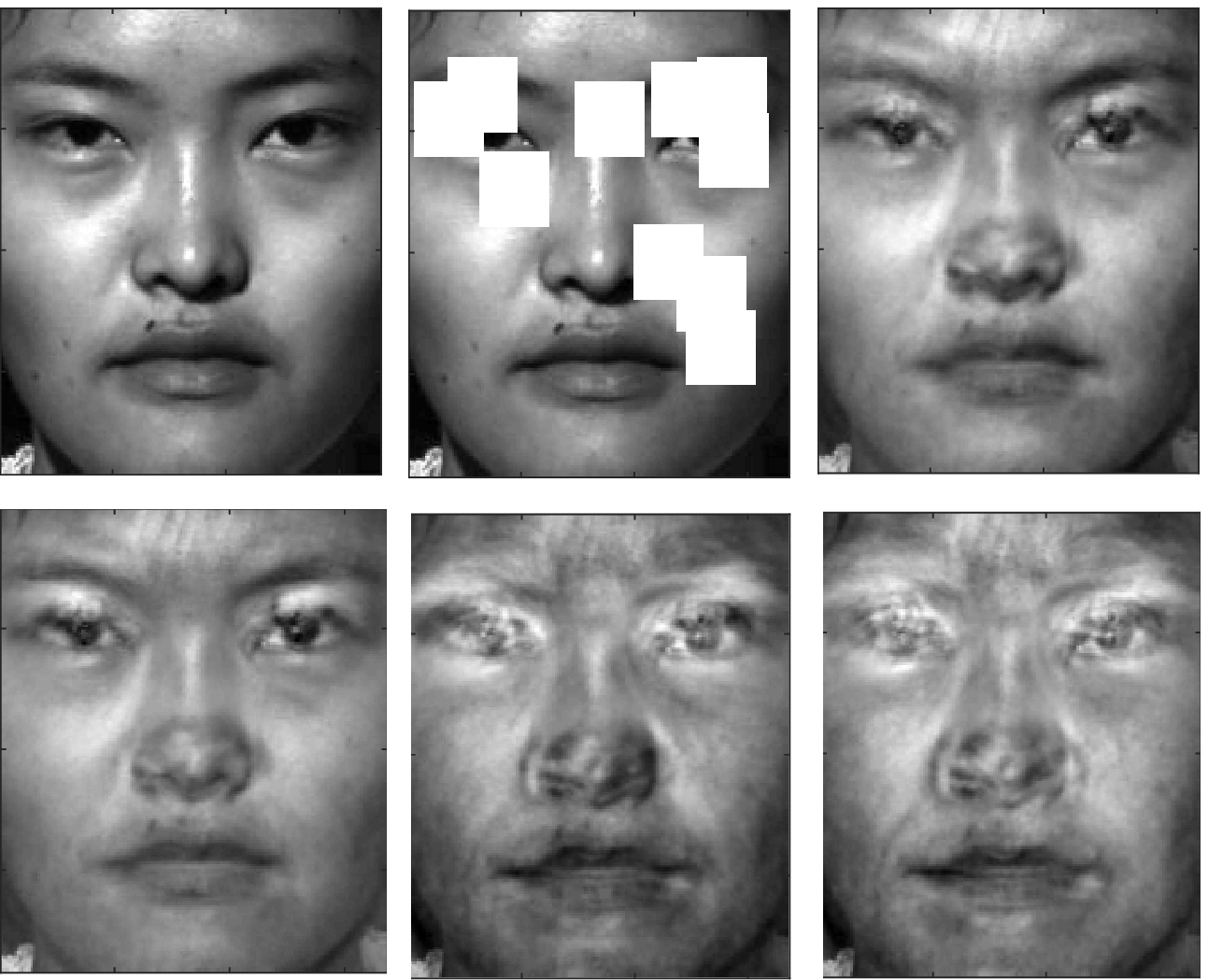}}
            \hspace{1pt}
       \subfigure[\label{fig:eigen3}  Failed Reconstruction]{\includegraphics[width=0.32\textwidth]{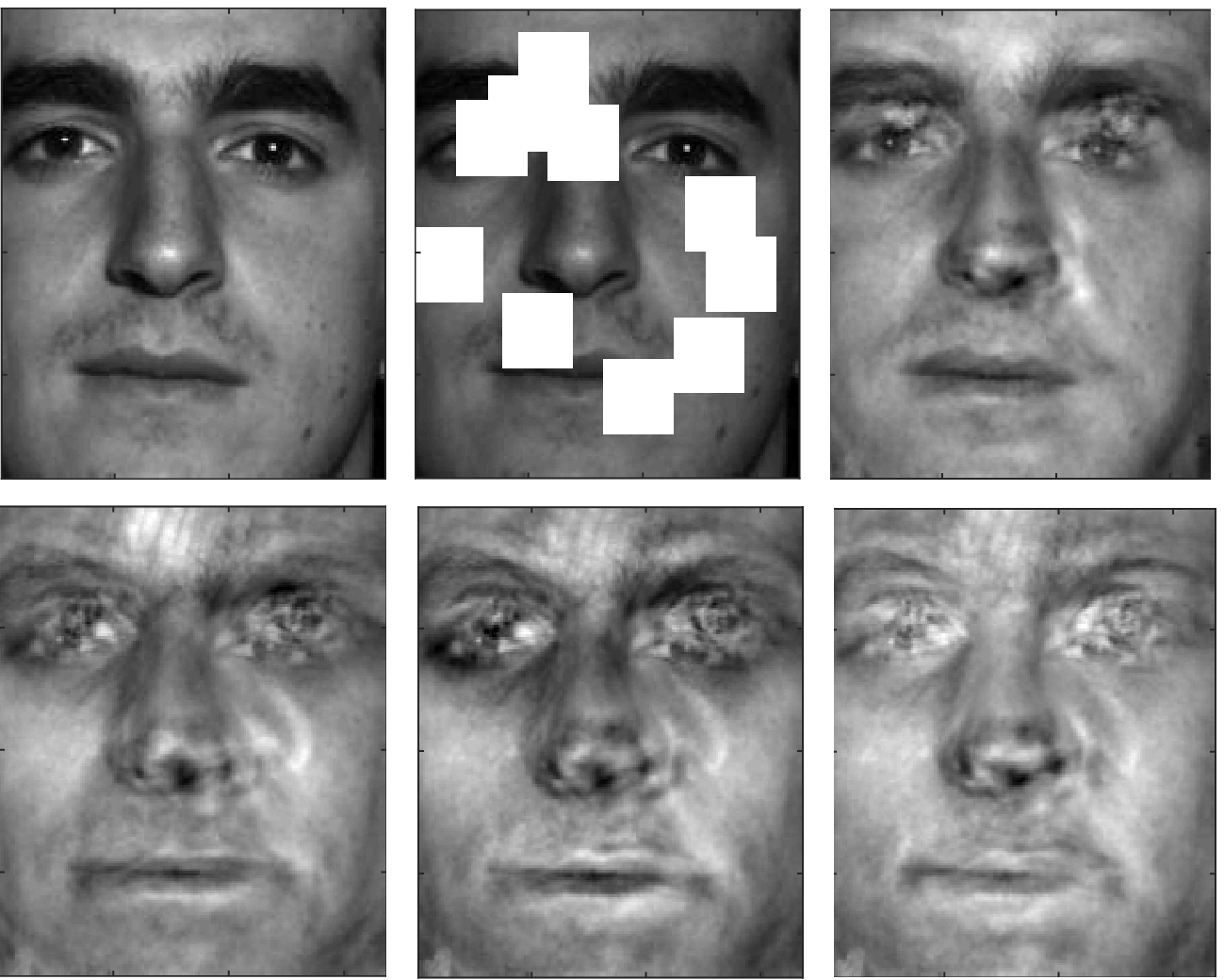}}
       
\caption{Robust Face recovery results: Top; in order from L to R: original image, occluded image, best possible recovery with given basis.  Bottom; in order from L to R: Reconstructions using our proposed algorithm, TORRENT and ordinary least squares (OLS).}
\end{figure}

\subsection{Heavy-tailed Estimation}
We now consider the heavy-tailed model and present experimental results on synthetic datasets comparing the gradient descent based robust estimator described in Algorithms~\ref{algo:festimation} and \ref{algo:heavy_tailed_estimation} (which we call RobustGD) with ERM and several other recent proposals. In these experiments we focus on the problem of linear regression which is described in Section~\ref{sec:exp_huber} and work with heavy-tailed noise distributions.

\subsubsection{Synthetic Experiments: Simple Linear Regression}
\paragraph{Setup.} The covariate $x \in \mathbb{R}^p$ is sampled from a zero-mean isotropic gaussian distribution. We set each entry of $\theta^*$ to $1/\sqrt{p}$. The noise $w$ is sampled from a Pareto distribution, with mean zero, variance $\sigma^2$ and tail parameter $\beta$. The tail parameter $\beta$ determines the moments of the Pareto random variable. More specifically, the moment of order $k$ exists only if $k < \beta$, hence, smaller the $\beta$ the more heavy-tailed the distribution. In this setup, we keep the dimension $p$ fixed to 128 and vary $n$, $\sigma$ and $\beta$. We always maintain the sample-size $n$ to be at least $4p$.

\paragraph{Methods.} We compare RobustGD with several baselines. Since we are always in the low-dimensional ($n \geq p$) setting, the solution to ERM has a closed form expression and is simply the OLS solution. We also study OLS-GD, which performs a gradient descent on ERM and is equivalent to using empirical mean as the gradient oracle in our framework.  We also compare against the robust estimation techniques of \citet{hsu2016loss} and \citet{duchi2016variance}, which we refer to as RobustHS, RobustDN and two classical techniques namely the LASSO \cite{tibshirani1996regression} and ridge regression.  In our experiments, all the iterative techniques are run until convergence.


\paragraph{Metrics.}
We use two metrics to compare the performance of various approaches: a) parameter error which is defined as $\|\theta - \theta^*\|_2$ and
b) to compare the performance of two estimators $\eparam_1$, $\eparam_2$, we define the notion of relative efficiency:
\[ \releff(\eparam_1,\eparam_2) = \frac{\|\eparam_2 - \theta^*\|_2 - \|\eparam_1 -\theta^*\|_2}{\|\eparam_1 -\theta^*\|_2}.\]
Roughly, this corresponds to the percentage improvement in the parameter error obtained using $\eparam_1$ over $\eparam_2$. Whenever $\releff(\eparam_1,\eparam_2) > 0$, $\eparam_1$ has a lower parameter error, and higher the value, the more the fractional improvement.

\paragraph{Results.} To reduce the variance in the plots presented here, we averaged results over $20$ repetitions. Figure~\ref{fig:GD1} shows the benefits of using RobustGD over other baselines. 

\begin{itemize}[leftmargin=*]
\item \textbf{Error vs number of iterations:} In Figures~\ref{fig:1a},~\ref{fig:1b} we plot the excess risk of various approaches against the number of iterations (for OLS, LASSO, ridge regression and the method of \citet{hsu2016loss} we only plot the excess risk of the final iterate). We see that upon convergence RobustGD has a much lower parameter error. As expected, OLS-GD converges to OLS.
%
\item \textbf{Error vs number of samples:} 
Next, in Figures~\ref{fig:1c},~\ref{fig:1d} we plot the parameter error as $n/p$ increases. We see that RobustGD is always better than other baselines, even when the number of samples is 12 times the dimension $p$.
\item \textbf{Relative Efficiency vs $\beta$, and $\sigma$:} In Figure~\ref{fig:1e}, we plot the relative efficiency against $\beta$, the moment bound of Pareto distribution. This shows that the percentage improvement in the excess risk by RobustGD decreases as the moment bound $\beta$ increases. This behavior is expected because as we increase the moment bound the tails of the noise distribution become lighter. This shows that there is more benefit in using RobustGD in the heavy tailed setting.  We do a similar study to see the relative efficiency against the variance of the noise distribution. Figure~\ref{fig:1f} plots relative efficiency against standard deviation of the noise distribution.
\end{itemize}
\begin{figure}[!ht]
        \centering
         \subfigure[\label{fig:1a}\small{$n = 512, p = 128, \sigma = 0.75, \beta = 3$}]{\includegraphics[width=0.32\textwidth]{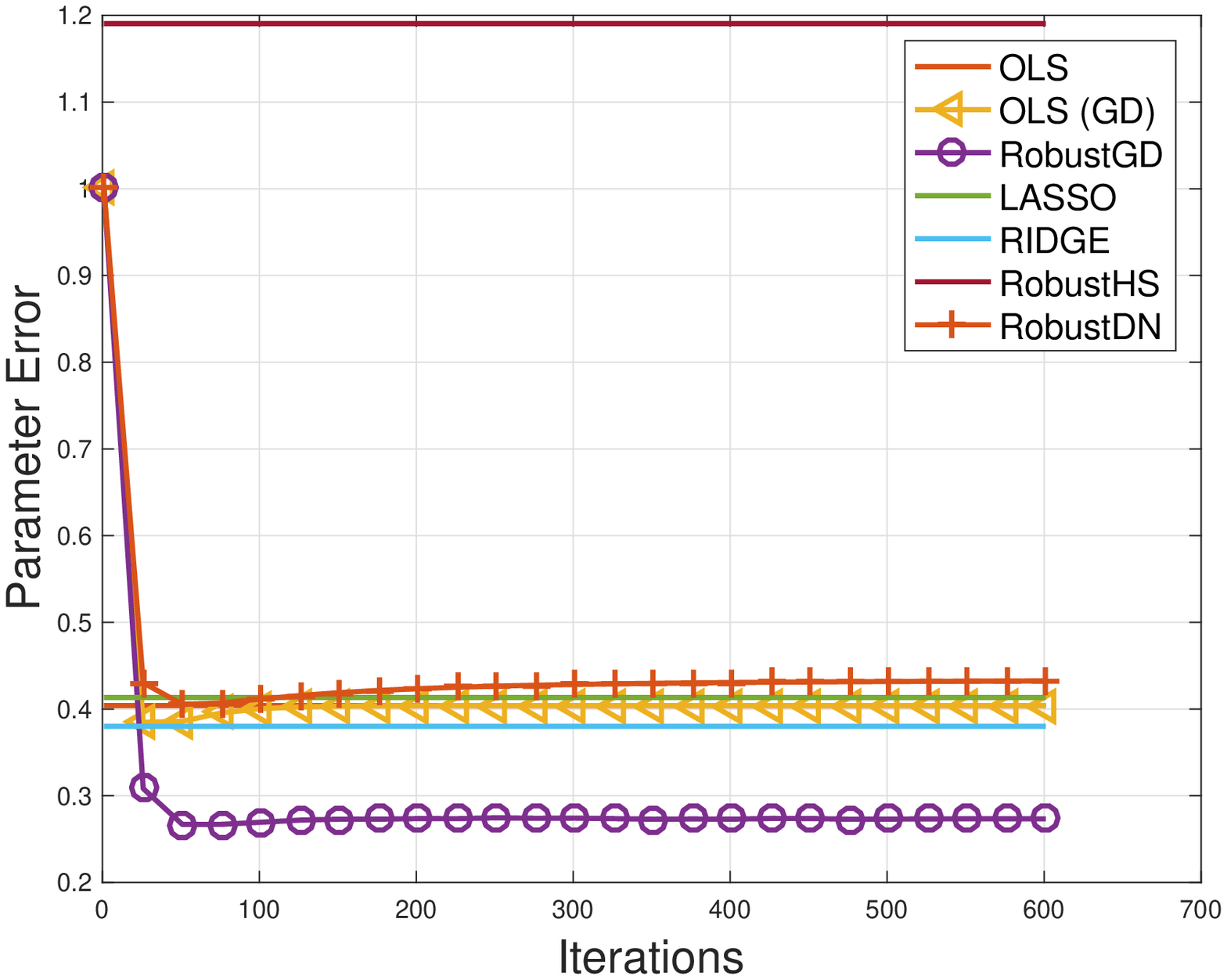}}
        \subfigure[\label{fig:1b}\small{$n = 1024, p = 128, \sigma = 0.75, \beta = 3$}]{\includegraphics[width=0.32\textwidth]{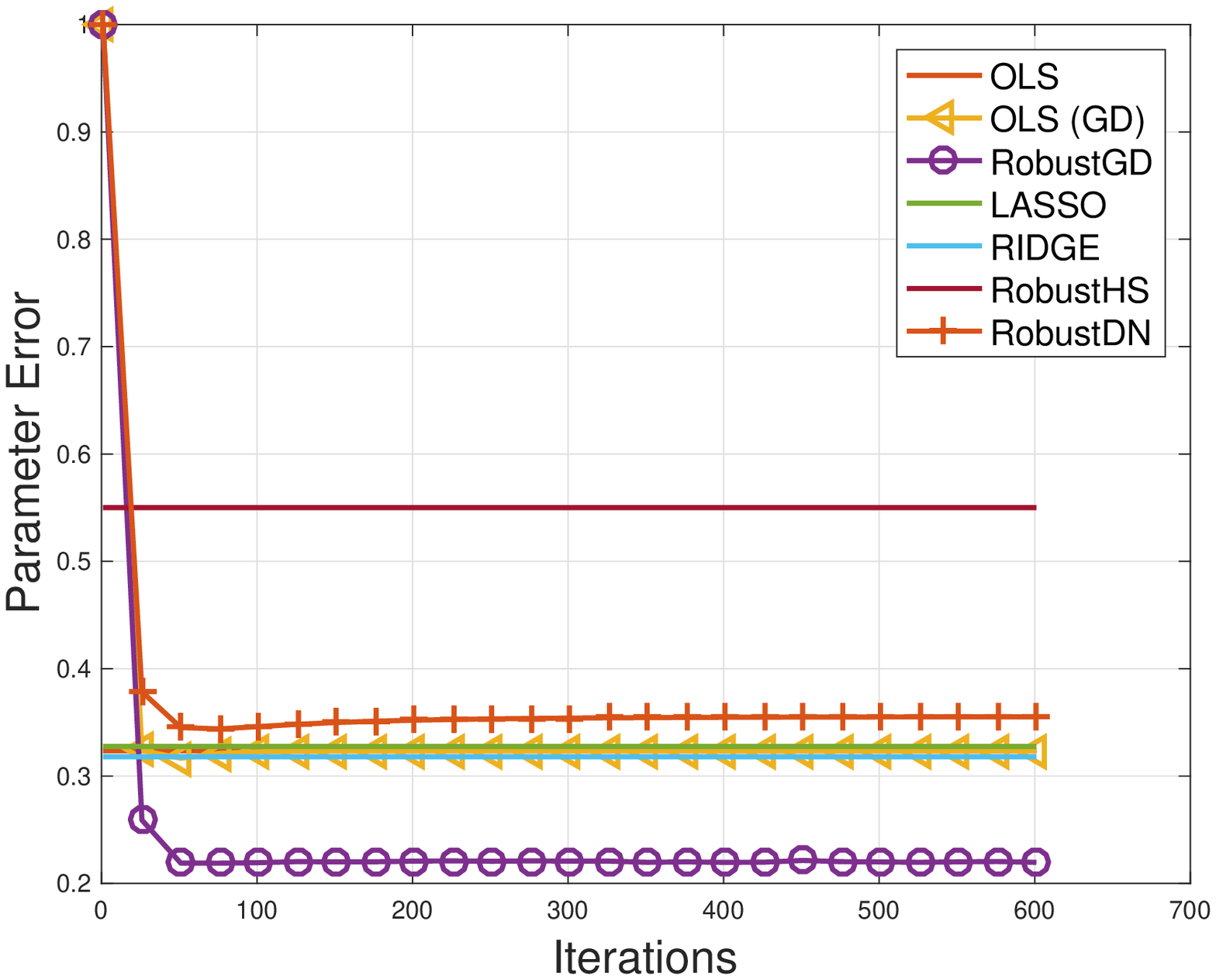}}
        \subfigure[\label{fig:1c}\small{$\sigma = 0.75, \beta = 3$}]{\includegraphics[width=0.32\textwidth]{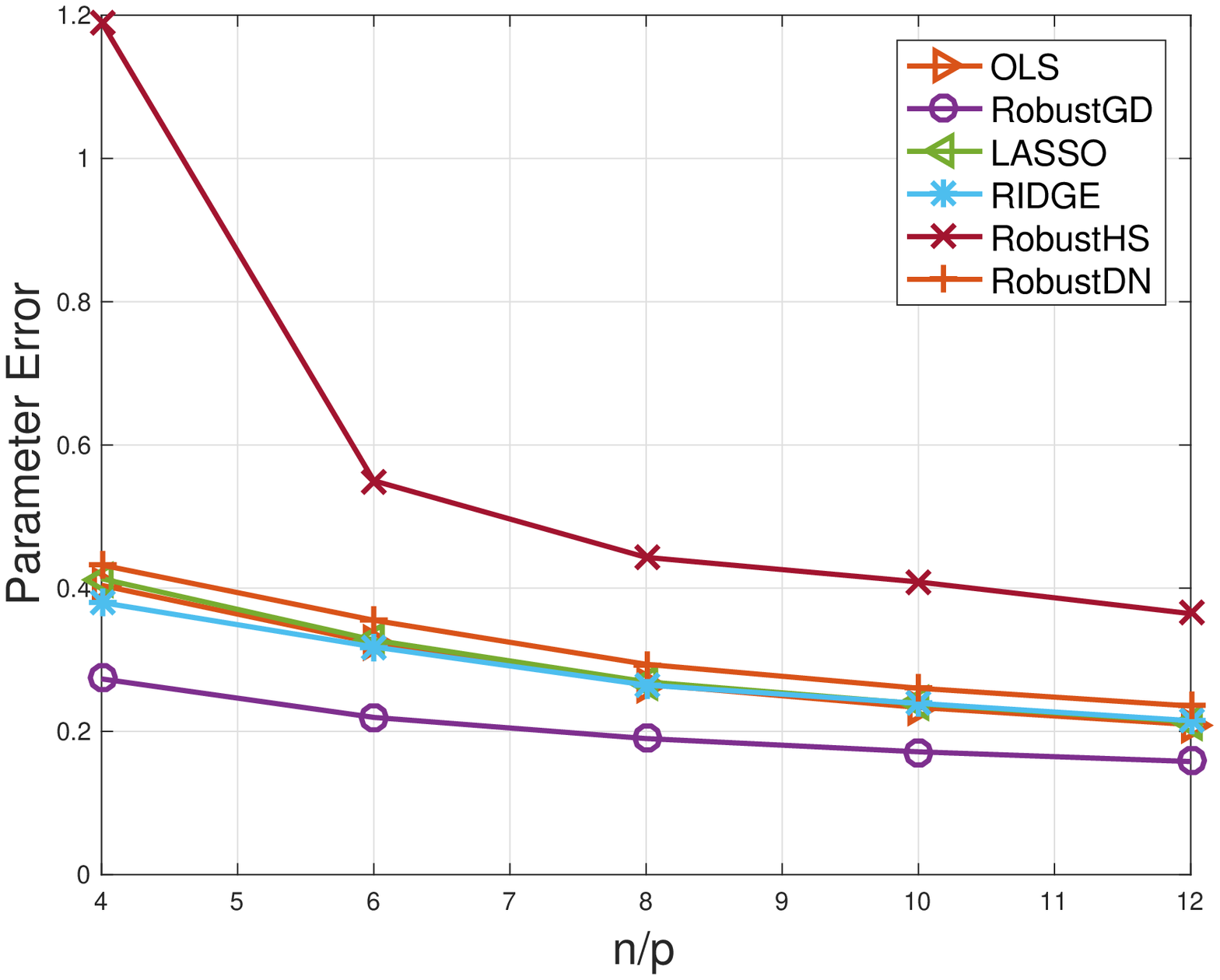}}
        \subfigure[\label{fig:1d}\small{$\sigma = 1, \beta = 3$}]{\includegraphics[width=0.32\textwidth]{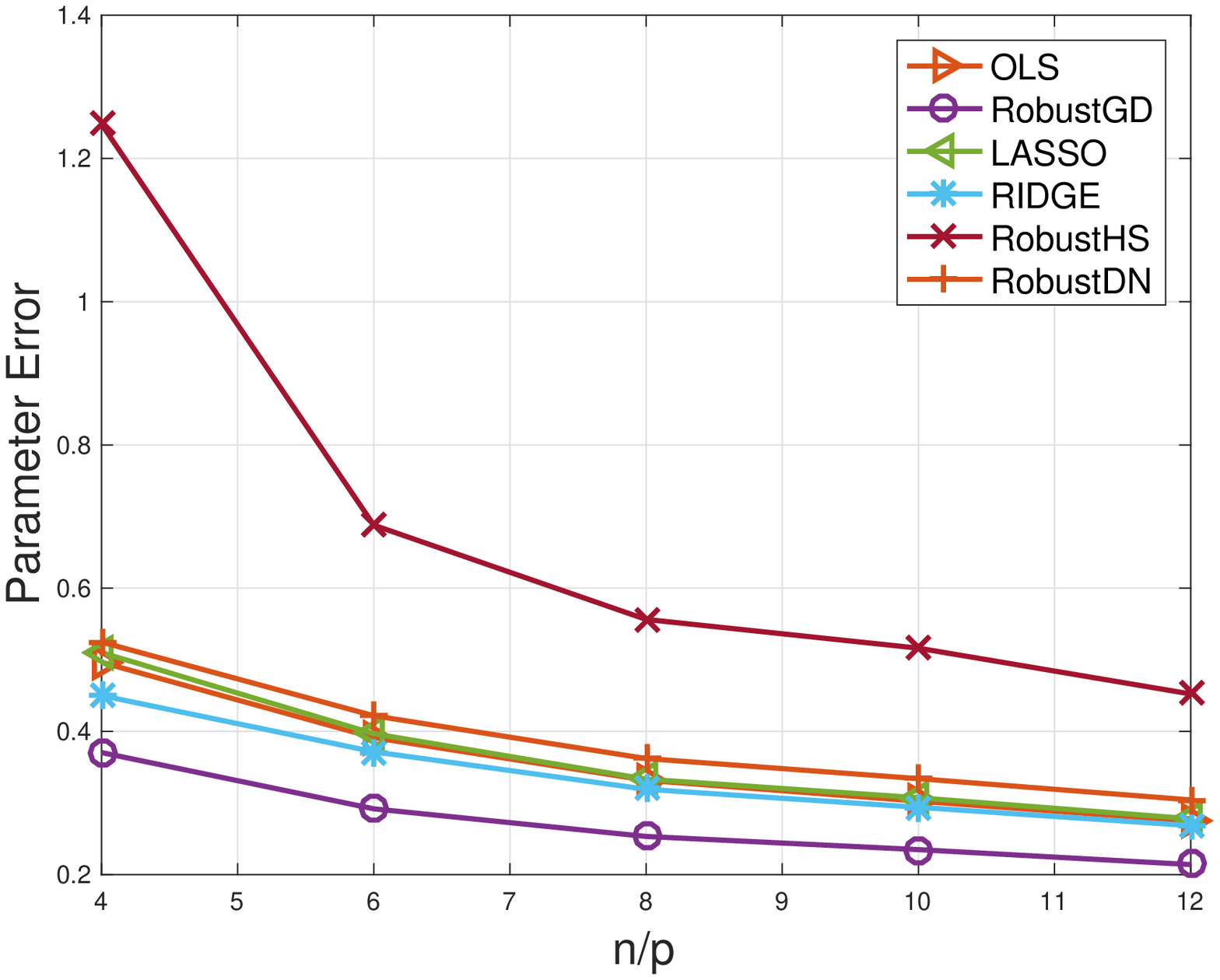}}
        \subfigure[\label{fig:1e}\small{$n = 512, p = 128, \sigma = 0.75$}]{\includegraphics[width=0.32\textwidth]{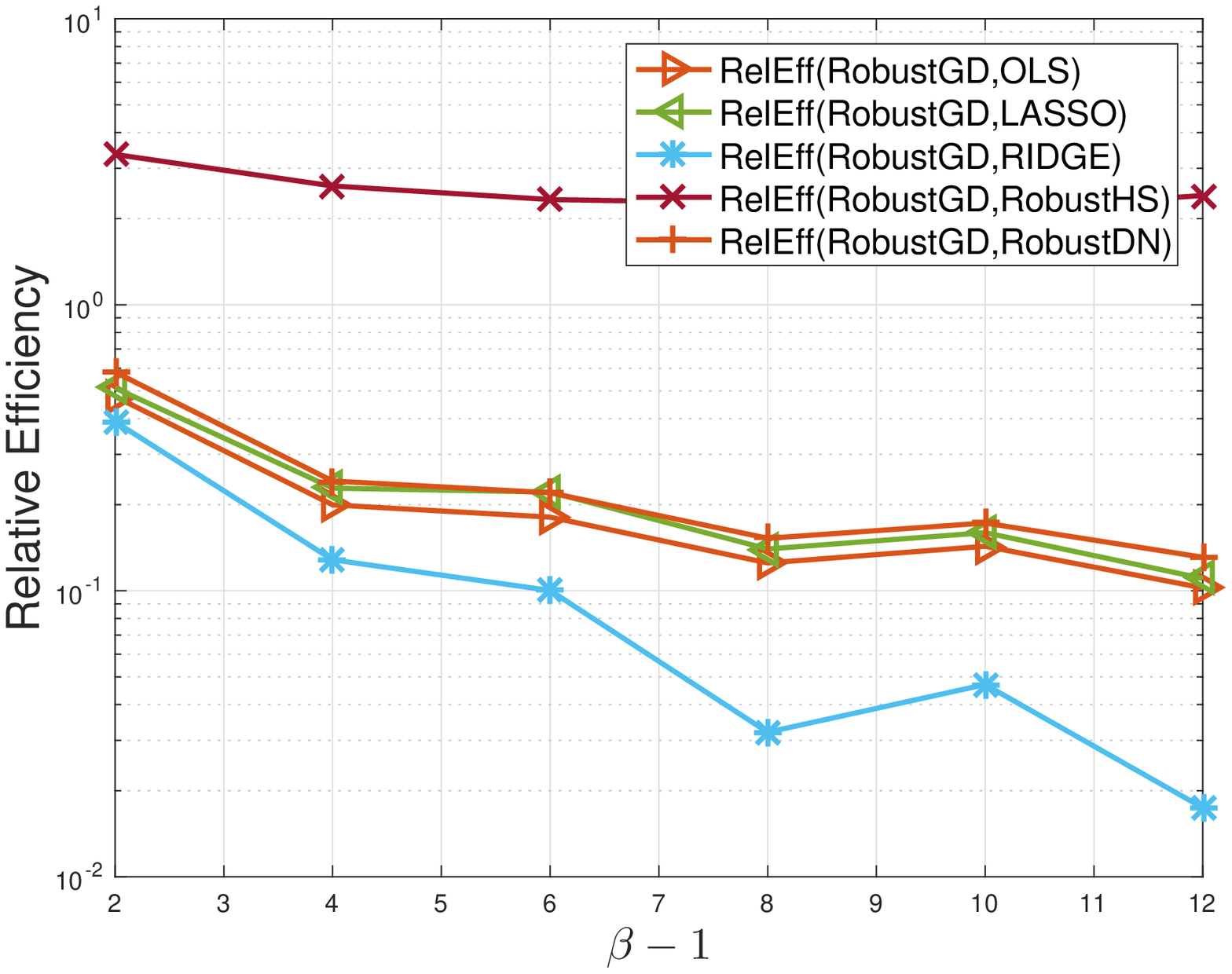}}
        \subfigure[\label{fig:1f}\small{$n = 512, p = 128, \beta = 3$}]{\includegraphics[width=0.32\textwidth]{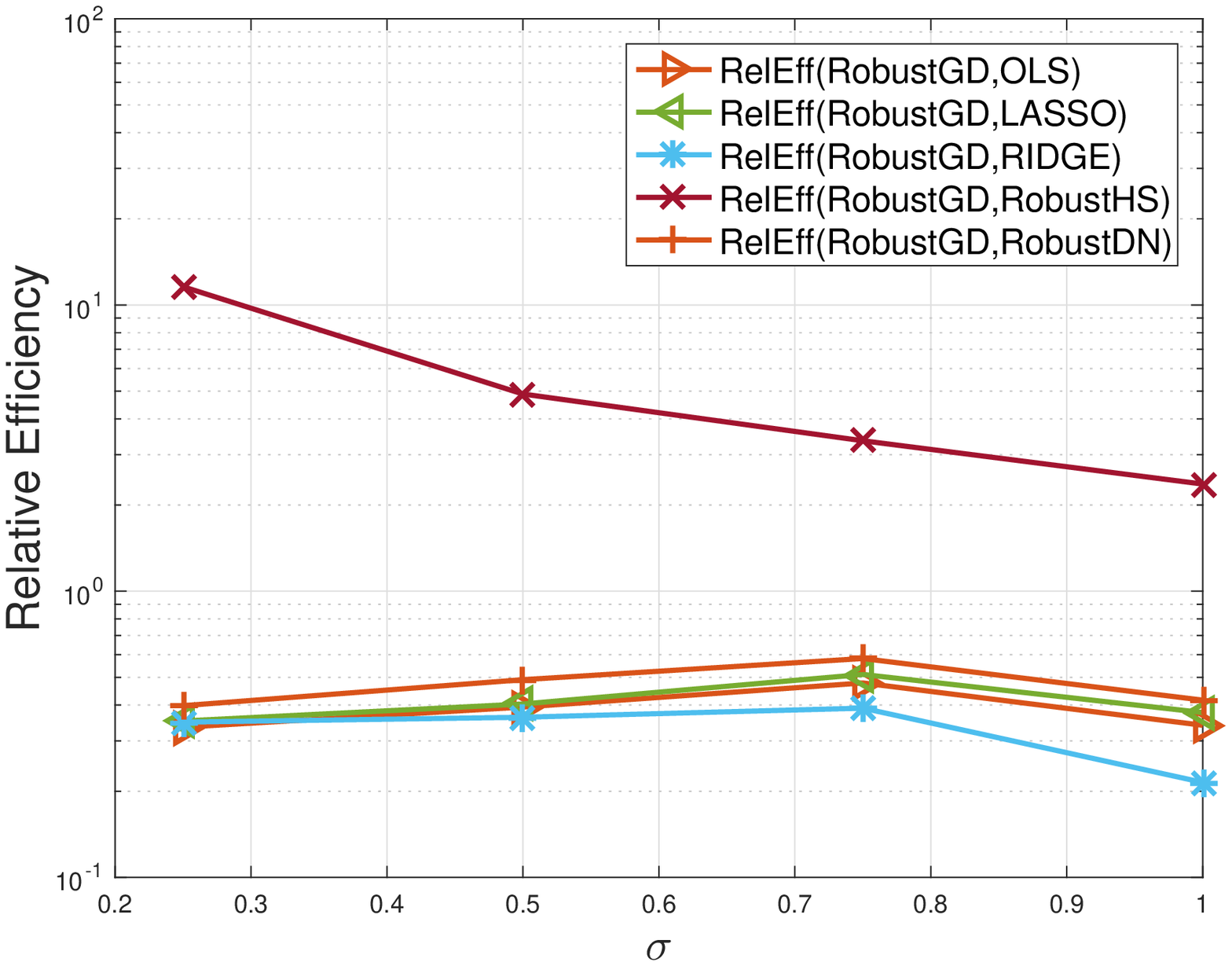}}
\caption{Linear Regression: Performance comparison of RobustGD against baselines.}
\label{fig:GD1}
\end{figure}

\section{Theoretical Preliminaries}
\label{sec:theory_prelims}
In this section we develop some theoretical preliminaries. 
We first develop a general theory on convergence of projected gradient descent in Section~\ref{sec:grad}. Next we analyze the
gradient estimators defined in Algorithms~\ref{algo:huber_mean_estimation} and \ref{algo:heavy_tailed_estimation} in Sections~\ref{sec:huber} and~\ref{sec:heavy} respectively.
Finally in Sections \ref{sec:consequences_huber} and \ref{sec:consequences_heavy} we present consequences of our general theory for the canonical examples of risk minimization described in Section~\ref{sec:illex}, under Huber contamination and heavy-tailed models.

For some of our examples, we will assume certain mild moment conditions. Concretely, for a random vector $x \in \mathbb{R}^p$, let $\mu = \mathbb{E}[x]$ and $\Sigma$ be the covariance matrix. Then $x$ has bounded $2k^{\text{th}}$ moments if there exists a constant $C_{2k}$ such that for every unit vector $v$ we have that
\begin{equation}
\label{eqn:bounded_moment}
\mathbb{E} \left[ \inprod{x - \mu}{v}^{2k} \right] \leq C_{2k} \left(\mathbb{E} \left[ \inprod{x - \mu}{v}^2\right] \right)^k.
\end{equation}


\subsection{Stability of Gradient Descent}
\label{sec:grad}
In this section we develop a general theory for the convergence of the projected gradient descent described in Algorithm~\ref{algo:festimation}.
Note that our gradient estimators could be biased and are not guaranteed to be consistent estimators of the true gradient $\grad \mathcal{R}(\theta)$. This is especially true in the Huber contamination model where it is impossible to obtain consistent estimators of the gradient of the risk because of the non-vanishing bias caused by the contaminated samples. Hence, we turn our attention to understanding the behavior of projected gradient descent with a biased, inexact, gradient estimator of the form in~\eqnref{eqn:grad_estimator}.
Before we present our main result, we define the notion of stability of a gradient estimator, which plays a key role in the convergence of gradient descent.
\begin{definition}[Stability]
A gradient estimator is \emph{stable} for a given risk function $\mathcal{R}: \Theta \mapsto \mathbb{R}$ if for some $\phi \in [0,\tau_\ell)$,
\begin{align*}
\alpha(\tiln,\tilde{\delta}) < \tau_\ell - \phi.
\end{align*}
\end{definition}
\noindent We denote by $\kappa$ the following contraction parameter:
\begin{align*}
\kappa :=  \sqrt{1- \frac{2\stepSize \tau_\ell \tau_u}{\tau_\ell+\tau_u}} + \stepSize \alpha(\widetilde{n},\widetilde{\delta}), 
\end{align*}
and note that $\kappa < 1$.
With these definitions in place we state our main result on the stability of gradient descent: 
\begin{theorem}
\label{thm:main}
Suppose that the gradient estimator satisfies the condition in~\eqref{eqn:grad_estimator} and
is stable for the risk function $\mathcal{R}: \Theta \mapsto \mathbb{R}$. Then Algorithm~\ref{algo:festimation} initialized at $\theta^0$ with step-size $\eta = 2/(\tau_\ell + \tau_u)$, returns iterates $\{\widehat{\theta}^t\}_{t=1}^{T}$ such that with probability at least $1 - \delta$ for the contraction parameter $\kappa$ above we have that,
\begin{align}\label{eq:main_bounds}
\|\widehat{\theta}^t - \true\|_2 \leq \kappa^t \|\theta^0 - \true\|_2 + \frac{1}{1 - \kappa} \beta(\widetilde{n},\widetilde{\delta}).
\end{align}
\end{theorem}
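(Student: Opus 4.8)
The plan is to reduce the theorem to a single deterministic per-iteration contraction inequality, and then handle the randomness through the sample-splitting structure of Algorithm~\ref{algo:festimation} together with a union bound. I would first address the probabilistic bookkeeping: each iteration $t$ operates on a fresh batch $\mathcal{Z}_t$ of size $\widetilde{n}$, and the iterate $\theta^t$ is a deterministic function of the previous batches $\mathcal{Z}_0,\dots,\mathcal{Z}_{t-1}$, hence independent of $\mathcal{Z}_t$. Consequently the guarantee~\eqref{eqn:grad_estimator}, which holds at any \emph{fixed} $\theta$, may be applied conditionally at the random point $\theta^t$, giving $\|g(\theta^t;\mathcal{Z}_t,\widetilde{\delta}) - \nabla\mathcal{R}(\theta^t)\|_2 \le \alpha(\widetilde{n},\widetilde{\delta})\|\theta^t - \theta^*\|_2 + \beta(\widetilde{n},\widetilde{\delta})$ with probability at least $1-\widetilde{\delta}$. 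A union bound over the $T$ iterations, using $\widetilde{\delta} = \delta/T$, ensures all these events hold simultaneously with probability at least $1-\delta$, and I would condition on this event for the remainder of the argument.

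On this event I would carry out the one-step analysis. Writing $g^t := g(\theta^t;\mathcal{Z}_t,\widetilde{\delta})$, using that projection onto the convex set $\Theta$ is non-expansive together with the first-order stationarity of $\theta^*$ — equivalently the fixed-point identity $\theta^* = \mathcal{P}_{\Theta}(\theta^* - \eta\nabla\mathcal{R}(\theta^*))$ — I obtain
\begin{align*}
\|\theta^{t+1} - \theta^*\|_2 \le \left\|(\theta^t - \eta g^t) - (\theta^* - \eta\nabla\mathcal{R}(\theta^*))\right\|_2.
\end{align*}
Adding and subtracting $\eta\nabla\mathcal{R}(\theta^t)$ inside the norm and applying the triangle inequality splits this into an \emph{exact-gradient} term $\|\theta^t - \theta^* - \eta(\nabla\mathcal{R}(\theta^t) - \nabla\mathcal{R}(\theta^*))\|_2$ and an \emph{estimation-error} term $\eta\|g^t - \nabla\mathcal{R}(\theta^t)\|_2$. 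The latter is bounded by $\eta(\alpha(\widetilde{n},\widetilde{\delta})\|\theta^t - \theta^*\|_2 + \beta(\widetilde{n},\widetilde{\delta}))$ directly from the gradient-estimator guarantee.

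The analytic heart is the exact-gradient term, where I would invoke the standard co-coercivity inequality for a $\tau_\ell$-strongly convex and $\tau_u$-smooth risk (a consequence of~\eqref{eqn:sc}):
\begin{align*}
\inprod{\nabla\mathcal{R}(\theta_1) - \nabla\mathcal{R}(\theta_2)}{\theta_1 - \theta_2} \ge \frac{\tau_\ell\tau_u}{\tau_\ell+\tau_u}\|\theta_1-\theta_2\|_2^2 + \frac{1}{\tau_\ell+\tau_u}\|\nabla\mathcal{R}(\theta_1) - \nabla\mathcal{R}(\theta_2)\|_2^2.
\end{align*}
Expanding $\|\theta^t - \theta^* - \eta(\nabla\mathcal{R}(\theta^t)-\nabla\mathcal{R}(\theta^*))\|_2^2$ and substituting this bound, the coefficient of $\|\nabla\mathcal{R}(\theta^t)-\nabla\mathcal{R}(\theta^*)\|_2^2$ becomes non-positive precisely when $\eta \le 2/(\tau_\ell+\tau_u)$, so it can be dropped, leaving the contraction factor $\sqrt{1 - 2\eta\tau_\ell\tau_u/(\tau_\ell+\tau_u)}$. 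Combining the two terms yields the recursion
\begin{align*}
\|\theta^{t+1} - \theta^*\|_2 \le \kappa\,\|\theta^t - \theta^*\|_2 + \eta\,\beta(\widetilde{n},\widetilde{\delta}),
\end{align*}
with $\kappa$ exactly the contraction parameter defined before the theorem; the stability hypothesis $\alpha(\widetilde{n},\widetilde{\delta}) < \tau_\ell - \phi \le \tau_\ell$ guarantees $\kappa < 1$ at the prescribed step-size $\eta = 2/(\tau_\ell+\tau_u)$.

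Finally I would unroll this recursion from $\theta^0$, obtaining $\|\theta^t - \theta^*\|_2 \le \kappa^t\|\theta^0 - \theta^*\|_2 + \eta\,\beta(\widetilde{n},\widetilde{\delta})\sum_{j=0}^{t-1}\kappa^j$; summing the geometric series (valid since $\kappa<1$) produces the non-vanishing term $\tfrac{\eta}{1-\kappa}\beta(\widetilde{n},\widetilde{\delta})$, of the same form as the $\tfrac{1}{1-\kappa}\beta(\widetilde{n},\widetilde{\delta})$ appearing in~\eqref{eq:main_bounds}. I expect the main obstacle to be conceptual rather than computational: correctly justifying that the \emph{fixed}-$\theta$ gradient guarantee transfers to the \emph{random} iterate $\theta^t$, which is exactly what the fresh-batch sample-splitting buys us, and handling the projection cleanly through the fixed-point characterization of $\theta^*$. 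The co-coercivity step, while the substantive analytic ingredient, is otherwise a routine strong-convexity-plus-smoothness calculation.
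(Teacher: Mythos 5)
Your proposal is correct and follows essentially the same route as the paper's proof: union bound over the $T$ sample-split batches, projection non-expansiveness against the fixed point $\theta^* = \mathcal{P}_{\Theta}(\theta^* - \eta\nabla\mathcal{R}(\theta^*))$, the strong-convexity/smoothness co-coercivity inequality to get the factor $\sqrt{1 - 2\eta\tau_\ell\tau_u/(\tau_\ell+\tau_u)}$ at $\eta = 2/(\tau_\ell+\tau_u)$, and unrolling the resulting recursion. Even the residual $\tfrac{\eta}{1-\kappa}\beta(\widetilde{n},\widetilde{\delta})$ you obtain (versus the $\tfrac{1}{1-\kappa}\beta(\widetilde{n},\widetilde{\delta})$ in the statement) matches what the paper's own proof actually produces.
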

\noindent We defer a proof of this result to the Appendix. For the bound (\ref{eq:main_bounds}), the first term is decreasing in $T$, while the second term is increasing in $T$. This suggests that for a given $n$ and $\delta$, we need to run just enough iterations for the first term to be bounded by the second. Hence, we can fix the number of iterations $T^*$ as the smallest positive integer such that:
\[ T \geq \log_{1/\kappa} \frac{(1-\kappa) \norm{\theta^0 - \tparam}{2}}{\beta(\widetilde{n},\widetilde{\delta})}. \]
Since we obtain linear convergence, i.e. $\kappa < 1$, typically a logarithmic number of iterations suffice to obtain an accurate estimate.

Theorem~\ref{thm:main} provides a general result for risk minimization and parameter estimation, and requires bounds on $\alpha(\widetilde{n},\widetilde{\delta}),\beta(\widetilde{n},\widetilde{\delta})$ which capture the the error suffered by the gradient estimator for a given risk minimization problem. In any concrete instantiation for a given gradient estimator, risk pair, we first estimate these gradient estimator error bounds by studying the distribution of the gradient of the risk, and then apply Theorem~\ref{thm:main}. In the next two sections, we provide some general analyses of the gradient estimator in Algorithm~\ref{algo:huber_mean_estimation} for the Huber contamination model, and the gradient estimator in Algorithm~\ref{algo:heavy_tailed_estimation} for the heavy-tailed model, and which apply to any risk minimization problem. In Sections~\ref{sec:consequences_huber},\ref{sec:consequences_heavy} we then instantiate these gradient estimator error results for various illustrative statistical models such as linear regression, logistic regression, and general exponential families. Plugging these into Theorem~\ref{thm:main}, we then get consequences of our robustness guarantees for various statistical model, robustness setting pairs.

\subsection{General Analysis of Huber Contamination Gradient Estimator in Algorithm~\ref{algo:huber_mean_estimation}}
\label{sec:huber}
We now analyze the gradient estimator described in Algorithm~\ref{algo:huber_mean_estimation} for Huber contamination model and study the error suffered by it. 
As stated before, Algorithm~\ref{algo:huber_mean_estimation} uses the robust mean estimator of \citet{lai2016agnostic}. Hence, while our proof strategy mimics that of \citet{lai2016agnostic}, we present a different result which is obtained by a more careful non-asymptotic analysis of the algorithm.


We define:
\begin{align}
\label{eqn:gamma}
\gamma(n,p,\delta,\epsilon) := \Big({\frac{p \log p \log \big(n/(p\delta) \big)}{n}}\Big)^{3/8} + \Big(\frac{\epsilon p^2 \log p \log \big( \frac{p \log (p)}{ \delta} \big)}{n}  \Big)^{1/4},
\end{align}
and with this definition in place we have the following result:
\begin{lemma}
\label{lem:lai2016agnostic}
Let $P$ be the true probability distribution of $z$ and let $P_{\theta}$ be the true distribution of the gradients $\grad \poploss (\theta;z)$ on $\mathbb{R}^p$ with mean $\mu_{\theta} = \nabla \mathcal{R}(\theta)$, covariance $\Sigma_{\theta}$, and bounded fourth moments. 
There exists a positive constant $C_1> 0$, 
such that 
given $n$ samples from the distribution in~\eqref{eqn:mixture}, the Huber Gradient Estimator described in Algorithm~\ref{algo:huber_mean_estimation} when instantiated with the contamination level $\epsilon$, with probability at least $1 - \delta $, returns an estimate $\widehat{\mu}$ of $\mu_{\theta}$ such that,
\begin{align*}
\norm{\widehat{\mu} - \mu_{\theta}}{2} \leq C_1 \Big( \sqrt{\epsilon} + \gamma(n,p,\delta,\epsilon) \Big)\norm{\Sigma_{\theta}}{2}^{\half} \sqrt{\log p}. 
\end{align*}
\end{lemma}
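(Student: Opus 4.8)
The plan is to reduce the bound on $\|\widehat{\mu} - \mu_\theta\|_2$ to a recursion on the dimension, exactly following the recursive SVD structure of Algorithm~\ref{algo:huber_mean_estimation}, but tracking the constants and failure probabilities non-asymptotically. First I would analyze the truncation step \text{{\sc HuberOutlierGradientTruncation}}: I would show that after removing gross outliers, the retained set $\widetilde{S}$ has an empirical mean and empirical covariance that are close to those of the uncorrupted gradient distribution $P_\theta$, up to errors controlled by the sampling error (which produces the $\gamma(n,p,\delta,\epsilon)$ terms via standard concentration of empirical moments for distributions with bounded fourth moments) and a bias term of order $\sqrt{\epsilon}\,\|\Sigma_\theta\|_2^{1/2}$ coming from the contamination that survives truncation. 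The bounded fourth moment hypothesis is exactly what lets me invoke moment-concentration inequalities (e.g. via the bounded $2k$-th moment condition in~\eqref{eqn:bounded_moment} with $k=2$) to control how well the empirical covariance approximates $\Sigma_\theta$ on the retained samples.

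Next I would set up the recursion. At each level of the recursion the algorithm splits $\mathbb{R}^p$ into the top-$p/2$ principal subspace $V$ of the empirical covariance $\Sigma_{\widetilde{S}}$ and its orthogonal complement $W$. The key geometric fact, due to \citet{lai2016agnostic}, is that in $W$ the contamination has provably small effect on the mean, so the plain sample mean $\widehat{\mu}_W$ already estimates $P_W \mu_\theta$ to within an error governed by the spectral mass discarded plus sampling error; I would make this quantitative, obtaining an error contribution of the form $\big(\sqrt{\epsilon}+\gamma\big)\|\Sigma_\theta\|_2^{1/2}$ on the $W$ component. In $V$ the algorithm recurses on a problem of half the dimension, so if $f(p)$ denotes the per-level error bound, the total error satisfies (componentwise, then combined via orthogonality of $V,W$) a recursion $f(p) \lesssim f(p/2) + \big(\sqrt\epsilon + \gamma(n,p,\delta,\epsilon)\big)\|\Sigma_\theta\|_2^{1/2}$. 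Since the recursion has depth $\log_2 p$, unrolling it yields the extra $\sqrt{\log p}$ factor appearing in the statement, and the monotonicity of $\gamma$ in $p$ lets me dominate all the intermediate-dimension error terms by the top-level $\gamma(n,p,\delta,\epsilon)$.

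The failure-probability bookkeeping is the part I would handle carefully but routinely: each of the $O(\log p)$ recursive calls and each concentration event must hold simultaneously, so I would allocate the confidence budget $\delta$ across the $O(\log p)$ levels (this is why $\gamma$ carries the $\log(p\log p/\delta)$ dependence rather than $\log(1/\delta)$) and take a union bound, ensuring the overall event holds with probability at least $1-\delta$. Combining the $V$ and $W$ errors at each level by the Pythagorean identity, unrolling the recursion, and absorbing constants into $C_1$ then gives the claimed bound.

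I expect the main obstacle to be the quantitative covariance-perturbation argument in the $W$ subspace: I must show that the fraction of contamination surviving truncation cannot shift the mean in the small-eigenvalue directions by more than $O(\sqrt\epsilon)\|\Sigma_\theta\|_2^{1/2}$, and this requires relating the empirical covariance $\Sigma_{\widetilde{S}}$ (corrupted) to the true $\Sigma_\theta$ with non-asymptotic error bounds rather than the asymptotic statements of \citet{lai2016agnostic}. Making this spectral comparison precise under only a bounded fourth-moment assumption, while simultaneously controlling the sampling error through $\gamma$, is the technical crux; the recursive unrolling and union bound are then comparatively mechanical.
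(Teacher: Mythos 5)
Your proposal follows essentially the same route as the paper's proof: a non-asymptotic analysis of the truncation step, the recursive top/bottom-$p/2$ SVD split with a covariance-perturbation (Weyl-plus-trace) argument showing the surviving contamination can shift the mean only by $O(\sqrt{\epsilon})\|\Sigma_\theta\|_2^{1/2}$ in the bottom subspace, Pythagorean combination of the $V$ and $W$ errors unrolled over the $\log_2 p$ recursion levels to produce the $\sqrt{\log p}$ factor, and allocation of the confidence budget $\delta$ across the $O(\log p)$ levels. The one execution detail you gloss over is that the paper handles the data-dependence of the recursively chosen subspaces by running each level on a fresh batch of roughly $n/\log p$ samples (which is where the $n/\log p$ and $\delta/\log p$ arguments of $\gamma$ come from), rather than by a pure union bound on a single sample.
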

\noindent We note in particular, if $n \rightarrow \infty$ (with other parameters held fixed) then $\gamma(n,p,\delta,\epsilon) \rightarrow 0$ and the 
error of our gradient estimator satisfies 
\begin{align*}
\norm{\widehat{\mu} - \mu_{\theta}}{2} \leq C \sqrt{ \norm{\Sigma_{\theta}}{2} \epsilon \log p},
\end{align*} 
and has only 
a weak dependence on the dimension $p$.

\subsection{General Analysis of Heavy-tailed Model Gradient Estimator in Algorithm~\ref{algo:heavy_tailed_estimation}}
\label{sec:heavy}
In this section we analyze the gradient estimator for heavy-tailed setting, described in Algorithm~\ref{algo:heavy_tailed_estimation}.
The following result shows that the gradient estimate has exponential concentration around the true gradient, under the mild assumption that the gradient distribution has bounded second moment. Its proof follows directly from the analysis of geometric median-of-means estimator of \citet{minsker2015geometric}. We use $\trace{\Sigma_{\theta}}$ to denote the trace of the matrix $\Sigma_{\theta}$.
\begin{lemma}
\label{lem:gmom_concentration}
Let $P$ be the probability distribution of $z$ and $P_{\theta}$ be the distribution of the gradients $\grad \poploss (\theta;z)$ on $\mathbb{R}^p$ with mean $\mu_{\theta} = \nabla \mathcal{R}(\theta)$, covariance $\Sigma_{\theta}$. Then the heavy tailed gradient estimator described in Algorithm~\ref{algo:heavy_tailed_estimation} returns an estimate $\widehat{\mu}$ that satisfies the following exponential concentration inequality, with probability at least $1-\delta$:
\[
\|\widehat{\mu} - \mu_{\theta}\|_2 \leq 11\sqrt{\frac{\trace{\Sigma_{\theta}}\log{(1.4/\delta)}}{n}}.
\]
\end{lemma}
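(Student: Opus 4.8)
The plan is to derive Lemma~\ref{lem:gmom_concentration} essentially as a direct corollary of Minsker's concentration result for the geometric median-of-means estimator~\cite{minsker2015geometric}, specialized to the gradient distribution $P_\theta$ on $\mathbb{R}^p$. The overall strategy has two layers: first, a per-block concentration bound controlling how far each block-mean $\widehat{\mu}_i$ is from the true gradient $\mu_\theta$ with constant probability; and second, a robust aggregation argument showing that the geometric median of the $b$ block-means inherits an \emph{exponentially} stronger concentration, paying only a $\log(1/\delta)$ factor rather than a $1/\delta$ factor.

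First I would set up the block structure. With $b = 1 + \lfloor 3.5\log(1/\delta)\rfloor$ blocks each of size $m = \lfloor n/b\rfloor$, each block-mean $\widehat{\mu}_i$ is an average of $m$ i.i.d.\ draws of $\grad\poploss(\theta;z)$, so $\E[\widehat{\mu}_i] = \mu_\theta$ and $\E\|\widehat{\mu}_i - \mu_\theta\|_2^2 = \trace{\Sigma_\theta}/m$. By Chebyshev (Markov applied to the squared norm), for a suitable radius $r$ proportional to $\sqrt{\trace{\Sigma_\theta}/m}$ we get $\P(\|\widehat{\mu}_i - \mu_\theta\|_2 > r) \le 1/4$ (or some constant strictly below $1/2$). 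This is the only place the bounded-second-moment assumption enters, and it is exactly why finite second moments suffice.

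Next I would invoke the geometric-median boosting lemma. The key deterministic geometric fact (Lemma 2.1 in~\cite{minsker2015geometric}) is that if the geometric median $\widehat{\mu}$ of points $\{\widehat{\mu}_i\}$ lies farther than $C_\alpha r$ from $\mu_\theta$, then strictly more than a fraction $(1-\alpha)$ (for an appropriate $\alpha > 1/2$) of the points must lie outside the ball of radius $r$ around $\mu_\theta$. Combining this with the per-block bound, the event $\|\widehat{\mu} - \mu_\theta\|_2 > C_\alpha r$ forces an unusually large number of ``bad'' blocks; since the blocks are independent and each is bad with probability at most $1/4$, a Bernstein/Hoeffding bound on the count of bad blocks yields a failure probability that decays exponentially in $b$. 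Choosing $b \asymp \log(1/\delta)$ as in the algorithm makes this failure probability at most $\delta$, and bookkeeping the constants ($C_\alpha$, the per-block radius, and $b/n \approx 1/m$) produces the stated bound with the explicit constant $11$.

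The main obstacle is purely in tracking constants rather than in any conceptual difficulty: one must verify that the specific choice $b = 1 + \lfloor 3.5\log(1/\delta)\rfloor$, together with the constant from Markov's inequality and the geometric-median constant $C_\alpha$, collapses to the clean factor of $11$ and the $\log(1.4/\delta)$ inside the square root. I would handle this by citing the precise form of Minsker's theorem (which is already stated with these explicit constants for the bounded-variance case) and checking that substituting $\trace{\Sigma_\theta}/m$ for the per-block variance proxy and $b \asymp \log(1/\delta)$ reproduces the inequality, so that the lemma follows by direct specialization rather than a fresh argument.
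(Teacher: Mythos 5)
Your proposal is correct and follows essentially the same route as the paper: the paper gives no separate appendix proof for this lemma, stating only that it ``follows directly from the analysis of geometric median-of-means estimator of \citet{minsker2015geometric},'' i.e.\ exactly the direct specialization of Minsker's result (his Corollary 4.1, which already carries the constants $11$ and $\log(1.4/\delta)$ for the block choice $b = 1 + \lfloor 3.5\log(1/\delta)\rfloor$) that you describe. Your additional sketch of the internal two-layer argument (per-block Chebyshev bound plus the geometric-median boosting lemma) is an accurate account of how Minsker's proof works, but it is not needed beyond the citation.
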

\noindent The results of the Lemmas~\ref{lem:lai2016agnostic} and~\ref{lem:gmom_concentration} effectively ensure that under relatively mild moment assumptions we can robustly estimate multivariate mean vectors and in subsequent sections we show how to leverage these strong guarantees for robust parametric estimation.
\section{Consequences for Estimation under  $\epsilon$-Contaminated Model}
\label{sec:consequences_huber}
We now turn our attention to the examples introduced earlier, and present specific applications of Theorem~\ref{thm:main}, for parametric estimation under Huber contamination model. As shown in Lemma~\ref{lem:lai2016agnostic}, we need the added assumption that the true gradient distribution has bounded fourth moments, which suggests the need for additional assumptions. We make our assumptions explicit and defer the technical details to the Appendix. 
\subsection{Linear Regression}
\label{sec:linregtheory}

We assume that the covariates $x \in \real^p$ have bounded $8^{th}$-moments and the noise $w$ has bounded $4^{th}$ moments.  
\begin{theorem}[Robust Linear Regression]
\label{corr:linregress}
Consider the statistical model in equation~\eqref{eq:linregress}, and suppose that the number of samples $n$ is large enough such that $\gamma(\tiln,p,{\widetildelta}) < \frac{C_1\tau_\ell}{\norm{\Sigma}{2} \sqrt{\log p }}$ and the contamination level is such that 
\begin{align*}
\epsilon < \left(\frac{C_2\tau_\ell}{\norm{\Sigma}{2} \sqrt{\log p }}  - \gamma(\tiln,p,{\widetildelta}) \right)^2,
\end{align*} 
for some constants $C_1$ and $C_2$. Then, there are universal constants $C_3,C_4$, such that if Algorithm~\ref{algo:festimation} is initialized at $\theta^0$ with stepsize $\eta = 2/(\tau_u+\tau_\ell)$ and Algorithm~\ref{algo:huber_mean_estimation} as gradient estimator, then it returns iterates $\{\eparam^t\}_{t=1}^{T}$ such that for a contraction parameter $\kappa < 1$, with probability at least $1 - \delta$,
\begin{align}\label{eq:bound:regress}
\norm{\eparam^t - \tparam}{2}\leq \kappa^t \norm{\theta^0 - \tparam}{2} + \frac{C_3 \sigma \sqrt{\norm{\Sigma}{2} \log p}} {{1-\kappa}}  \Big( \epsilon^{\half} +  \gamma(\tiln,p,{\widetildelta}) \Big).  
\end{align}

\end{theorem}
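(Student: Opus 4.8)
The plan is to obtain this corollary as a direct instantiation of the general stability result, Theorem~\ref{thm:main}. Concretely, it suffices to show that the Huber gradient estimator of Algorithm~\ref{algo:huber_mean_estimation}, when applied to the squared loss, is a gradient estimator in the sense of~\eqref{eqn:grad_estimator} with
\begin{align*}
\alpha(\tiln,\widetildelta) = C\,(\sqrt{\epsilon}+\gamma)\sqrt{\log p}\,\|\Sigma\|_2, \qquad \beta(\tiln,\widetildelta) = C\,(\sqrt{\epsilon}+\gamma)\sqrt{\log p}\,\sigma\|\Sigma\|_2^{1/2},
\end{align*}
where $\gamma = \gamma(\tiln,p,\widetildelta,\epsilon)$ is as in~\eqref{eqn:gamma}, and then to verify the stability condition $\alpha(\tiln,\widetildelta) < \tau_\ell$. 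Once these are in hand, the two stated hypotheses on $\gamma$ and $\epsilon$ are exactly what force $\alpha(\tiln,\widetildelta)<\tau_\ell$ (so that $\kappa<1$), and substituting the expression for $\beta$ into~\eqref{eq:main_bounds} yields~\eqref{eq:bound:regress}.

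The heart of the argument is therefore to control the gradient-estimator error via Lemma~\ref{lem:lai2016agnostic}, which requires understanding the covariance $\Sigma_\theta$ of the per-sample gradient at a fixed $\theta$. Writing $\Delta = \theta-\tparam$ and using $y=\inprod{x}{\tparam}+w$, the per-sample gradient is $\nabla\poploss(\theta;(x,y)) = \inprod{x}{\Delta}x - wx$, whose mean is $\Sigma\Delta = \nabla\risk(\theta)$ as required. Splitting the centered gradient into a ``signal'' part $\inprod{x}{\Delta}x-\Sigma\Delta$ and a ``noise'' part $wx$, and using $(a+b)(a+b)^T \preceq 2aa^T+2bb^T$, I would bound
\begin{align*}
\|\Sigma_\theta\|_2 \;\lesssim\; \big\|\E[\inprod{x}{\Delta}^2 xx^T]\big\|_2 + \big\|\E[w^2 xx^T]\big\|_2 .
\end{align*}
The noise term equals $\sigma^2\|\Sigma\|_2$ by independence of $w$ and $x$. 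For the signal term, testing against a unit vector $v$, Cauchy--Schwarz gives $\E[\inprod{x}{\Delta}^2\inprod{x}{v}^2]\le \sqrt{\E\inprod{x}{\Delta}^4\,\E\inprod{x}{v}^4}$, and the bounded-moment assumption~\eqref{eqn:bounded_moment} on $x$ bounds each fourth moment by a constant multiple of the square of the corresponding second moment, yielding $\|\E[\inprod{x}{\Delta}^2 xx^T]\|_2 \lesssim \|\Sigma\|_2^2\|\Delta\|_2^2$. Combining and using $\sqrt{a+b}\le\sqrt a+\sqrt b$ gives $\|\Sigma_\theta\|_2^{1/2} \lesssim \|\Sigma\|_2\|\theta-\tparam\|_2 + \sigma\|\Sigma\|_2^{1/2}$, which is precisely the desired split into a $\|\theta-\tparam\|_2$-proportional piece and a constant piece.

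Before invoking Lemma~\ref{lem:lai2016agnostic} I must check its precondition that the gradient distribution has bounded fourth moments; since the gradient is built from the products $\inprod{x}{\Delta}x$ and $wx$, a fourth moment of the gradient involves up to eighth moments of $x$ and fourth moments of $w$, which is exactly why the theorem assumes bounded $8^{\text{th}}$ moments of $x$ and bounded $4^{\text{th}}$ moments of $w$. Substituting the covariance bound into the conclusion $\|\widehat\mu-\mu_\theta\|_2 \le C_1(\sqrt\epsilon+\gamma)\|\Sigma_\theta\|_2^{1/2}\sqrt{\log p}$ of Lemma~\ref{lem:lai2016agnostic} then reads off the claimed $\alpha(\tiln,\widetildelta)$ and $\beta(\tiln,\widetildelta)$. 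Finally, the hypotheses $\gamma < C_1\tau_\ell/(\|\Sigma\|_2\sqrt{\log p})$ and $\epsilon<(C_2\tau_\ell/(\|\Sigma\|_2\sqrt{\log p})-\gamma)^2$ give $\sqrt\epsilon+\gamma \lesssim \tau_\ell/(\|\Sigma\|_2\sqrt{\log p})$, hence $\alpha(\tiln,\widetildelta)<\tau_\ell$, so the estimator is stable and $\kappa<1$, and Theorem~\ref{thm:main} delivers~\eqref{eq:bound:regress} directly. The main obstacle is the covariance computation of the middle paragraph: obtaining the \emph{clean} separation of $\|\Sigma_\theta\|_2^{1/2}$ into an $\alpha$-part scaling with $\|\theta-\tparam\|_2$ and a $\beta$-part scaling with $\sigma$, and confirming that the moment hypotheses on $(x,w)$ suffice both for this bound and for the fourth-moment precondition of Lemma~\ref{lem:lai2016agnostic}; the convergence mechanics and the union bound over the $T$ sample-split batches are already encapsulated in Theorem~\ref{thm:main}.
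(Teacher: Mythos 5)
Your proposal is correct, and every ingredient in it appears in the paper, but you take a different organizational route from the paper's official proof. The paper proves Theorem~\ref{corr:linregress} as a corollary of the generalized linear model result, Theorem~\ref{thm:glm}: since linear regression with Gaussian noise is a GLM, the paper merely checks that the abstract smoothness constants of Section~\ref{sec:glm} specialize to $L_{\link,2k}=C_{2k}\norm{\Sigma}{2}^{k}$, $B_{\link,2k}=0$, $M_{\link,t,k}=1$ for $t=2$ and $M_{\link,t,k}=0$ for $t\geq 3$, and substitutes these into the bound \eqref{eq:bound:glm}; the gradient-covariance computation you carry out is done there once, in Lemma~\ref{lem:glmDist}, at the level of a general link function. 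Your proof instead performs that analysis directly for the squared loss: writing $\nabla\poploss(\theta;(x,y))=\inprod{x}{\Delta}x-wx$, splitting into a signal part and a noise part, and bounding $\norm{\Sigma_\theta}{2}\lesssim \sigma^2\norm{\Sigma}{2}+\norm{\Sigma}{2}^2\norm{\Delta}{2}^2$ via Cauchy--Schwarz and the bounded-moment assumption \eqref{eqn:bounded_moment}, then feeding $\norm{\Sigma_\theta}{2}^{1/2}$ into Lemma~\ref{lem:lai2016agnostic} to read off $\alpha(\tiln,\widetildelta)$ and $\beta(\tiln,\widetildelta)$, checking stability, and invoking Theorem~\ref{thm:main}. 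This matches, almost verbatim, the decomposition the paper itself uses inside Lemma~\ref{lem:glmDist} (with $\link'(\inprod{x}{\theta})-\link'(\inprod{x}{\tparam})=\inprod{x}{\Delta}$ playing the role of your signal term), so the two arguments are mathematically equivalent. The GLM detour buys the paper economy: one covariance lemma serves both linear and logistic regression, at the price of verifying the abstract smoothness conditions in each instance. Your direct argument is more self-contained and makes transparent exactly where each moment hypothesis enters --- the $8^{\text{th}}$ moments of $x$ and $4^{\text{th}}$ moments of $w$ are needed both for the Cauchy--Schwarz step and for the bounded-fourth-moment precondition of Lemma~\ref{lem:lai2016agnostic}, a point you correctly flag as the place requiring the full strength of the assumptions.
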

\noindent In the asymptotic setting when the number of samples $n \rightarrow \infty$ (and other parameters are held fixed), we see that for the Huber Gradient Estimator, the corresponding maximum allowed contamination level is 
\begin{align*}
\epsilon < \frac{C_1\tau_\ell^2}{\tau_u^2 \log p},
\end{align*} i.e. the better conditioned the covariance matrix $\Sigma$, the higher the contamination level we can tolerate. 

\paragraph{Plugin Estimation.} For linear regression, the true parameter can be written in closed form as $\tparam = \Exp[xx^T]\inv \Exp[x y]$. A non-iterative way to estimate $\tparam$ is to separately estimate $\Exp[xx^T]$ and  $\Exp[x y]$ using robust covariance and mean oracles respectively. Under the assumption that $x \sim \calN(0,\calI_p)$, one can reduce the problem to robustly estimating $\Exp[x y]$. Under this setting, we now present a result using our robust mean estimator (from Lemma~\ref{lem:lai2016agnostic}) to directly estimate $\Exp[x y]$. Recall, the definition of $\gamma$ in~\eqref{eqn:gamma}. We have the following result: 
\begin{corollary}\label{cor:linregress_plugin}
Consider the model in~\eqref{eq:linregress} with the covariates drawn from $\calN(0,\calI_p)$ and $w \in \calN(0,1)$, then there are universal constants $C_1,C_2$ such that if $\epsilon < C_1$, the plugin estimator $\eparam$ of  $\E[xy]$ described above with probability at least $1 - \delta$ satisfies:
\begin{align}\label{eq:bound:regress_plugin}
\norm{\eparam - \tparam}{2} & \leq   C_2 \sqrt{(1 + 2 \norm{\tparam}{2}^2) \log p} \Big( \epsilon^{\half} + \gamma(n,p,\delta,\epsilon) \Big). \end{align}
\end{corollary}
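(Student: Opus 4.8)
The plan is to recognize that, under the isotropic Gaussian design $x \sim \calN(0, I_p)$, the plugin estimator is \emph{exactly} the robust mean estimator of Algorithm~\ref{algo:huber_mean_estimation} applied to the product vectors $V_i := x_i y_i$, so that Corollary~\ref{cor:linregress_plugin} follows from Lemma~\ref{lem:lai2016agnostic} once the covariance and fourth-moment quantities of $V := xy$ have been controlled. Since $\E[xx^T] = I_p$, we have $\tparam = \E[xx^T]^{-1}\E[xy] = \E[xy]$, which is precisely why robustly estimating $\E[xy]$ alone suffices: the covariance factor $\E[xx^T]$ is known and need not be estimated. Thus $\eparam$ is the output of the robust mean estimator on $\{V_i\}$, and it remains to instantiate Lemma~\ref{lem:lai2016agnostic} for the distribution of $V$.

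First I would compute the mean and covariance of $V = x(\inprod{x}{\tparam} + w)$. Independence of $w$ and $x$ together with $\E[w] = 0$ gives $\E[V] = \E[xx^T]\tparam = \tparam$. For the second moment, write $\E[VV^T] = \E[y^2 xx^T] = \E[(\inprod{x}{\tparam})^2 xx^T] + \E[w^2]\,\E[xx^T]$, the cross term vanishing by independence. Using the Gaussian (Isserlis) identity $\E[(\inprod{x}{a})^2 xx^T] = \|a\|_2^2\, I_p + 2\, a a^T$ for $x \sim \calN(0,I_p)$ together with $\E[w^2] = 1$, I obtain
\begin{align*}
\mathrm{Cov}(V) = (1 + \|\tparam\|_2^2)\, I_p + \tparam \tparam^T,
\end{align*}
whose operator norm is $\|\mathrm{Cov}(V)\|_2 = 1 + 2\|\tparam\|_2^2$. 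This is exactly the source of the $\sqrt{1 + 2\|\tparam\|_2^2}$ factor appearing in the bound~\eqref{eq:bound:regress_plugin}.

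Next I would verify the bounded fourth-moment hypothesis required by Lemma~\ref{lem:lai2016agnostic}, i.e.\ condition~\eqref{eqn:bounded_moment} for $V$. For any unit vector $u$, $\inprod{V - \tparam}{u}$ is a centered polynomial of degree at most two in the jointly Gaussian vector $(x,w)$, so Gaussian hypercontractivity bounds its fourth moment by an absolute constant times the square of its second moment, uniformly in $u$ and in $\tparam$; hence $V$ has bounded fourth moments with a dimension-free constant. With $\epsilon < C_1$ placing us in the contamination regime of the lemma, applying Lemma~\ref{lem:lai2016agnostic} to $V$ and substituting $\|\mathrm{Cov}(V)\|_2^{\half} = \sqrt{1 + 2\|\tparam\|_2^2}$ yields
\begin{align*}
\|\eparam - \tparam\|_2 \leq C_2 \sqrt{(1 + 2\|\tparam\|_2^2)\log p}\,\big(\epsilon^{\half} + \gamma(n,p,\delta,\epsilon)\big),
\end{align*}
which is the claimed bound. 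Since the plugin estimator is non-iterative there is no sample splitting, so we use the full sample size $n$ and confidence $\delta$, which explains the appearance of $\gamma(n,p,\delta,\epsilon)$ rather than $\gamma(\tiln,p,\widetildelta)$.

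The main obstacle I anticipate is the fourth-moment verification: $V = xy$ is quadratic in the Gaussian coordinates and is therefore genuinely heavier-tailed than a Gaussian, so the content of this step is to confirm that the fourth-to-second moment ratio in~\eqref{eqn:bounded_moment} is controlled by a constant independent of both $p$ and $\|\tparam\|_2$. This dimension-free moment constant is precisely what makes Lemma~\ref{lem:lai2016agnostic} applicable cleanly and ensures the final bound hides no further dependence on $p$ or $\tparam$ beyond the explicit $\sqrt{(1+2\|\tparam\|_2^2)\log p}$ factor.
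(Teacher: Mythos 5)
Your proposal is correct and follows essentially the same route as the paper: identify $\tparam = \E[xy]$ under the isotropic design, compute $\cov(xy) = (1+\norm{\tparam}{2}^2)I_p + \tparam\tparam^T$ so that $\norm{\cov(xy)}{2} = 1+2\norm{\tparam}{2}^2$, verify the bounded fourth-moment condition~\eqref{eqn:bounded_moment} for $xy$, and then invoke Lemma~\ref{lem:lai2016agnostic} with the full sample size $n$ and confidence $\delta$. The only difference is cosmetic: where you invoke the Isserlis identity and Gaussian hypercontractivity of degree-two chaos to get a dimension-free fourth-moment constant, the paper computes $\E[ZZ^T]$ for $Z=(xx^T-I)\tparam$ entrywise and controls the fourth moment via the $C_r$ inequality and Cauchy--Schwarz, noting that both sides of~\eqref{eqn:bounded_moment} scale as $\norm{\tparam}{2}^4$ --- your hypercontractivity argument is, if anything, cleaner.
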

\noindent Comparing bounds \eqref{eq:bound:regress} and \eqref{eq:bound:regress_plugin}, we see that even when it does not have to estimate the covariance matrix, the error of the plugin estimator depends on $\norm{\tparam}{2}$, which would make the estimator vacuous if $\norm{\tparam}{2}$ scales with the dimension $p$. On the other hand, the asymptotic rate of our robust gradient estimator is independent of $\norm{\tparam}{2}$. This disadvantage of plugin estimation is inescapable, and is seen for instance in known minimax results for robust mean estimation \citep{chen2015robust} that show that the dependence on $\norm{\tparam}{2}$ is unavoidable for any oracle which estimates the mean of $x y$ in the $\epsilon$-contaminated setting. Next, we apply our estimator to generalized linear models.

\subsection{Generalized Linear Models}
\label{sec:glm}
Here we assume that the covariates have bounded 8$^{\text{th}}$ moments. Additionally, we assume smoothness of $\link'(\cdot)$ around $\tparam$. In particular, we assume that there exist universal constants $L_{\link,2k}$, $B_{2k}$ such that
\[ \E_{x} \left[ \left| \link'(\inprod{x}{\theta}) - \link'(\inprod{x}{\tparam}) \right|^{2k} \right] \leq L_{\link,2k} \norm{\true - \theta}{2}^{2k} + B_{\link,2k}, \ \ \text{for } k=1,2,4 \]
We also assume that $\E_{x}[\left| \link^{(t)}(\iprod{x}{\tparam}) \right| ^{k}] \leq M_{\link,t,k}$ where $\link^{(t)}(\cdot)$ is the $t^{th}$-derivative of $\link(\cdot)$. 
\begin{theorem}[Robust Generalized Linear Models]
\label{thm:glm}
Consider the statistical model in equation~\eqref{eqn:glm}, and suppose that the number of samples $n$ is large enough such that 
\begin{align*}
\gamma(\tiln,p,{\widetildelta}) <  \frac{ C_1 \tau_\ell}{\sqrt{\log p } \norm{\Sigma}{2}^\half [ L_{\link,4}^{\frac{1}{4}} + L_{\link,2}^{\half}]},
\end{align*} 
and the contamination level is such that,
\begin{align*}
\epsilon <  \left(  \frac{C_2 \tau_\ell}{\sqrt{\log p } \norm{\Sigma}{2}^\half [ L_{\link,4}^{\frac{1}{4}} + L_{\link,2}^{\half}]} - \gamma(\tiln,p,{\widetildelta}) \right)^2,
\end{align*} for some constants $C_1$ and $C_2$. Then, there are universal constants $C_3,C_4$, such that if Algorithm~\ref{algo:festimation} is initialized at $\theta^0$ with stepsize $\eta = 2/(\tau_u+\tau_\ell)$ and Algorithm~\ref{algo:huber_mean_estimation} as gradient estimator, then it returns iterates $\{\eparam^t\}_{t=1}^{T}$  such that with probability at least $1 - \delta$
\begin{align}\label{eq:bound:glm}
\norm{\eparam^t - \tparam}{2} \leq & \kappa^t \norm{\theta^0 - \tparam}{2} \nonumber \\
& + \frac{C_3 \sqrt{\log p } \norm{\Sigma}{2}^\half [ B_{\link,4}^{\frac{1}{4}} + B_{\link,2}^{\half} + c(\sigma)^\half M_{\link,2,2}^{\frac{1}{4}} + c(\sigma)^{\frac{3}{4}} M_{\link,4,1}^{\frac{1}{4}}]}{1 - \kappa} \Big( \epsilon^{\half} + \gamma(\tiln,p,{\widetildelta}) \Big),    
\end{align}
for some contraction parameter $\kappa < 1$.
\end{theorem}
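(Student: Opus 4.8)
The plan is to instantiate the general stability result, Theorem~\ref{thm:main}, so that the entire task reduces to exhibiting for the GLM loss a gradient estimator of the form~\eqref{eqn:grad_estimator} whose error functions $\alpha(\tiln,\widetildelta)$ and $\beta(\tiln,\widetildelta)$ match the claimed bound and satisfy the stability requirement $\alpha(\tiln,\widetildelta) < \tau_\ell$. First I would write the per-sample gradient of the loss~\eqref{eqn:glmfisherloss}, namely $\grad\poploss(\theta;(x,y)) = (\link'(\inprod{x}{\theta}) - y)x$, and record that under the exponential-family model $\E[y\mid x] = \link'(\inprod{x}{\tparam})$, so the population gradient $\grad\risk(\theta) = \E[(\link'(\inprod{x}{\theta}) - \link'(\inprod{x}{\tparam}))x]$ is exactly the mean $\mu_\theta$ of the gradient distribution $P_\theta$. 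The Huber gradient estimator of Algorithm~\ref{algo:huber_mean_estimation} applied to $\{\grad\poploss(\theta;z_i)\}$ then targets this mean, and Lemma~\ref{lem:lai2016agnostic} controls its error by $C_1(\sqrt{\epsilon} + \gamma(\tiln,p,\widetildelta))\norm{\Sigma_\theta}{2}^{\half}\sqrt{\log p}$. Hence everything hinges on (i) verifying that $P_\theta$ has bounded fourth moments so the lemma applies, and (ii) bounding $\norm{\Sigma_\theta}{2}^{\half}$ by an affine function of $\norm{\theta - \tparam}{2}$.

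For step (ii) I would decompose the scalar residual as $\link'(\inprod{x}{\theta}) - y = A + B$, where $A = \link'(\inprod{x}{\theta}) - \link'(\inprod{x}{\tparam})$ is a signal term and $B = \link'(\inprod{x}{\tparam}) - y$ is a noise term with $\E[B\mid x] = 0$. The vanishing conditional mean kills the cross terms, giving $\Sigma_\theta \preceq \E[A^2 xx^\top] + \E[B^2 xx^\top]$, so for any unit vector $v$ I would bound $\E[A^2\inprod{x}{v}^2]$ and $\E[B^2\inprod{x}{v}^2]$ by Cauchy--Schwarz against the covariate fourth moments $\E[\inprod{x}{v}^4] \leq C_4\norm{\Sigma}{2}^2$. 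The signal term is controlled by the smoothness assumptions $\E_x[|A|^{2k}] \leq L_{\link,2k}\norm{\tparam - \theta}{2}^{2k} + B_{\link,2k}$ for $k=1,2$, contributing the $L_{\link,4}^{1/4} + L_{\link,2}^{\half}$ factor to $\alpha$ and the $B_{\link,4}^{1/4} + B_{\link,2}^{\half}$ terms to $\beta$; the noise term I would handle through the exponential-family cumulant identity $\E[B^4\mid x] = 3c(\sigma)^2\link''(\inprod{x}{\tparam})^2 + c(\sigma)^3\link^{(4)}(\inprod{x}{\tparam})$, which together with the assumed bounds $\E_x[|\link^{(t)}(\inprod{x}{\tparam})|^k] \leq M_{\link,t,k}$ produces exactly the $c(\sigma)^{\half}M_{\link,2,2}^{1/4} + c(\sigma)^{3/4}M_{\link,4,1}^{1/4}$ contributions to $\beta$. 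Taking square roots and using subadditivity of the fourth root yields $\norm{\Sigma_\theta}{2}^{\half} \lesssim \norm{\Sigma}{2}^{\half}\big([L_{\link,4}^{1/4} + L_{\link,2}^{\half}]\norm{\theta - \tparam}{2} + B_{\link,4}^{1/4} + B_{\link,2}^{\half} + c(\sigma)^{\half}M_{\link,2,2}^{1/4} + c(\sigma)^{3/4}M_{\link,4,1}^{1/4}\big)$.

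For step (i) the bounded-fourth-moment precondition of Lemma~\ref{lem:lai2016agnostic} requires controlling $\E[\inprod{\grad\poploss - \mu_\theta}{v}^4]$ relative to the square of its second moment; I would verify this by the same signal/noise split, now invoking the eighth-moment ($k=4$) smoothness assumption on $\link'$ together with the bounded eighth moments of the covariates, which is precisely why the hypotheses are stated for $k=1,2,4$. With (i) and (ii) in hand, reading off $\alpha(\tiln,\widetildelta) = C(\sqrt{\epsilon} + \gamma(\tiln,p,\widetildelta))\sqrt{\log p}\,\norm{\Sigma}{2}^{\half}[L_{\link,4}^{1/4} + L_{\link,2}^{\half}]$ and the matching $\beta(\tiln,\widetildelta)$, the stated upper bounds on $\gamma(\tiln,p,\widetildelta)$ and on $\epsilon$ are exactly the two conditions forcing $\alpha(\tiln,\widetildelta) < \tau_\ell$, i.e.\ stability; the conclusion~\eqref{eq:bound:glm} then follows by direct application of Theorem~\ref{thm:main} with $\eta = 2/(\tau_u + \tau_\ell)$. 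The main obstacle throughout is the moment bookkeeping in steps (ii) and (i): cleanly separating the $\norm{\theta - \tparam}{2}$-dependent signal part (which must land in $\alpha$) from the constant noise part (which must land in $\beta$), and in particular routing the GLM response noise through the exponential-family cumulant identities so that the $c(\sigma)$ and $M_{\link,t,k}$ dependencies emerge in the claimed form.
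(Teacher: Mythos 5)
Your proposal is correct and follows essentially the same route as the paper's proof: reduce to Theorem~\ref{thm:main}, bound $\norm{\Sigma_\theta}{2}$ via the signal/noise split of the residual $\link'(\inprod{x}{\theta}) - y$ using the smoothness constants $L_{\link,2k}, B_{\link,2k}$ and the exponential-family cumulant identities for the $M_{\link,t,k}$ terms, verify bounded fourth moments of the gradient distribution via the $k=4$ (eighth-moment) assumptions so that Lemma~\ref{lem:lai2016agnostic} applies, and read off $\alpha(\tiln,\widetildelta)$ and $\beta(\tiln,\widetildelta)$ so that the stability condition $\alpha < \tau_\ell$ yields exactly the stated restrictions on $\gamma$ and $\epsilon$. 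The only cosmetic difference is that you kill the cross term $\E[AB\,xx^\top]$ by conditioning on $x$, whereas the paper bounds $\E[(u(x^\top\theta)-y)^4]$ wholesale via the $C_r$ inequality; both yield the same bound up to constants.
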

\noindent Note that for the case of linear regression with gaussian noise, it is relatively straightforward to see that $L_{\link,2k} = C_{2k} \norm{\Sigma}{2}^{k}$, $B_{\link,2k} = 0$,   $M_{\link,t,k} = 1 \ \ \forall (t \geq 2,k \in \calN)$ and $M_{\link,t,k} = 0 \ \ \forall (t \geq 3,k \in \calN)$ under the assumption of bounded $8^{th}$ moments of the covariates; which essentially leads to an equivalence between Theorem~\ref{corr:linregress} and Theorem~\ref{thm:glm} for this setting. In the following section, we instantiate the above Theorem for logistic regression and compare and contrast our results to other existing methods.

\subsubsection{Logistic Regression} 
\label{sec:huber_logistic}
By observing that $\link^{(t)}(\cdot)$ is bounded for logistic regression for all $t \geq 1$, we can see that $L_{\link,2k} = 0$, and that there exists a universal constant $C>0$ such that $B_{\link,2k} < C$ and $M_{\link,t,k} < C \ \ \forall (t \geq 1, k \in \calN)$.

\begin{corollary}[Robust Logistic Regression]
\label{cor:logistic}
Consider the model in equation\eqref{eq:logisticregress}, then there are universal constants $C_1,C_2$, such that if $\epsilon < C_1$, then Algorithm~\ref{algo:festimation} initialized at $\theta^0$ with stepsize $\eta = 2/(\tau_u+\tau_\ell)$ and Algorithm~\ref{algo:huber_mean_estimation} as gradient estimator, returns iterates $\{\eparam^t\}_{t=1}^{T}$,  such that with probability at least $1 - \delta$
\begin{align}
\norm{\eparam^t - \tparam}{2}\leq \kappa^t \norm{\theta^0 - \tparam}{2} + \frac{C_2 \sqrt{\norm{\Sigma}{2} \log p}} {{1-\kappa}}  \Big( \epsilon^{\half} +  \gamma(\tiln,p,{\widetildelta}) \Big),    
\end{align}   
for some contraction parameter $\kappa < 1$.
\end{corollary}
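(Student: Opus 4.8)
The plan is to obtain Corollary~\ref{cor:logistic} as a direct specialization of Theorem~\ref{thm:glm}, so the real work reduces to identifying the problem-specific constants for logistic regression and tracking how they collapse the conditions and the error bound of the general GLM result. The starting observation is that $\link(t) = \log(1+\exp(t))$ has all derivatives uniformly bounded: $\link'(t) = 1/(1+e^{-t}) \in (0,1)$, $\link''(t) = \link'(t)(1-\link'(t)) \in (0,1/4]$, and every higher derivative is a bounded polynomial in $\link'(t)$. Consequently $|\link'(a) - \link'(b)| \le 1$ for all $a,b$, which lets me take $L_{\link,2k} = 0$ and $B_{\link,2k} \le C$ in the GLM smoothness assumption, while $|\link^{(t)}| \le C_t$ gives $M_{\link,t,k} \le C$; together with $c(\sigma)=1$ these are exactly the constant values recorded just before the corollary statement.

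Before invoking Theorem~\ref{thm:glm} I would check that its (and Lemma~\ref{lem:lai2016agnostic}'s) hypotheses genuinely hold here. The per-sample gradient is $\grad \poploss(\theta;(x,y)) = x\,(\link'(\inprod{x}{\theta}) - y)$, whose scalar multiplier $\link'(\inprod{x}{\theta}) - y$ lies in $[-1,1]$; hence for any unit vector $v$ the fourth moment $\E[\inprod{\grad\poploss(\theta;(x,y)) - \mu_\theta}{v}^4]$ is controlled by $\E[\inprod{x}{v}^4]$, which the assumed bounded $8^{\text{th}}$ (hence $4^{\text{th}}$) moments of the covariates bound by a constant multiple of $(\E[\inprod{x}{v}^2])^2$. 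Thus $P_\theta$ has bounded fourth moments as required, and $\norm{\Sigma_\theta}{2}$ is uniformly bounded over $\Theta$. For strong convexity I would use that the Hessian $\grad^2\risk(\theta) = \E[\link'(\inprod{x}{\theta})(1-\link'(\inprod{x}{\theta}))\,xx^\top]$ has minimum eigenvalue bounded away from $0$ only when $\Theta$ is taken bounded, as already stipulated for logistic regression; this supplies the positive $\tau_\ell$ used throughout.

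With these constants in hand I would instantiate Lemma~\ref{lem:lai2016agnostic} to read off the gradient-estimator functions $\alpha(\tiln,\widetildelta)$ and $\beta(\tiln,\widetildelta)$ of~\eqref{eqn:grad_estimator}. Because $L_{\link,2k}=0$, the portion of the gradient covariance that grows with $\norm{\theta - \tparam}{2}$ disappears, so $\alpha$ collapses to (essentially) zero and the stability condition $\alpha(\tiln,\widetildelta) < \tau_\ell$ holds automatically; the sample-size requirement on $\gamma(\tiln,p,\widetildelta)$ in Theorem~\ref{thm:glm}, which carried $[L_{\link,4}^{1/4}+L_{\link,2}^{\half}]$ in its denominator, becomes vacuous. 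The only surviving restriction is the universal upper bound $\epsilon < C_1$ needed for the robust mean estimator of Lemma~\ref{lem:lai2016agnostic} to be valid. The error floor $\beta$ is then governed by $B_{\link,2k}$, $M_{\link,t,k}$ and $c(\sigma)=1$, all universal constants, so the bracket $[B_{\link,4}^{1/4} + B_{\link,2}^{\half} + c(\sigma)^{\half}M_{\link,2,2}^{1/4} + c(\sigma)^{3/4}M_{\link,4,1}^{1/4}]$ in~\eqref{eq:bound:glm} reduces to a constant, leaving the claimed bound $\norm{\eparam^t - \tparam}{2} \le \kappa^t\norm{\theta^0-\tparam}{2} + C_2\sqrt{\norm{\Sigma}{2}\log p}\,(\epsilon^{\half}+\gamma(\tiln,p,\widetildelta))/(1-\kappa)$. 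Finally $\kappa < 1$ follows from stability together with the step-size choice $\eta = 2/(\tau_u+\tau_\ell)$, exactly as in Theorem~\ref{thm:main}.

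The one point that needs genuine care---and which I regard as the main obstacle---is the $L_{\link,2k}\to 0$ degeneracy: the conditions of Theorem~\ref{thm:glm} are displayed with $[L_{\link,4}^{1/4}+L_{\link,2}^{\half}]$ in a denominator, so I cannot literally substitute $L=0$. Instead I would retrace the derivation of $\alpha$ and $\beta$ for this gradient distribution directly, splitting $\link'(\inprod{x}{\theta}) - y = [\link'(\inprod{x}{\theta}) - \link'(\inprod{x}{\tparam})] + [\link'(\inprod{x}{\tparam}) - y]$ and checking that, because the first bracket is uniformly bounded (not merely $O(\norm{\theta-\tparam}{2})$), its entire contribution can be absorbed into the constant $\beta$ rather than into $\alpha$. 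Verifying that this reassignment still yields $\alpha < \tau_\ell$ and the stated form of $\beta$, uniformly over the compact $\Theta$, is the crux; the remaining work is the routine bookkeeping of plugging $\alpha,\beta$ into Theorem~\ref{thm:main}.
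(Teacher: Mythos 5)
Your proposal is correct and follows essentially the same route as the paper: the paper also obtains Corollary~\ref{cor:logistic} by noting that boundedness of $\link^{(t)}$ for logistic regression gives $L_{\link,2k}=0$, $B_{\link,2k}<C$, $M_{\link,t,k}<C$, $c(\sigma)=1$, and then specializing Theorem~\ref{thm:glm} (whose proof combines the gradient-covariance bound of Lemma~\ref{lem:glmDist}, the estimator guarantee of Lemma~\ref{lem:lai2016agnostic}, and Theorem~\ref{thm:main}). Your handling of the $L_{\link,2k}=0$ degeneracy is exactly how the paper's argument resolves it implicitly: the sample-size and contamination conditions in Theorem~\ref{thm:glm} exist only to force the stability condition $\alpha(\tiln,\widetildelta)<\tau_\ell$, which holds automatically when $\alpha=0$, leaving only the universal requirement $\epsilon<C_1$ and the constant error floor $\beta$.
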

\noindent Under the restrictive assumption that $x \sim \calN(0,\calI_p)$,  \citet{du2017computationally} exploited Stein's trick to derive a plugin estimator for logistic regression. However, similar to the linear regression, the error of the plugin estimator scales with $\norm{\tparam}{2}$, which is avoided in our robust gradient descent algorithm. We also note that our algorithm extends to general covariate distributions.

\subsection{Exponential Family}
Here we assume that the random vector $\phi(z), z \sim P$ has bounded 4$^{\text{th}}$ moments.

\begin{theorem}[Robust Exponential Family]
\label{cor:expFamily_Iterative}
 Consider the model in equation\eqref{eq:expfamily}, then there are universal constants $C_1,C_2$, such that if $\epsilon < C_1$, then Algorithm~\ref{algo:festimation} initialized at $\theta^0$ with stepsize $\eta = 2/(\tau_u+\tau_\ell)$ and Algorithm~\ref{algo:huber_mean_estimation} as gradient oracle, returns iterates $\{\eparam^t\}_{t=1}^{T}$,  such that with probability at least $1 - \delta$
\begin{align}
 \norm{\eparam^t - \tparam}{2}\leq \kappa^t \norm{\theta^0 - \tparam}{2}  + \frac{C_2  \sqrt{\tau_u \log p }}{1-\kappa}  \Big( \epsilon^{\half} + \gamma(\tiln,p,\widetildelta) \Big),   
 \end{align}
for some contraction parameter $\kappa < 1$.
\end{theorem}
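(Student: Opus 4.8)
The central simplification in the exponential family case is that the per-sample gradient has an especially clean structure. From the loss in~\eqref{eq:expfamily} we have $\grad \poploss(\theta; z) = -\phi(z) + \grad A(\theta)$, so that the population gradient is $\grad \risk(\theta) = \grad A(\theta) - \E_{z \sim P}[\phi(z)] = \grad A(\theta) - \grad A(\tparam)$, which vanishes at $\tparam$ as it must. The key point I would stress is that the only randomness in the per-sample gradient lives in the term $-\phi(z)$, while $\grad A(\theta)$ is a deterministic $\theta$-dependent shift. Consequently the gradient distribution $P_\theta$ is merely a translate of the law of $-\phi(z)$, and its covariance $\Sigma_\theta = \mathrm{Cov}_P(\phi(z)) = \grad^2 A(\tparam)$ is \emph{independent of $\theta$}. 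This is precisely the feature that makes the exponential family statement cleaner than the linear regression and GLM theorems, where $\Sigma_\theta$ grows with $\|\theta - \tparam\|_2$.

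From here the plan is to verify the hypotheses of Lemma~\ref{lem:lai2016agnostic} and read off the gradient-estimator constants. First, $\|\Sigma_\theta\|_2 = \lambda_{\max}(\grad^2 A(\tparam)) \le \tau_u$ by the smoothness assumption $\grad^2 A(\theta) \preceq \tau_u I$. Second, the assumed bounded fourth moments of $\phi(z)$ transfer verbatim to the gradient distribution, since a deterministic translation does not alter centered moments; this supplies the bounded-fourth-moment requirement of the Lemma. Applying Lemma~\ref{lem:lai2016agnostic} with $\|\Sigma_\theta\|_2 \le \tau_u$ then yields, with probability at least $1-\delta$ at each fixed $\theta$,
\[
\| g(\theta;\dataset,\delta) - \grad \risk(\theta)\|_2 \le C_1\bigl(\sqrt{\epsilon} + \gamma(\tiln,p,\widetildelta)\bigr)\sqrt{\tau_u \log p},
\]
with no dependence on $\|\theta - \tparam\|_2$.

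Matching this against the gradient-estimator definition~\eqref{eqn:grad_estimator}, I would take $\alpha(\tiln,\widetildelta) = 0$ and $\beta(\tiln,\widetildelta) = C_1(\sqrt{\epsilon} + \gamma(\tiln,p,\widetildelta))\sqrt{\tau_u \log p}$. Since $\alpha = 0 < \tau_\ell$, the stability condition holds automatically for any $\phi \in [0,\tau_\ell)$, so no upper bound on $\epsilon$ is forced by stability — the hypothesis $\epsilon < C_1$ is needed only to keep the robust mean-estimation guarantee of Lemma~\ref{lem:lai2016agnostic} valid. With the prescribed step size $\eta = 2/(\tau_\ell + \tau_u)$ and $\alpha = 0$, the contraction parameter collapses to $\kappa = \sqrt{1 - 4\tau_\ell\tau_u/(\tau_\ell+\tau_u)^2} = (\tau_u - \tau_\ell)/(\tau_u + \tau_\ell) < 1$. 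Invoking Theorem~\ref{thm:main} with these $\alpha$, $\beta$, and $\kappa$ gives exactly the claimed bound, with $C_2$ absorbing $C_1$.

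I do not expect a genuine obstacle here: once the constant-covariance structure is identified, the argument is essentially bookkeeping and a direct application of the two general results. The one place I would be careful is the moment transfer and the covariance identification $\Sigma_\theta = \grad^2 A(\tparam)$, together with confirming that the union bound over the $T$ sample-split iterations (with $\widetildelta = \delta/T$, $\tiln = \lfloor n/T\rfloor$) is what converts the per-iteration high-probability statement into the overall $1-\delta$ guarantee already packaged inside Theorem~\ref{thm:main}. The contrast worth highlighting in the write-up is that the absence of an $\alpha \|\theta - \tparam\|_2$ term — a structural consequence of the sufficient statistic $\phi(z)$ being free of $\theta$ — is exactly why the exponential family result requires only $\epsilon < C_1$ rather than the more delicate conditioning-dependent thresholds appearing in Theorems~\ref{corr:linregress} and~\ref{thm:glm}.
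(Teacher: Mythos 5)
Your proposal is correct and follows essentially the same route as the paper's own proof: it characterizes the gradient distribution exactly as in Lemma~\ref{lem:expFamilyGrad} (mean $\grad A(\theta) - \grad A(\tparam)$, $\theta$-independent covariance $\grad^2 A(\tparam) \preceq \tau_u I$, fourth moments inherited from $\phi(z)$), then applies Lemma~\ref{lem:lai2016agnostic} with $\alpha = 0$ and $\beta = C(\sqrt{\epsilon} + \gamma(\tiln,p,\widetildelta))\sqrt{\tau_u \log p}$, and concludes via Theorem~\ref{thm:main}. Your added observations---that stability is automatic because $\alpha = 0$, that $\kappa$ collapses to $(\tau_u-\tau_\ell)/(\tau_u+\tau_\ell)$, and that this explains why only $\epsilon < C_1$ is needed here versus the conditioning-dependent thresholds in the regression and GLM cases---are accurate and consistent with the paper's discussion.
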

\paragraph{Plugin Estimation.} Since the true parameter $\theta^*$ is the minimizer of the negative log-likelihood, we know that $\Exp[\grad \poploss (\tparam)] = 0$, which implies that $\grad A(\tparam) =   \Exp_{\tparam}[\phi(Z)]$. This shows that the true parameter $\tparam$ can be obtained by inverting the $\grad A$ operator, whenever possible. In the robust estimation framework, we can use a robust mean of the sufficient statistics to estimate $\Exp_{\tparam}[\phi(Z)]$. We instantiate this estimator using the mean estimator of \citep{lai2016agnostic} to estimate $\Exp_{\tparam}[\phi(Z)]$:

\begin{corollary}\label{cor:expFamily_Plugin}
Consider the model in equation\eqref{eq:expfamily}, then there are universal constants $C_1,C_2$ such that if $\epsilon < C_1$, then \citep{lai2016agnostic} returns an estimate $\emu$ of  $\E[\phi(z)]$, such that with probability at least $1 - \delta$
\begin{align}\label{eq:bound:expFamily_plugin}
\norm{\calP_{\Theta} \left[ (\grad A)\inv \emu \right] - \tparam}{2} & \leq   C_2 \frac{\sqrt{\tau_u \log p}}{\tau_\ell} \Big( \epsilon^{\half} + \gamma(n,p,\delta,\epsilon) \Big),
\end{align} 
where $\calP_{\Theta} \left[ \theta \right] =  \argmin_{y \in \Theta} \norm{y - \theta}{2}^2$ is the projection operator onto the feasible set $\Theta$.
\end{corollary}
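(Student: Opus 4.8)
The plan is to reduce the corollary to the mean-estimation guarantee of Lemma~\ref{lem:lai2016agnostic} applied to the sufficient statistics $\phi(z)$, and then transport the resulting error from mean-space back to parameter-space using the regularity of the link map $\grad A$. First I would record the population identity underlying the plugin construction: since $\tparam$ minimizes the risk we have $\E[\grad \poploss(\tparam; z)] = 0$, and because $\grad \poploss(\theta; z) = -\phi(z) + \grad A(\theta)$ this yields $\grad A(\tparam) = \E[\phi(z)] =: \mu$, so that $\tparam = (\grad A)\inv(\mu)$ while the plugin estimator is exactly $\calP_{\Theta}[(\grad A)\inv(\emu)]$, where $\emu$ is the robust mean estimate of $\mu$.

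Next I would establish that $(\grad A)\inv$ is $1/\tau_\ell$-Lipschitz. This is a direct consequence of the strong-convexity/smoothness hypothesis $\tau_\ell \leq \grad^2 A(\theta) \leq \tau_u$: the gradient map $\grad A$ is $\tau_\ell$-strongly monotone, so for any $\theta_1,\theta_2$ the inequality $\inprod{\grad A(\theta_1)-\grad A(\theta_2)}{\theta_1-\theta_2} \geq \tau_\ell \norm{\theta_1-\theta_2}{2}^2$ together with Cauchy--Schwarz gives $\norm{\grad A(\theta_1)-\grad A(\theta_2)}{2} \geq \tau_\ell \norm{\theta_1-\theta_2}{2}$, which inverts to the claimed Lipschitz bound (strong monotonicity also ensures injectivity, hence invertibility on the relevant domain). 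Combining this with the non-expansiveness of the Euclidean projection onto the convex set $\Theta$, and using $\tparam \in \Theta$ so that $\calP_{\Theta}[\tparam] = \tparam$, gives
\begin{align*}
\norm{\calP_{\Theta}[(\grad A)\inv \emu] - \tparam}{2} \leq \norm{(\grad A)\inv \emu - (\grad A)\inv \mu}{2} \leq \frac{1}{\tau_\ell}\norm{\emu - \mu}{2}.
\end{align*}

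It then remains to bound $\norm{\emu - \mu}{2}$. Since $\phi(z)$ has bounded fourth moments by assumption, Lemma~\ref{lem:lai2016agnostic} applies with the relevant distribution taken to be that of $\phi(z)$, which equals $-\grad\poploss(\tparam;z)$ up to the constant shift $\grad A(\tparam)$ and hence shares its covariance $\Sigma := \mathrm{Cov}_P(\phi(z))$; this gives $\norm{\emu-\mu}{2} \leq C_1\big(\epsilon^{\half}+\gamma(n,p,\delta,\epsilon)\big)\norm{\Sigma}{2}^{\half}\sqrt{\log p}$. The final ingredient is the exponential-family identity $\mathrm{Cov}_{\tparam}(\phi(Z)) = \grad^2 A(\tparam)$, which forces $\norm{\Sigma}{2} \leq \tau_u$; substituting this into the two displayed bounds yields the stated estimate with $C_2$ absorbing $C_1$.

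I expect no deep obstacle, as each step is short and the architecture is dictated by the plugin identity. The most delicate points are bookkeeping: one must confirm that the covariance of the sufficient statistics coincides with $\grad^2 A(\tparam)$ and is therefore controlled by $\tau_u$, and check that the fourth-moment hypothesis on $\phi(z)$ transfers cleanly so that Lemma~\ref{lem:lai2016agnostic} may be invoked verbatim. The Lipschitz-inverse estimate and the projection argument are entirely standard.
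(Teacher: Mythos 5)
Your proposal is correct and follows essentially the same route as the paper's proof: the plugin identity $\grad A(\tparam) = \E[\phi(z)]$, non-expansiveness of $\calP_{\Theta}$, a $1/\tau_\ell$-Lipschitz bound for $(\grad A)\inv$, and then Lemma~\ref{lem:lai2016agnostic} applied to $\phi(z)$ with the identity $\mathrm{Cov}_{\tparam}(\phi(z)) = \grad^2 A(\tparam) \preceq \tau_u I$. The only difference is how the Lipschitz step is justified: the paper passes through the convex conjugate, writing $(\grad A)\inv = \grad A^*$ and invoking the strong/smooth duality theorem of \cite{kakade2009applications}, whereas you derive the same constant directly from $\tau_\ell$-strong monotonicity of $\grad A$ plus Cauchy--Schwarz; your argument is more elementary and equally valid, so this is a cosmetic rather than structural deviation.
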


\subsection{Discussion and Limitations}

In the asymptotic setting of $n \rightarrow \infty$, Algorithm~\ref{algo:festimation} with Algorithm~\ref{algo:huber_mean_estimation} as gradient estimator converges to a point $\eparam$ such that $\norm{\eparam - \tparam}{2} = O(\sqrt{\epsilon \log p})$. Hence, our error scales only logarithmically with the dimension $p$. This dependency on the dimension $p$ is a facet of using the estimator from \citet{lai2016agnostic} for gradient estimation. Using better oracles will only improve our performance. Next, we would like to point to the difference in the maximum allowed contamination $\epsilon^*$ between the three models. For logistic regression and exponential family, $\epsilon^* < C_1$, while for linear regression, $\epsilon^* < \frac{C_1\tau_\ell^2}{\tau_u^2 \log p}$. These differences are in large part due to differing variances of the gradients, which naturally depend on the underlying risk function. 
This scaling of the variance of gradients for linear regression also provides insights into the limitations of our robust gradient descent approach in Algorithm~\ref{algo:festimation}. In the Appendix, we provide an upper bound for the contamination level $\epsilon$ based on the initialization point $\theta^0$, above which, Algorithm~\ref{algo:festimation} would not work for any gradient estimator.

\section{Consequences for Heavy-Tailed Estimation}
\label{sec:consequences_heavy}
In this section we present specific applications of Theorem~\ref{thm:main} for parametric estimation, under heavy tailed setting.  The proofs of the results can be found in the Appendix. 
\subsection{Linear Regression}
We first consider the linear regression model described in Equation~\eqref{eq:linregress}. We assume that the covariates $x \in \real^p$ have bounded $4^{th}$-moments and the noise $w$ has bounded $2^{nd}$ moments. This assumption is needed to bound the error in the gradient estimator (see Lemma~\ref{lem:gmom_concentration}).

\begin{theorem}[Heavy Tailed Linear Regression]
\label{corr:heavy_linregress}  Consider the statistical model in equation~\eqref{eq:linregress}.  There are universal constants $C_1, C_2 >0$ such that if $$\tiln > \frac{C_1\tau_u^2}{\tau_l^2}p\log(1/\tildelta)$$ and if Algorithm~\ref{algo:festimation} is initialized at $\theta^0$ with stepsize $\eta = 2/(\tau_u+\tau_\ell)$ and Algorithm~\ref{algo:heavy_tailed_estimation} as gradient estimator, then it returns iterates $\{\eparam^t\}_{t=1}^{T}$  such that with probability at least $1 - \delta$
\begin{align}\label{eq:bound:heavy_regress}
\norm{\eparam^t - \tparam}{2}\leq \kappa^t \norm{\theta^0 - \tparam}{2} + \frac{C_2 \sigma \sqrt{\norm{\Sigma}{2}}} {{1-\kappa}}  \left(\sqrt{\frac{p\log(1/\tildelta)}{\tiln}} \right),    
\end{align}
for some contraction parameter $\kappa < 1$.
\end{theorem}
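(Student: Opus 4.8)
The plan is to reduce the statement to the general stability theorem (Theorem~\ref{thm:main}) by checking that the geometric-median-of-means gradient estimator of Algorithm~\ref{algo:heavy_tailed_estimation}, specialized to the squared loss, is a \emph{stable} gradient estimator in the sense of Section~\ref{sec:grad}, with explicitly controlled functions $\alpha(\tiln,\tildelta)$ and $\beta(\tiln,\tildelta)$. The only model-specific computation needed is a bound on $\trace{\Sigma_\theta}$, the trace of the covariance of the gradient distribution $\graddist$, which we then feed into the concentration inequality of Lemma~\ref{lem:gmom_concentration}.

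First I would compute $\grad\poploss(\theta;(x,y)) = (\inprod{x}{\theta} - y)x$ and substitute $y = \inprod{x}{\true} + w$ to obtain $(\inprod{x}{\Delta} - w)x$, where $\Delta := \theta - \true$; its mean is $\Sigma\Delta = \grad\risk(\theta)$, consistent with the risk recorded in Section~\ref{sec:illex}. Bounding the trace by the raw second moment, $\trace{\Sigma_\theta} \le \E\norm{\grad\poploss(\theta;(x,y))}{2}^2$, and using that $w$ is independent of $x$ with mean zero and variance $\sigma^2$, this second moment decomposes cleanly as $\E[\inprod{x}{\Delta}^2\norm{x}{2}^2] + \sigma^2\,\E\norm{x}{2}^2$.

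Next I would bound the two terms using the hypotheses that $x$ has bounded fourth moments and $w$ has bounded second moment. The noise term is immediate: $\E\norm{x}{2}^2 = \trace{\Sigma} \le p\norm{\Sigma}{2}$. For the covariate term, Cauchy--Schwarz gives $\E[\inprod{x}{\Delta}^2\norm{x}{2}^2] \le (\E[\inprod{x}{\Delta}^4])^{1/2}(\E\norm{x}{2}^4)^{1/2}$; the directional moment bound~\eqref{eqn:bounded_moment} yields $\E[\inprod{x}{\Delta}^4] \le C_4(\Delta^T\Sigma\Delta)^2 \le C_4\norm{\Sigma}{2}^2\norm{\Delta}{2}^4$, while a coordinatewise application together with Cauchy--Schwarz on $\E[x_i^2 x_j^2]$ gives $\E\norm{x}{2}^4 \le C_4\,\trace{\Sigma}^2 \le C_4 p^2\norm{\Sigma}{2}^2$. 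Combining, $\trace{\Sigma_\theta} \lesssim p\norm{\Sigma}{2}^2\norm{\Delta}{2}^2 + \sigma^2 p\norm{\Sigma}{2}$.

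Substituting into Lemma~\ref{lem:gmom_concentration} and splitting the square root via $\sqrt{a+b} \le \sqrt a + \sqrt b$ produces exactly the required form $\norm{g(\theta;\calZ_t,\tildelta) - \grad\risk(\theta)}{2} \le \alpha(\tiln,\tildelta)\norm{\Delta}{2} + \beta(\tiln,\tildelta)$ with $\alpha(\tiln,\tildelta) \lesssim \norm{\Sigma}{2}\sqrt{p\log(1/\tildelta)/\tiln}$ and $\beta(\tiln,\tildelta) \lesssim \sigma\sqrt{\norm{\Sigma}{2}}\sqrt{p\log(1/\tildelta)/\tiln}$. The sample-size hypothesis $\tiln > C_1\tau_u^2 p\log(1/\tildelta)/\tau_\ell^2$, combined with $\norm{\Sigma}{2}\le\tau_u$, forces $\alpha(\tiln,\tildelta) < \tau_\ell/2 < \tau_\ell$, so the estimator is stable; Theorem~\ref{thm:main} then delivers the stated bound, with $\beta(\tiln,\tildelta)$ supplying the second term. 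Because sample splitting makes $\theta^t$ independent of the fresh batch $\calZ_t$, the fixed-$\theta$ guarantee of Lemma~\ref{lem:gmom_concentration} applies conditionally, and a union bound over the $T$ iterations at confidence $\tildelta = \delta/T$ yields the overall probability $1-\delta$. I expect the main obstacle to be the covariate fourth-moment bookkeeping---obtaining $\E\norm{x}{2}^4$ and $\E[\inprod{x}{\Delta}^2\norm{x}{2}^2]$ from the directional condition~\eqref{eqn:bounded_moment} with dimension-correct constants---rather than the invocation of Theorem~\ref{thm:main}, which is mechanical once $\alpha$ and $\beta$ are in hand.
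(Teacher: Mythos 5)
Your proposal is correct and follows essentially the same route as the paper: bound the covariance of $\grad\poploss(\theta)$ for the squared loss, feed it into Lemma~\ref{lem:gmom_concentration} to extract $\alpha(\tiln,\tildelta)$ and $\beta(\tiln,\tildelta)$, verify stability from the sample-size condition using $\norm{\Sigma}{2}\leq\tau_u$, and invoke Theorem~\ref{thm:main}. The only (cosmetic) difference is that you bound $\trace{\Sigma_\theta}$ directly via the raw second moment $\E\norm{\grad\poploss(\theta)}{2}^2$ and Cauchy--Schwarz, whereas the paper (Lemma~\ref{lem:heavyTailedRegressionDist}) bounds the operator norm $\norm{\cov(\grad\poploss(\theta))}{2}$ and then uses $\trace{\Sigma_\theta}\leq p\,\norm{\Sigma_\theta}{2}$; both yield the same order $p\norm{\Sigma}{2}^2\norm{\Delta}{2}^2 + \sigma^2 p\norm{\Sigma}{2}$.
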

\subsection{Generalized Linear Models}
In this section we consider generalized linear models described in Equation~\eqref{eqn:glm}, where the covariate $x$ is allowed to have a heavy tailed distribution.  Here we assume that the covariates have bounded 4$^{\text{th}}$ moment. Additionally, we assume smoothness of $\link'(\cdot)$ around $\tparam$. Specifically, we assume that there exist universal constants $L_{\link,2k}$, $B_{2k}$ such that
\[ \E_{x} \left[ \left| \link'(\inprod{x}{\theta}) - \link'(\inprod{x}{\tparam}) \right|^{2k} \right] \leq L_{\link,2k} \norm{\true - \theta}{2}^{2k} + B_{\link,2k}, \ \ \text{for } k=1,2 \]
We also assume that $\E_{x}[\left| \link^{(t)}(\iprod{x}{\tparam}) \right| ^{k}] \leq M_{\link,t,k}$ for $t \in \{1, 2, 4\}$, where $\link^{(t)}(\cdot)$ is the $t^{th}$-derivative of $\link(\cdot)$.

\begin{theorem}[Heavy Tailed Generalized Linear Models]
\label{thm:heavy_glm}
Consider the statistical model in equation~\eqref{eqn:glm}. There are universal constants $C_1, C_2 >0$ such that if
$$\tiln > \frac{C_1\norm{\Sigma}{2}\left(\sqrt{L_{\link,4}} + L_{\link,2}\right)}{\tau_l^2} p\log{1/\tildelta},$$ and if Algorithm~\ref{algo:festimation} is initialized at $\theta^0$ with stepsize $\eta = 2/(\tau_u+\tau_\ell)$ and Algorithm~\ref{algo:heavy_tailed_estimation} as gradient estimator, it returns iterates $\{\eparam^t\}_{t=1}^{T}$  such that with probability at least $1 - \delta$
\begin{align}\label{eq:bound:heavy_glm}
\norm{\eparam^t - \tparam}{2} \leq & \kappa^t \norm{\theta^0 - \tparam}{2} \nonumber \\
& + \frac{C_2\norm{\Sigma}{2}^\half \left[ B_{\link,4}^{\frac{1}{4}} + B_{\link,2}^{\half} + c(\sigma)^\half M_{\link,2,2}^{\frac{1}{4}} + c(\sigma)^{\frac{3}{4}} M_{\link,4,1}^{\frac{1}{4}}\right]}{1 - \kappa} \left(  \sqrt{\frac{p\log(1/\tildelta)}{\tiln}}\right),    
\end{align}
for some contraction parameter $\kappa < 1$.
\end{theorem}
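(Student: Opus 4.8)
The plan is to instantiate the general stability result, Theorem~\ref{thm:main}, by computing the error functions $\alpha(\tiln,\tildelta)$ and $\beta(\tiln,\tildelta)$ for the heavy-tailed GLM gradient estimator and then checking the stability condition $\alpha(\tiln,\tildelta) < \tau_\ell$. Since Algorithm~\ref{algo:heavy_tailed_estimation} is the \gmom\ estimator applied to the sample gradients, Lemma~\ref{lem:gmom_concentration} already controls its deviation from $\grad\risk(\theta)$ by $11\sqrt{\trace{\Sigma_\theta}\log(1.4/\tildelta)/\tiln}$, where $\Sigma_\theta$ is the covariance of the gradient distribution $\graddist$. Thus the entire problem reduces to producing a bound of the form $\trace{\Sigma_\theta} \le A\,\norm{\theta - \tparam}{2}^2 + B$: applying $\sqrt{a+b}\le\sqrt{a}+\sqrt{b}$ then reads off $\alpha(\tiln,\tildelta) = 11\sqrt{A\log(1.4/\tildelta)/\tiln}$ and $\beta(\tiln,\tildelta) = 11\sqrt{B\log(1.4/\tildelta)/\tiln}$, matching the form required by the stability definition.

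To bound the trace, I would start from the GLM gradient $\grad\poploss(\theta;(x,y)) = (\link'(\inprod{x}{\theta}) - y)\,x$ and use $\trace{\Sigma_\theta} \le \E\norm{\grad\poploss(\theta;(x,y))}{2}^2$. Writing $\eta^* = \inprod{x}{\tparam}$ and splitting the scalar residual as $(\link'(\inprod{x}{\theta}) - \link'(\eta^*)) + (\link'(\eta^*) - y)$, the GLM identity $\E[y\mid x] = \link'(\eta^*)$ kills the cross term after conditioning on $x$, so the trace separates cleanly into a \emph{bias} term $\E[(\link'(\inprod{x}{\theta}) - \link'(\eta^*))^2\norm{x}{2}^2]$ and a \emph{noise} term $\E[(\link'(\eta^*) - y)^2\norm{x}{2}^2]$. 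For the bias term I would apply Cauchy--Schwarz together with the smoothness hypotheses ($k=1,2$), $\E_x|\link'(\inprod{x}{\theta}) - \link'(\eta^*)|^{2k} \le L_{\link,2k}\norm{\true-\theta}{2}^{2k} + B_{\link,2k}$, and the bounded fourth-moment assumption~\eqref{eqn:bounded_moment} (which controls $\E\norm{x}{2}^4$ and supplies the linear-in-$p$ factor); this produces a contribution of the form $\norm{\Sigma}{2}(\sqrt{L_{\link,4}}+L_{\link,2})\,p\,\norm{\theta-\tparam}{2}^2$ plus a constant proportional to $\norm{\Sigma}{2}\,p\,(B_{\link,4}^{1/2}+B_{\link,2})$. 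For the noise term I would invoke the GLM conditional-moment structure, namely $\mathrm{Var}(y\mid x) = c(\sigma)\link''(\eta^*)$ and the higher cumulant expressing $\E[(y-\E[y\mid x])^4\mid x]$ through $\link^{(4)}(\eta^*)$ and powers of $c(\sigma)$; combined with Cauchy--Schwarz and the bounds $\E_x|\link^{(t)}(\eta^*)|^k \le M_{\link,t,k}$, this yields a constant matching $\norm{\Sigma}{2}\,p\,[c(\sigma)M_{\link,2,2}^{1/2} + c(\sigma)^{3/2}M_{\link,4,1}^{1/2}]$.

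Collecting these gives $A \asymp \norm{\Sigma}{2}(\sqrt{L_{\link,4}}+L_{\link,2})\,p$ and $B \asymp \norm{\Sigma}{2}\,p\,[B_{\link,4}^{1/2}+B_{\link,2}+c(\sigma)M_{\link,2,2}^{1/2}+c(\sigma)^{3/2}M_{\link,4,1}^{1/2}]$. Taking square roots, $\sqrt{B}$ reproduces exactly the bracketed coefficient $[B_{\link,4}^{1/4}+B_{\link,2}^{1/2}+c(\sigma)^{1/2}M_{\link,2,2}^{1/4}+c(\sigma)^{3/4}M_{\link,4,1}^{1/4}]$ multiplying $\norm{\Sigma}{2}^{1/2}\sqrt{p\log(1/\tildelta)/\tiln}$ in~\eqref{eq:bound:heavy_glm}. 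The stability requirement $11\sqrt{A\log(1.4/\tildelta)/\tiln} < \tau_\ell$ rearranges to precisely the stated sample-size condition $\tiln > C_1\norm{\Sigma}{2}(\sqrt{L_{\link,4}}+L_{\link,2})\,p\log(1/\tildelta)/\tau_\ell^2$, which also guarantees $\kappa<1$. Theorem~\ref{thm:main} then delivers the recursion~\eqref{eq:bound:heavy_glm}, with a union bound over the $T$ sample-split batches converting the per-iteration confidence $\tildelta = \delta/T$ into the overall $1-\delta$.

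The main obstacle is the trace bound, and in particular showing that $\trace{\Sigma_\theta}$ grows only \emph{linearly} in $p$. Because the gradient couples a scalar residual with the full vector $x$, a careless estimate through $\E\norm{x}{2}^4$ threatens a quadratic $p$-dependence; the key is to exploit~\eqref{eqn:bounded_moment} to control $\E\norm{x}{2}^4 \lesssim (\trace{\Sigma})^2$ so that, after Cauchy--Schwarz, only a single factor of $\trace{\Sigma} \le \norm{\Sigma}{2}\,p$ survives. A secondary subtlety is the noise term, where one must use the GLM cumulant structure to identify the conditional central moments of $y$ with derivatives of $\link$, which is what produces the $M_{\link,2,2}$ and $M_{\link,4,1}$ terms and their specific $c(\sigma)$ powers; carefully tracking all of these moment factors is the bulk of the work, while everything downstream is a mechanical substitution into Theorem~\ref{thm:main}.
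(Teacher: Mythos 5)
Your proposal is correct and follows essentially the same route as the paper: instantiate Theorem~\ref{thm:main} via the \gmom~concentration bound of Lemma~\ref{lem:gmom_concentration}, after bounding $\trace{\Sigma_{\theta}}$ by a quantity of the form $A\norm{\theta-\tparam}{2}^2 + B$ (linear in $p$) using Cauchy--Schwarz, the bounded fourth moments of the covariates, and the GLM smoothness/cumulant assumptions, then reading off $\alpha$ and $\beta$ and the stability condition. The only cosmetic difference is that the paper gets the trace bound as $p$ times the operator-norm bound of Lemma~\ref{lem:glmDist} (splitting the residual at the fourth-moment level via the $C_r$ inequality), whereas you bound $\E\norm{\grad\poploss(\theta)}{2}^2$ directly and kill the cross term with the conditional-mean identity $\E[y\mid x]=\link'(\inprod{x}{\tparam})$; the same factors of $p$, $\norm{\Sigma}{2}$, and the moment constants emerge either way.
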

We now instantiate the above Theorem for logistic regression model.
\begin{corollary}[Heavy Tailed Logistic Regression]
\label{cor:heavy_logistic}  
Consider the model in equation\eqref{eq:logisticregress}. There are universal constants $C_1, C_2 >0$ such that if
$$\tiln > \frac{C_1^2\norm{\Sigma}{2}}{\tau_l^2} p\log{1/\tildelta}.$$ and if Algorithm~\ref{algo:festimation} initialized at $\theta^0$ with stepsize $\eta = 2/(\tau_u+\tau_\ell)$ and Algorithm~\ref{algo:heavy_tailed_estimation} as gradient estimator, it returns iterates $\{\eparam^t\}_{t=1}^{T}$  such that with probability at least $1 - \delta$
\begin{align}
\norm{\eparam^t - \tparam}{2}\leq \kappa^t \norm{\theta^0 - \tparam}{2} + \frac{C_2 \sqrt{\norm{\Sigma}{2}}} {{1-\kappa}}  \left(  \sqrt{\frac{p\log(1/\tildelta)}{\tiln}}\right),   
\end{align}   
for some contraction parameter $\kappa < 1$.
\end{corollary}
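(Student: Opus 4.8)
The plan is to obtain this corollary as a direct specialization of Theorem~\ref{thm:heavy_glm} to the logistic link, so that essentially all the work reduces to evaluating the model-dependent constants $L_{\link,2k}$, $B_{\link,2k}$, $M_{\link,t,k}$ and $c(\sigma)$ that appear in the GLM sample-size condition and error bound, and then substituting them in. Concretely, I would first verify that logistic regression is an instance of the canonical GLM~\eqref{eqn:glm} with $\link(t) = \log(1+e^t)$ and $c(\sigma) = 1$, so that Theorem~\ref{thm:heavy_glm} applies verbatim once its hypotheses on $\link$ are checked under the standing assumption that the covariates have bounded $4^{\text{th}}$ moments in the sense of~\eqref{eqn:bounded_moment}.

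The second and main step is to bound the logistic-specific constants. Here I would exploit the two structural features of the logistic link: its derivatives are uniformly bounded at every order, since $\link'(t) = (1+e^{-t})^{-1} \in [0,1]$, $\link''(t) = \link'(t)(1-\link'(t)) \in [0,1/4]$, and $\link^{(t)}$ is bounded for all $t \geq 1$; and, because $|\link''| \leq 1/4$, the derivative $\link'$ is $1/4$-Lipschitz. The Lipschitz property controls the contraction-governing constants $L_{\link,2k}$: writing $|\link'(\inprod{x}{\theta}) - \link'(\inprod{x}{\tparam})| \leq \tfrac14 |\inprod{x}{\theta-\tparam}|$, raising to the $2k$-th power, taking expectations, and invoking the bounded-moment assumption~\eqref{eqn:bounded_moment} on the covariates yields $L_{\link,2k} \lesssim \norm{\Sigma}{2}^{k}$ with the additive term taken to be zero, so that $\sqrt{L_{\link,4}} + L_{\link,2} \lesssim \norm{\Sigma}{2}$. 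The uniform boundedness instead controls the noise-floor constants: it gives $M_{\link,t,k} \leq C$ for a universal constant $C$ (in particular $M_{\link,2,2}, M_{\link,4,1} \leq C$) and, together with $c(\sigma)=1$, makes the entire bracket $[\,B_{\link,4}^{1/4} + B_{\link,2}^{1/2} + c(\sigma)^{1/2}M_{\link,2,2}^{1/4} + c(\sigma)^{3/4}M_{\link,4,1}^{1/4}\,]$ a universal constant.

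With these bounds in hand, the final step is substitution and bookkeeping. Inserting $\sqrt{L_{\link,4}} + L_{\link,2} \lesssim \norm{\Sigma}{2}$ into the sample-size hypothesis of Theorem~\ref{thm:heavy_glm} collapses it to the form $\tiln > (C/\tau_\ell^2)\,\norm{\Sigma}{2}\, p\log(1/\tildelta)$ claimed in the corollary, after absorbing numeric factors into the universal constant; and inserting the bounded bracket into the error bound~\eqref{eq:bound:heavy_glm} collapses its second term to $\tfrac{C_2 \norm{\Sigma}{2}^{1/2}}{1-\kappa}\sqrt{p\log(1/\tildelta)/\tiln}$, exactly the stated estimate. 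The contraction factor $\kappa<1$ and the transient term $\kappa^t\norm{\theta^0-\tparam}{2}$ are inherited unchanged from Theorem~\ref{thm:main} through Theorem~\ref{thm:heavy_glm}.

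The genuinely delicate part is not any new analysis but the constant tracking: I must be careful to deploy Lipschitzness (not mere boundedness) for the constants $L_{\link,2k}$ that govern the part of the gradient error linear in $\norm{\theta-\tparam}{2}$ — and hence stability and the sample-size threshold — while deploying uniform boundedness for the constants $B_{\link,2k}$, $M_{\link,t,k}$ that govern the irreducible noise floor $\beta$. The one point warranting an explicit check is the exact power of $\norm{\Sigma}{2}$ surviving in the sample-size condition: the bounded-moment inequality~\eqref{eqn:bounded_moment} produces $(v^\top \Sigma v)^k \leq \norm{\Sigma}{2}^k$ per direction, which must be propagated consistently through $\sqrt{L_{\link,4}} + L_{\link,2}$ so that the stated $\norm{\Sigma}{2}$-scaling of the threshold and the $\norm{\Sigma}{2}^{1/2}$-scaling of the error are both recovered; everything else follows by a direct appeal to Theorem~\ref{thm:heavy_glm}.
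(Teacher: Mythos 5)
Your overall strategy---instantiating Theorem~\ref{thm:heavy_glm} by evaluating the logistic-specific constants---is exactly the paper's route, but your choice of how to bound $L_{\link,2k}$ breaks the final bookkeeping, at precisely the point you flagged as ``warranting an explicit check.'' Using the $1/4$-Lipschitzness of $\link'$ gives $L_{\link,2}\lesssim \norm{\Sigma}{2}$ and $L_{\link,4}\lesssim \norm{\Sigma}{2}^{2}$, hence $\sqrt{L_{\link,4}}+L_{\link,2}\lesssim \norm{\Sigma}{2}$; but the sample-size hypothesis of Theorem~\ref{thm:heavy_glm} is
\begin{align*}
\tiln \;>\; \frac{C_1\norm{\Sigma}{2}\bigl(\sqrt{L_{\link,4}}+L_{\link,2}\bigr)}{\tau_\ell^2}\,p\log(1/\tildelta),
\end{align*}
so substituting your bound yields a threshold scaling as $\norm{\Sigma}{2}\cdot\norm{\Sigma}{2}=\norm{\Sigma}{2}^{2}$, not the $\norm{\Sigma}{2}$-scaling claimed in the corollary. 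Your assertion that the hypothesis ``collapses'' to $\tiln > (C/\tau_\ell^2)\norm{\Sigma}{2}\,p\log(1/\tildelta)$ is therefore off by a factor of $\norm{\Sigma}{2}$: whenever $\norm{\Sigma}{2}$ is large, the corollary's stated hypothesis does not imply the stability condition your parameterization requires, so your argument proves only a weaker statement (one with a strictly stronger sample-size requirement), not the corollary as stated.

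The paper avoids this entirely by exploiting boundedness rather than Lipschitzness for the $L$-constants: since $\link'\in(0,1)$, one has $|\link'(\inprod{x}{\theta})-\link'(\inprod{x}{\tparam})|\le 1$ uniformly, so one may take $L_{\link,2k}=0$ and absorb everything into $B_{\link,2k}\le C$ (this is exactly the instantiation stated in Section~\ref{sec:huber_logistic} and reused for the heavy-tailed case). With $L_{\link,2k}=0$ the coefficient $\alpha(\tiln,\tildelta)$ of $\norm{\theta-\tparam}{2}$ in the gradient-error bound vanishes, stability holds for free, the contraction factor is $\kappa=\sqrt{1-2\eta\tau_\ell\tau_u/(\tau_\ell+\tau_u)}<1$ regardless of $\tiln$, and the stated sample-size condition is a harmless sufficient hypothesis; the error floor then comes only from the $B$- and $M$-terms, giving the claimed $\frac{C_2\sqrt{\norm{\Sigma}{2}}}{1-\kappa}\sqrt{p\log(1/\tildelta)/\tiln}$. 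The fix to your write-up is thus one line: set $L_{\link,2k}=0$, $B_{\link,2k}\le C$ (boundedness, not Lipschitzness, of $\link'$), and the rest of your substitution argument goes through verbatim.
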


\subsection{Exponential Family}
We now instantiate Theorem~\ref{thm:main} for parameter estimation in heavy-tailed exponential family distributions. Here we assume that the random vector $\phi(z), z \sim P$ has bounded 2$^{\text{nd}}$ moments, and we obtain the following result:
\begin{theorem}[Heavy Tailed Exponential Family]
\label{cor:heavy_expFamily}
 Consider the model in equation\eqref{eq:expfamily}. If Algorithm~\ref{algo:festimation} is initialized at $\theta^0$ with stepsize $\eta = 2/(\tau_u+\tau_\ell)$ and Algorithm~\ref{algo:heavy_tailed_estimation} as gradient estimator, it returns iterates $\{\eparam^t\}_{t=1}^{T}$,  such that with probability at least $1 - \delta$
\begin{align}
 \norm{\eparam^t - \tparam}{2}\leq \kappa^t \norm{\theta^0 - \tparam}{2}  + \frac{1}{1-\kappa}  C\sqrt{\frac{\norm{\grad^2 A(\tparam)}{2} p\log{1/\tildelta}}{\tiln}},   
 \end{align}
for some contraction parameter $\kappa < 1$ and universal constant $C$.
\end{theorem}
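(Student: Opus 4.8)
The plan is to deduce this corollary directly from the general stability result, Theorem~\ref{thm:main}, by instantiating the gradient-estimator error bounds $\alpha(\tiln,\tildelta)$ and $\beta(\tiln,\tildelta)$ using the heavy-tailed concentration guarantee of Lemma~\ref{lem:gmom_concentration}. The key structural feature that makes the exponential-family case especially clean is that the stochastic part of the loss gradient does not depend on $\theta$.

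First I would compute the gradient of the loss in~\eqref{eq:expfamily}, namely $\grad \poploss(\theta;z) = -\phi(z) + \grad A(\theta)$, so that the gradient distribution $P_\theta$ is simply the law of $-\phi(z)$ shifted by the deterministic vector $\grad A(\theta)$. Consequently its mean is $\mu_\theta = \grad \risk(\theta) = \grad A(\theta) - \E_{z\sim P}[\phi(z)]$ and, crucially, its covariance $\Sigma_\theta = \mathrm{Cov}_P(\phi(z))$ is \emph{independent} of $\theta$. Since the samples are drawn from $P = P_{\tparam}$, the standard exponential-family identity gives $\Sigma_\theta = \mathrm{Cov}_{\tparam}(\phi(z)) = \grad^2 A(\tparam)$; the assumed bounded second moments of $\phi(z)$ guarantee this covariance is finite.

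Next I would apply Lemma~\ref{lem:gmom_concentration} to the samples $\{\grad\poploss(\theta;z_i)\}$. Because the lemma's bound carries no term proportional to $\norm{\theta - \tparam}{2}$, this immediately certifies that Algorithm~\ref{algo:heavy_tailed_estimation} is a valid gradient estimator in the sense of~\eqref{eqn:grad_estimator} with $\alpha(\tiln,\tildelta) = 0$ and
\begin{align*}
\beta(\tiln,\tildelta) = 11\sqrt{\frac{\trace{\grad^2 A(\tparam)}\,\log(1.4/\tildelta)}{\tiln}} \le 11\sqrt{\frac{p\,\norm{\grad^2 A(\tparam)}{2}\,\log(1.4/\tildelta)}{\tiln}},
\end{align*}
where the final inequality uses $\trace{M} \le p\,\norm{M}{2}$ for the positive semidefinite matrix $M = \grad^2 A(\tparam)$.

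Finally I would verify the hypotheses of Theorem~\ref{thm:main} and read off the conclusion. Stability is automatic since $\alpha(\tiln,\tildelta)=0 < \tau_\ell$, and with the prescribed step-size $\eta = 2/(\tau_u + \tau_\ell)$ the contraction parameter reduces to $\kappa = \sqrt{1 - 4\tau_\ell\tau_u/(\tau_\ell+\tau_u)^2} = (\tau_u-\tau_\ell)/(\tau_u+\tau_\ell) < 1$. Plugging the above $\beta(\tiln,\tildelta)$ into the bound~\eqref{eq:main_bounds} then yields the stated inequality, with the absolute constant $C$ absorbing the factor $11$ and the $\log(1.4)$ contribution in $\log(1.4/\tildelta)$. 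The only genuinely model-specific step --- and hence the main point to get right --- is the identification $\Sigma_\theta = \grad^2 A(\tparam)$ together with the observation that this covariance is $\theta$-free, which is precisely what forces $\alpha = 0$ and thereby removes any multiplicative dependence on $\norm{\theta^0 - \tparam}{2}$ from the asymptotic error; this is the feature that distinguishes the exponential-family corollary from the linear-regression and GLM cases, where the gradient covariance varies with $\theta$ and produces a nonzero $\alpha$.
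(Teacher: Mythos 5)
Your proposal is correct and follows essentially the same route as the paper: the paper's proof likewise identifies $\cov[\grad\poploss(\theta)] = \grad^2 A(\tparam)$ (via Lemma~\ref{lem:expFamilyGrad}, whose content you re-derive by noting the stochastic part $-\phi(z)$ is $\theta$-free), bounds the trace by $p\,\norm{\grad^2 A(\tparam)}{2}$ in Lemma~\ref{lem:gmom_concentration}, and invokes Theorem~\ref{thm:main} with $\alpha = 0$ so that stability holds whenever $\tau_\ell > 0$. The only cosmetic difference is that you derive the covariance identity inline rather than citing the lemma already established in the Huber section.
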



\section{Discussion}
\label{sec:discussion}
In this paper we introduced a broad class of robust estimators, that leverage the inherent robustness of gradient descent, together with the observation that for risk minimization in most statistical models, the gradient of the risk takes the form of a simple multivariate mean, which can be robustly estimated using recent work on robust mean estimation. In contrast to classical $M$-estimators that use robust estimates of the risk, our class of estimators employ a shift in perspective, and use robust estimates of gradients of the risk instead, which can then be embedded into a simple projected gradient descent iterative algorithm. Our class of robust gradient descent estimators work well in practice and in many cases outperform other robust (and non-robust) estimators. We also show that these estimators have strong robustness guarantees under varied robustness settings, including Huber's $\epsilon$-contamination model and for heavy-tailed distributions.

There are several avenues for future work, including a better understanding of robust mean estimation, any improvement 
in which would immediately translate to improved guarantees for our robust gradient descent estimators. Finally, it would also be of interest to understand the extent to which we could replace gradient descent with other optimization methods such 
as accelerated gradient descent or Newton's method. We note however, that although these methods may have faster rates of convergence in the classical risk minimization settings, in our setup their stability to using inexact gradients is far more crucial and warrants further investigation.


\section{Acknowledgements}
A.P., A.S., P.R. acknowledge the support of PNC, and NSF via IIS-1149803, IIS-1664720, DMS-1264033. S.B. acknowledges the support of NSF via DMS-1713003. We thank Larry Wasserman and Ankit Pensia for helpful comments on the paper.

\begin{small}\bibliography{local}
\bibliographystyle{plainnat}
\end{small}

\appendix
\section{Proof of Theorem~\ref{thm:main}}
In this section, we present the proof of our main result on projected gradient descent with an inexact gradient estimator. To ease the notation we will often omit $\{D_n , \delta\}$ from $g(\theta; D_n, \delta)$.
\begin{proof}
At any iteration step $t \in \{1,2,\ldots,T\}$, by assumption we have that with probability at least $1 - \frac{\delta}{T}$,
\begin{equation} \label{eq:1}
 \norm{ g(\theta^t; D_n, \delta/T) - \grad \mathcal{R}(\theta^t) }{2} \leq \alpha(n/T,\delta/T) \norm{\theta - \tparam}{2} + \beta(n/T,\delta/T). 
\end{equation}
Taking union bound, \eqref{eq:1} holds over all iteration steps $t \in \{1 \dots T\}$, with probability at least $1 - \delta$. For the remainder of the analysis, we assume this event to be true. 
\paragraph{Notation.} Let $g(\theta^k) = \grad \mathcal{R}(\theta^k) + e_k$ be the noisy gradient. Let $\alpha = \alpha(n/T,\delta/T)$ and $\beta = \beta(n/T,\delta/T)$ for brevity.

We have the following Lemma from \citet{bubeck2015convex}.
\begin{lemma}\label{lem:bubeck3.11}[Lemma 3.11 \cite{bubeck2015convex}] 
Let $f$ be $M$-smooth and $m$-strongly convex, then for all $x,y \in \real^p$, we have:
\[ \iprod{\grad f(x) - \grad f(y)}{x - y} \geq \frac{mM}{m+M} \norm{x-y}{2}^2 + \frac{1}{m+M} \norm{\grad f(y) - \grad f(x)}{2}^2. \]
\end{lemma}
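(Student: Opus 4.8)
The plan is to reduce the claim to the standard co-coercivity inequality for convex smooth functions by subtracting off the strongly convex part. Concretely, I would define $\phi(x) := f(x) - \tfrac{m}{2}\norm{x}{2}^2$. Since $f$ is $m$-strongly convex, $\phi$ is convex, and since $f$ is $M$-smooth, $\phi$ is $(M-m)$-smooth (in the twice-differentiable case its Hessian lies between $0$ and $(M-m)I$; in general this transfer follows from the corresponding first-order gradient inequalities). The gradient satisfies $\grad\phi(x) = \grad f(x) - m x$, so that $\grad\phi(x)-\grad\phi(y) = \grad f(x) - \grad f(y) - m(x-y)$. The whole argument then hinges on establishing co-coercivity for $\phi$ and substituting this expression back in.

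Next I would prove the co-coercivity inequality
\[ \iprod{\grad\phi(x) - \grad\phi(y)}{x-y} \geq \frac{1}{M-m}\norm{\grad\phi(x)-\grad\phi(y)}{2}^2 \]
for a convex, $L$-smooth function with $L = M-m$. The key device is to fix $x$ and consider the auxiliary function $\psi_x(z) := \phi(z) - \iprod{\grad\phi(x)}{z}$, which is convex, $L$-smooth, and has gradient $\grad\psi_x(z) = \grad\phi(z)-\grad\phi(x)$; in particular $x$ is its global minimizer. Since $x$ minimizes $\psi_x$, we have $\psi_x(x) \le \psi_x\bigl(y - \tfrac1L\grad\psi_x(y)\bigr)$, and applying the smoothness descent lemma to the right-hand side yields $\psi_x(x) \le \psi_x(y) - \tfrac{1}{2L}\norm{\grad\psi_x(y)}{2}^2$. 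Unwinding the definition of $\psi_x$ turns this into
\[ \phi(x)-\phi(y) - \iprod{\grad\phi(x)}{x-y} \leq -\frac{1}{2L}\norm{\grad\phi(x)-\grad\phi(y)}{2}^2. \]
Writing the symmetric inequality with $x$ and $y$ interchanged and adding the two makes the function values $\phi(x),\phi(y)$ cancel, leaving exactly the co-coercivity bound.

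Finally I would substitute $\grad\phi(x)-\grad\phi(y) = (\grad f(x)-\grad f(y)) - m(x-y)$ into the co-coercivity inequality and expand. Writing $g := \grad f(x)-\grad f(y)$ and $d := x-y$, the inequality reads $\iprod{g}{d} - m\norm{d}{2}^2 \ge \tfrac{1}{M-m}\bigl(\norm{g}{2}^2 - 2m\iprod{g}{d} + m^2\norm{d}{2}^2\bigr)$; multiplying through by $M-m$ and collecting the $\iprod{g}{d}$ and $\norm{d}{2}^2$ terms gives $(M+m)\iprod{g}{d} \ge \norm{g}{2}^2 + mM\norm{d}{2}^2$, which is the claim after dividing by $M+m$. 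The main obstacle is the co-coercivity step itself: one must correctly identify the minimizing auxiliary function $\psi_x$ and apply the descent lemma so that the residual is a full $\tfrac{1}{2L}\norm{\cdot}{2}^2$ term rather than a weaker bound; everything after that is bookkeeping, and the only remaining subtlety is checking that subtracting the quadratic genuinely transfers $m$-strong convexity and $M$-smoothness of $f$ into convexity and $(M-m)$-smoothness of $\phi$.
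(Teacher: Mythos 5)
Your proof is correct and follows essentially the same route as the paper, which does not prove this lemma itself but cites it as Lemma 3.11 of \citet{bubeck2015convex}; the textbook proof is exactly your reduction — subtract $\tfrac{m}{2}\norm{x}{2}^2$ to get a convex $(M-m)$-smooth function $\phi$, prove co-coercivity of $\phi$ via the auxiliary function $\psi_x$ and the descent lemma, then substitute $\grad\phi(x)-\grad\phi(y) = g - md$ and collect terms. The only detail worth flagging is the degenerate case $M=m$, where $1/(M-m)$ is undefined: there $\grad\phi$ is forced to be constant and the claimed inequality holds directly, a caveat the standard treatment also handles separately.
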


 By assumptions we have that: $ \norm{\grad \mathcal{R}(\theta^k) - g(\theta^k)}{2} = \norm{e_k}{2} \leq \alpha \norm{\theta^k - \tparam}{2} + \beta$. Our update rule is  $\theta^{k+1} = \mathbb{P}_\Theta \left[ \theta^k - \stepSize g(\theta^k) \right]$.  Then we have that:
 
 \begin{align}
\norm{\theta^{k+1} - \tparam}{2}^2 &= \norm{ \mathbb{P}_\Theta [\theta^k - \stepSize g(\theta^k)]  - \tparam}{2}^2 = \norm{\mathbb{P}_\Theta [\theta^k - \stepSize g(\theta^k)] - \mathbb{P}_\Theta [\tparam - \stepSize \grad R(\tparam)]}{2}^2 \nonumber \\
&\leq  \norm{\theta^k - \stepSize g(\theta^k) - (\tparam - \stepSize \grad R(\tparam))}{2}^2 \label{eqn:projectionContraction} \\
& = \norm{\theta^k - \tparam -  \stepSize (\grad \mathcal{R}(\theta^k) - \grad R(\tparam)) - \eta e_k }{2}^2 \nonumber \\
& \leq \norm{\theta^k - \tparam -  \stepSize (\grad \mathcal{R}(\theta^k) - \grad R(\tparam)) }{2}^2 + \eta^2 \norm{e_k}{2}^2 \nonumber \\
& ~~~~~~~~ + 2 \norm{e_k}{2} \norm{\theta^k - \tparam -  \stepSize (\grad \mathcal{R}(\theta^k) - \grad R(\tparam)) }{2},  \label{eqn:47}
 \end{align}
where Equation~\eqref{eqn:projectionContraction} follows from contraction property of projections. Now, we can write $\norm{\theta^k  - \tparam - \eta(\grad \mathcal{R}(\theta^k) - \grad R(\tparam))}{2}$ as
\small
\begin{align}
 & \norm{\theta^k  - \tparam - \eta(\grad \mathcal{R}(\theta^k) - \grad R(\tparam))}{2}^2 = \norm{\theta^k - \tparam}{2}^2 + \stepSize^2 \norm{\grad \mathcal{R}(\theta^k) - \grad R(\tparam)}{2}^2 \nonumber - 2\stepSize \iprod{\grad \mathcal{R}(\theta^k) - \grad R(\tparam)}{\theta^k - \tparam} \\
 & \leq \norm{\theta^k - \tparam}{2}^2 + \stepSize^2 \norm{\grad \mathcal{R}(\theta^k) - \grad R(\tparam)}{2}^2 -2 \stepSize \left(  \frac{\tau_\ell \tau_u}{\tau_\ell+\tau_u} \norm{\theta^k - \tparam}{2}^2 + \frac{1}{\tau_\ell+\tau_u} \norm{\grad \mathcal{R}(\theta^k) - \grad R(\tparam)}{2}^2 \right) \\
 &= \norm{\theta^k - \tparam}{2}^2 ( 1 - 2\stepSize \tau_\ell \tau_u/(\tau_\ell+\tau_u)) + \stepSize \norm{\grad \mathcal{R}(\theta^k) - \grad R(\tparam)}{2}^2 (\stepSize - 2/(\tau_u+\tau_\ell)) \\
 & = \norm{\theta^k - \tparam}{2}^2 ( 1 - 2\stepSize \tau_\ell \tau_u/(\tau_\ell+\tau_u)) \label{eqn:48},
 \end{align}
 \normalsize
where the second step follows from Lemma~\ref{lem:bubeck3.11} and the last step follows from the step size $\stepSize = 2/(\tau_\ell+\tau_u) $. \\
Now, combining Equations~\eqref{eqn:47} and \eqref{eqn:48}, and using our assumption that $\norm{e_k}{2} \leq \alpha \norm{\theta^k - \tparam}{2} + \beta$, we get:
\[ \norm{\theta^{k+1} - \tparam}{2}^2 \leq \left( \norm{\theta^k - \tparam}{2} \sqrt{( 1-2\stepSize \tau_\ell \tau_u/(\tau_\ell+\tau_u))} + \stepSize \norm{e_k}{2} \right)^2 \]
\[ \norm{\theta^{k+1} - \tparam}{2} \leq \left[ \sqrt{1- 2\stepSize \tau_\ell \tau_u/(\tau_\ell+\tau_u)} +  \stepSize \alpha \right] \norm{\theta^k - \tparam}{2} + \stepSize \beta .\]
Let $\kappa = \sqrt{1- 2\stepSize \tau_\ell \tau_u/(\tau_\ell+\tau_u)} +  \stepSize \alpha$. By the assumption on stability we have $\alpha < \tau_\ell$. 
\begin{align}
\kappa &= \sqrt{1- 2\stepSize \tau_\ell \tau_u/(\tau_\ell+\tau_u)} + \stepSize \alpha \\
&< \sqrt{1- 2\stepSize \tau_\ell \tau_u/(\tau_\ell+\tau_u)} + \stepSize \tau_\ell.
\end{align}
Since $\eta = 2 / (\tau_\ell + \tau_u)$, we get that
\begin{align}
\kappa & < \sqrt{1 - 4\tau_u^2\tau_\ell^2/(\tau_\ell+\tau_u)^2} + 2\tau_\ell/(\tau_u + \tau_\ell) \\ 
\kappa & < \frac{\tau_u - \tau_\ell}{\tau_u + \tau_\ell} + 2\tau_\ell/(\tau_u + \tau_\ell) \\
\kappa & < 1
\end{align}

Therefore, we have that,
\[ \norm{\theta^{k+1} - \tparam}{2} \leq \kappa \norm{\theta^k - \tparam}{2} + \stepSize \beta. \]
for some $\kappa < 1$. Solving the induction,we get:
\[  \norm{\theta^{k} - \tparam}{2} \leq \kappa^k \norm{\theta^0 - \tparam}{2} + \frac{1}{1 - \kappa}\stepSize \beta.  \]
\end{proof}

\section{Proof of Theorem~\ref{corr:linregress}}
The proof of Theorem~\ref{corr:linregress} follows from  Theorem~\ref{thm:glm}, where we study Generalized Linear Models, which include linear regression as a special case. For the case of linear regression with gaussian noise, it is relatively straightforward to see that the smoothness parameters satisfy $L_{\link,2k} = C_{2k} \norm{\Sigma}{2}^{k}$, $B_{\link,2k} = 0$,   $M_{\link,t,k} = 1 \ \ \forall (t \geq 2,k \in \calN)$ and $M_{\link,t,k} = 0 \ \ \forall (t \geq 3,k \in \calN)$ under the assumption of bounded $8^{th}$ moments of the covariates. Substituting these values in Theorem~\ref{thm:glm} gives us the required result.
\section{Proof of Theorem~\ref{thm:glm}}
To prove our result on Robust Generalized Linear Models, we first study the distribution of gradients of the corresponding risk function.

\begin{lemma}\label{lem:glmDist}
Consider the model in Equation~\eqref{eqn:glm}, then there exist universal constants $C_1,C_2>0$ such that 
\begin{align*}
\norm{ \cov (\grad \poploss (\theta) }{2} \leq & C_1 \norm{\Delta}{2}^2 \norm{\Sigma}{2} \left( \sqrt{C_4} \sqrt{L_{\link,4}} + L_{\link,2}\right) \\
 & + C_2 \norm{\Sigma}{2} \left(B_{\link,2} + \sqrt{B_{\link,4}} + c(\sigma) \sqrt{3M_{\link,2,2}} +  \sqrt{c(\sigma)^3 M_{\link,4,1}} \right)  
\end{align*}
\begin{align*}
 \text{Bounded fourth moments} \  \ \Exp \left[ \left[ ( \grad \poploss(\theta)  - \Exp [\grad \poploss(\theta)])^T v  \right]^4 \right] \leq C_2( \var [\grad \poploss(\theta)^T v])^2 .
\end{align*}
\end{lemma}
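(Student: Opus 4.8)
The plan is to characterize the gradient distribution through a signal--noise decomposition driven by the parameter error $\Delta := \theta - \tparam$, then to bound the operator norm of the covariance and verify the bounded fourth-moment condition term by term, much as in the squared-loss special case.

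First I would compute the gradient of the loss in~\eqref{eqn:glmfisherloss},
\[
\grad \poploss(\theta;(x,y)) = \big(\link'(\inprod{x}{\theta}) - y\big)\,x,
\]
and invoke the exponential-family structure of~\eqref{eqn:glm}, under which $\E[y \mid x] = \link'(\inprod{x}{\tparam})$, $\mathrm{Var}(y\mid x) = c(\sigma)\,\link''(\inprod{x}{\tparam})$, and the conditional fourth central moment equals $3c(\sigma)^2\link''(\inprod{x}{\tparam})^2 + c(\sigma)^3\link^{(4)}(\inprod{x}{\tparam})$. Writing $a(x) := \link'(\inprod{x}{\theta}) - \link'(\inprod{x}{\tparam})$ and $\xi := y - \link'(\inprod{x}{\tparam})$, so that $\E[\xi\mid x] = 0$, the gradient decomposes as $\grad\poploss(\theta) = (a(x)-\xi)\,x$ with mean $\E[\grad\poploss(\theta)] = \E[a(x)x] = \grad\risk(\theta)$.

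Second, for the covariance I would note that, since $\E[\xi\mid x]=0$, the signal--noise cross terms vanish after conditioning on $x$, leaving
\[
\mathrm{Cov}(\grad\poploss(\theta)) = \mathrm{Cov}\big(a(x)x\big) + \E\big[\xi^2 xx^T\big].
\]
For the signal part I would use $\norm{\mathrm{Cov}(a(x)x)}{2} \le \sup_{\norm{v}{2}=1}\E[a(x)^2\inprod{x}{v}^2]$ and split $\E[a^2\inprod{x}{v}^2] = \E[a^2]\,(v^T\Sigma v) + \mathrm{Cov}(a^2,\inprod{x}{v}^2)$; the first term is controlled by the $k=1$ bound $\E[a^2]\le L_{\link,2}\norm{\Delta}{2}^2 + B_{\link,2}$ with $v^T\Sigma v\le\norm{\Sigma}{2}$, while the second is handled by Cauchy--Schwarz $\mathrm{Cov}(a^2,\inprod{x}{v}^2)\le\sqrt{\E[a^4]}\sqrt{\E[\inprod{x}{v}^4]}$ using the $k=2$ bound $\E[a^4]\le L_{\link,4}\norm{\Delta}{2}^4 + B_{\link,4}$ and $\E[\inprod{x}{v}^4]\le C_4\norm{\Sigma}{2}^2$; the inequality $\sqrt{s+t}\le\sqrt s+\sqrt t$ then separates the $\norm{\Delta}{2}^2$ contributions (yielding $L_{\link,2}$ and $\sqrt{C_4}\sqrt{L_{\link,4}}$) from the constant ones ($B_{\link,2}$, $\sqrt{B_{\link,4}}$). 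For the noise part I would bound $\E[\xi^2\mid x]\le\sqrt{\E[\xi^4\mid x]}\le\sqrt3\,c(\sigma)\,|\link''(\inprod{x}{\tparam})| + \sqrt{c(\sigma)^3|\link^{(4)}(\inprod{x}{\tparam})|}$ via the conditional fourth-moment identity, then apply Cauchy--Schwarz against $\inprod{x}{v}^2$ with $\E[\link''(\inprod{x}{\tparam})^2]\le M_{\link,2,2}$ and $\E[|\link^{(4)}(\inprod{x}{\tparam})|]\le M_{\link,4,1}$ to produce exactly the $c(\sigma)\sqrt{3M_{\link,2,2}}$ and $\sqrt{c(\sigma)^3 M_{\link,4,1}}$ terms. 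Collecting the two parts yields the stated covariance bound.

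Third, for the bounded fourth-moment condition I would expand the fourth power of the centered projection $(a(x)\inprod{x}{v} - \E[a(x)\inprod{x}{v}]) - \xi\inprod{x}{v}$ using the $C_r$ inequality into a signal piece and a noise piece $\E[\xi^4\inprod{x}{v}^4]$, bounding each factor by Cauchy--Schwarz using the bounded $8^{\text{th}}$ moments of $x$ together with the $2k$-moment bounds on $\link'$ (up to $k=4$) for the signal and the conditional fourth-moment identity for $\xi$. Every resulting term scales at most as $\norm{\Delta}{2}^4$ with a constant offset, matching the scaling of $(\mathrm{Var}(\grad\poploss(\theta)^Tv))^2 = (v^T\mathrm{Cov}(\grad\poploss(\theta))v)^2$ implied by the covariance bound, so their ratio is bounded by a finite constant $C_2$ depending only on the moment parameters. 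The main obstacle is precisely this domination step: one must track the many cross terms of the $C_r$ expansion and verify that none grows in $\norm{\Delta}{2}$ (or in the $B$, $M$ offsets) faster than the square of the already-computed variance. The delicate ingredient throughout is expressing the conditional moments of the GLM noise $\xi$ through derivatives of $\link$ at $\tparam$ via the exponential-family cumulant relations, since the hypotheses are stated in terms of $\link$-derivative moments rather than the noise itself; this is what forces the joint appearance of $M_{\link,2,2}$ and $M_{\link,4,1}$ and is the step most prone to bookkeeping error.
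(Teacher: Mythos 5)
Your proposal is correct and lands on all the same key ingredients as the paper's proof --- the exponential-family cumulant identities expressing the conditional moments of $\xi = y - \link'(\inprod{x}{\tparam})$ through $\link''$ and $\link^{(4)}$ at $\tparam$ (which is where $M_{\link,2,2}$ and $M_{\link,4,1}$ enter), Cauchy--Schwarz against the bounded $4^{\text{th}}$/$8^{\text{th}}$ covariate moments, the $C_r$ inequality, and the smoothness bounds $L_{\link,2k}, B_{\link,2k}$ --- but your covariance step takes a cleaner route. The paper does not exploit $\E[\xi \mid x] = 0$: it crudely bounds $\norm{\cov(\grad\poploss(\theta))}{2} \leq \sup_z z^T \E[\grad\poploss\,\grad\poploss^T]z + \norm{\E[\grad\poploss]}{2}^2$, then applies a single Cauchy--Schwarz to $\E[(z^Tx)^2(u(\inprod{x}{\theta})-y)^2]$ and splits $\E[(u(\inprod{x}{\theta})-y)^4]$ via $C_r$ into the smoothness term and the noise-moment term. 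Your exact signal--noise decomposition $\cov(\grad\poploss(\theta)) = \cov(a(x)x) + \E[\xi^2 xx^T]$ avoids adding the squared mean back in and separates the two sources of fluctuation from the start; it is slightly tighter and makes the provenance of each term in the final bound transparent (the $L$/$B$ terms from the signal, the $M$ terms from the noise), at the cost of handling the two pieces with separate Cauchy--Schwarz arguments. For the bounded-fourth-moment condition your argument is essentially identical to the paper's ($C_r$ split into $\E[|\grad\poploss^Tv|^4]$ and $|\E[\grad\poploss]^Tv|^4$, then Cauchy--Schwarz with $C_8$ and the $k=4$ smoothness bound), and you conclude exactly as the paper does, by matching the growth rate in $\norm{\Delta}{2}$ of the fourth moment against that of the squared covariance bound --- both you and the paper leave implicit the lower bound on the directional variance that a fully pointwise verification of~\eqref{eqn:bounded_moment} would require, so your proof is at the same level of rigor as the published one.
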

\begin{proof}
The gradient $\grad \poploss (\theta)$ and it's expectation can be written as:
\begin{align*}
\grad \poploss (\theta) &= - y.x + u(\iprod{x}{\theta}).x \\
\Exp[\grad \poploss (\theta)] &= \Exp [x \left( u(x^T \theta) - u(x^T \tparam) \right) ] ,
\end{align*}
where $u(t) = \link'(t)$.
\begin{align*}
\norm{\Exp[\grad \poploss (\theta)] }{2} &= \sup_{y \in \mathbb{S}^{p-1}} y^T \Exp [\grad \poploss (\theta)]  \\
& \leq \sup_{y \in \mathbb{S}^{p-1}}  \Exp [(y^T x) \left( u(x^T \theta) - u(x^T \tparam) \right) ]  \\
& \leq \sup_{y \in \mathbb{S}^{p-1}} \sqrt{\Exp[(y^T x)^2]} \sqrt{ \Exp [\left( u(x^T \theta) - u(x^T \tparam)\right)^2} ] \\
& \leq C_1 \norm{\Sigma}{2}^\half \sqrt{L_{\link,2}\norm{\Delta}{2}^2 + B_{\link,2}}
\end{align*}
where the last line follows from our assumption of smoothness.

Now, to bound the maximum eigenvalue of the $\cov(\grad \poploss(\theta))$, 
\begin{align*}
\norm{\cov(\grad \poploss(\theta))}{2} &= \sup_{z \in \mathbb{S}^{p-1}} z^T \left( \E \left[ \grad \poploss(\theta) \grad \poploss (\theta)^T \right] - \Exp[\grad \poploss (\theta)]\Exp[\grad \poploss (\theta)]^T \right) z \\
& \leq \sup_{z \in \mathbb{S}^{p-1}} z^T \left( \E \left[ \grad \poploss(\theta) \grad\poploss (\theta)^T \right] \right) z +  \sup_{z\in \mathbb{S}^{p-1}} z^T \left( \Exp[\grad \poploss (\theta)]\Exp[\grad \poploss (\theta)]^T \right) z \\
& \leq \sup_{z \in \mathbb{S}^{p-1}} z^T \left( \E \left[ xx^T \left( u(x^T \theta) - y) \right)^2 \right] \right) z + \norm{\Exp[\grad \poploss (\theta)] }{2}^2 \\
& \leq \sup_{z \in \mathbb{S}^{p-1}} \E \left[ z^T \left(  xx^T \left( u(x^T \theta) - y \right)^2\right) z  \right] + \norm{\Exp[\grad \poploss (\theta)] }{2}^2 \\
& \leq \sup_{z \in \mathbb{S}^{p-1}} \sqrt{\E \left[ (z^T x)^4 \right]} \sqrt{ \E  \left[ \left( u(x^T \theta) - y \right)^4] \right] }+ \norm{\Exp[\grad \poploss (\theta)] }{2}^2
\end{align*}
To bound $\E  \left[ \left( u(x^T \theta) - y \right)^4 \right]$, we make use of the $C_r$ inequality. \\ \\ 
\textbf{${C_r}$ inequality}. If X and Y are random variables such that $\Exp |X|^r < \infty$ and $\Exp |Y|^4 < \infty$ where $r \geq 1$ then:
\[ \Exp |X + Y|^{r} \leq 2^{r-1} \left( \Exp |X|^r + \Exp |Y|^r \right)\]
Using the $C_r$ inequality, we have that
\begin{align*}
\E  \left[ \left( u(x^T \theta) - y \right)^4 \right] & \leq 8 \left( \E  \left[ \left( u(x^T \theta) - u(x^T \true) \right)^4 \right]  + \E  \left[ \left( u(x^T \true) - y \right)^4 \right]  \right) \\
& \leq C \left( L_{\link,4} \norm{\Delta}{2}^4 + B_{\link,4} + c(\sigma)^3 M_{\link,4,1} + 3 c(\sigma)^2 M_{\link,2,2}\right) 
\end{align*}
where the last line follows from our assumption that $P_{\tparam} (y | x)$ is in the exponential family, hence, the cumulants are higher order derivatives of the log-normalization function.
\begin{align*}
& \norm{\cov(\grad \poploss(\theta))}{2}  \leq \sqrt{C} \sqrt{C_4} \norm{\Sigma}{2} \left( \sqrt{L_{\link,4}} \norm{\Delta}{2}^2 + \sqrt{B_{\link,4}} + c(\sigma) \sqrt{3M_{\link,2,2}} + \sqrt{c(\sigma)^3 M_{\link,4,1}} \right) + \norm{\Exp[\grad \poploss (\theta)] }{2}^2 \\
& \leq \sqrt{C} \sqrt{C_4} \norm{\Sigma}{2} \left( \sqrt{L_{\link,4}} \norm{\Delta}{2}^2 + \sqrt{B_{\link,4}} + c(\sigma) \sqrt{3M_{\link,2,2}} + \sqrt{c(\sigma)^3 M_{\link,4,1}}\right) + C_1^2 \norm{\Sigma}{2} \left(  L_{\link,2}\norm{\Delta}{2}^2 + B_{\link,2} \right) \\
& \leq C \norm{\Delta}{2}^2 \norm{\Sigma}{2} \left( \sqrt{C_4} \sqrt{L_{\link,4}} + L_{\link,2}\right) + C_6 \norm{\Sigma}{2} \left(B_{\link,2} + \sqrt{B_{\link,4}} + c(\sigma) \sqrt{3M_{\link,2,2}} +  \sqrt{c(\sigma)^3 M_{\link,4,1}} \right)  
\end{align*}
\textbf{Bounded Fourth Moment.}
To show that the fourth moment of the gradient distribution is bounded, we have
\begin{align*}
 \Exp \left[ \left[ (\grad \poploss(\theta) - \Exp[\grad \poploss(\theta])^T v \right]^4 \right]  &\leq \Exp \left[ \left[ \left|(\grad \poploss(\theta) - \Exp[\grad \poploss(\theta)])^Tv \right| \right]^4  \right]  \\
 & \leq 8 \left[ \underbrace{\Exp [ | \grad \poploss(\theta])^T v |^4] }_{A}+ \underbrace{\Exp [ |\Exp[\grad \poploss(\theta)]^Tv |^4] }_{B} \right] .
\end{align*}
\textbf{Control of A.} 
\begin{align*}
\Exp [ | \grad \poploss(\theta])^T v |^4] &=  \Exp [ (x^T v)^4 (u(x^T \theta) - y)^4] \\
& \leq \sqrt{ \Exp [(x^T v)^8] } \sqrt{ \Exp[(u(x^T \theta) - y)^8]} \\
& \leq \sqrt{C_8} \norm{\Sigma}{2}^2 \sqrt{\E [(u(x^T \theta) - u(x^T \true))^8] + \E[(u(x^T \true) - y)^8]} \\
& \leq \sqrt{C_8} \norm{\Sigma}{2}^2 \sqrt{L_{\link,8} \norm{\Delta}{2}^8 + B_{\link,8} + \sum_{t , k =2}^8 g_{t,k} M_{\link,t,k}}\\
& \leq \sqrt{C} \norm{\Sigma}{2}^2 \sqrt{L_{\link,8}} \norm{\Delta}{2}^4  + \sqrt{B_{\link,8}} + \sqrt{\sum_{t , k =2}^8 g_{t,k} M_{\link,t,k}}
\end{align*}
where the last step follows from the fact that the 8th central moment can be written as a polynomial involving the lower cumulants, which in turn are the derivatives of the log-normalization function.

\textbf{Control of B.}
\begin{align*}
\Exp [ |\Exp[\grad \poploss(\theta)]^Tv |^4] \leq \norm{\E [\grad \poploss (\theta)}{2}^4 \leq C_1 \norm{\Sigma}{2}^2 \left( L_{\link,2}^2 \norm{\Delta}{2}^2 + B_{\link,2}^2 \right)
\end{align*}
By assumption $L_{\link,k}, B_{\link,k},M_{\link,t,k}$ are all bounded for $k,t \leq 8$, which implies that there exist constants $c_1,c_2>0$ such that 
\begin{align}
\Exp \left[ \left[ (\grad \poploss(\theta) - \Exp[\grad \poploss(\theta])^T v \right]^4 \right] \leq   c_1 \norm{\Sigma}{2}^2 \norm{\Delta}{2}^4 + c_2
\end{align}
Previously, we say that $\norm{\cov{\grad \poploss(\theta)}}{2} \leq  c_3 \norm{\Sigma}{2} \norm{\Delta}{2}^2 + c_4$, for some universal constants $c_3,c_4>0$, hence the gradient $\grad \poploss(\theta)$ has bounded fourth moments.
\end{proof}

Having studied the distribution of the gradients, we use Lemma~\ref{lem:lai2016agnostic} to characterize the stability of Huber Gradient estimator. Using Lemma~\ref{lem:lai2016agnostic}, we know that at any point $\theta$, the Huber Gradient Estimator $g(\theta,\delta/T)$ satisfies that with probability $1- \delta/T$, 
 \[ \norm{g(\theta,\delta/T) - \grad \mathcal{R}(\theta)}{2} \leq C_2 \Big( \epsilon^{\half} + \gamma(\widetilde{n},p,\widetilde{\delta}) \Big)\norm{\cov(\grad \poploss (\theta))}{2}^{\half} \sqrt{\log p}.  \]
Substituting the upper bound on $\norm{\cov(\grad \poploss (\theta))}{2}$ from Lemma~\ref{lem:glmDist}, we get that there are universal constants $C_1,C_2$ such that with probability at least 1 - $\delta/T$
\begin{align}\label{eqn:lllo}
 \norm{g(\theta) - \grad \mathcal{R}(\theta)}{2} \leq & \underbrace{C_1 \Big( \epsilon^{\half} + \gamma(\tiln,p,{\widetildelta}) \Big) \sqrt{\log p } \norm{\Sigma}{2}^\half [ L_{\link,4}^{\frac{1}{4}} + L_{\link,2}^{\half}]} _{\alpha(\tiln,\widetildelta)} \norm{\Delta}{2}  \\
 &  + \underbrace{C_2 \Big( \epsilon^{\half} + \gamma(\tiln,p,{\widetildelta}) \Big) \sqrt{\log p } \norm{\Sigma}{2}^\half [ B_{\link,4}^{\frac{1}{4}} + B_{\link,2}^{\half} + c(\sigma)^\half M_{\link,2,2}^{\frac{1}{4}} + c(\sigma)^{\frac{3}{4}} M_{\link,4,1}^{\frac{1}{4}}]}_{\beta(\tiln,\widetildelta)} 
 \end{align}
To ensure stability of gradient descent, we need that $\alpha(\tiln,\widetildelta) < \tau_\ell$. Using Equation~\eqref{eqn:lllo}, we get that gradient descent is stable as long as the number of samples $n$ is large enough such that $\gamma(\tiln,p,{\widetildelta}) <  \frac{ C_1 \tau_\ell}{\sqrt{\log p } \norm{\Sigma}{2}^\half [ L_{\link,4}^{\frac{1}{4}} + L_{\link,2}^{\half}]}$, and the contamination level is such that  \\ $\epsilon <  \left(  \frac{C_2 \tau_\ell}{\sqrt{\log p } \norm{\Sigma}{2}^\half [ L_{\link,4}^{\frac{1}{4}} + L_{\link,2}^{\half}]} - \gamma(\tiln,p,{\widetildelta}) \right)^2$ for some constants $C_1$ and $C_2$. Plugging the corresponding $\epsilon$ and $\beta(\tiln,\widetildelta)$ into Theorem~\ref{thm:main}, we get back the result of Theorem~\ref{thm:glm}.


\section{Proof of Corollary~\ref{cor:linregress_plugin}}
We begin by studying the distribution of the random variable $xy = xx^T \tparam + x.w$.

\begin{lemma}\label{lem:regressionDist_identity}
Consider the model in Equation~\eqref{eq:linregress}, with $x \sim \calN(0,\calI_p)$ and $w \sim \calN(0,1)$ then there exist universal constants $C_1,C_2$ such that
\begin{align*}
\Exp [ x y ]  &= \tparam \\
\norm{ \cov (xy) }{2} &=  1 + 2 \norm{\tparam}{2}^2 \\
\text{Bounded fourth moments}~~~~\Exp & \left[ \left[ ( xy   - \Exp [xy])^T v  \right]^4 \right]
 \leq C_2 ( \var [(xy)^T v])^2.
 \end{align*}
\end{lemma}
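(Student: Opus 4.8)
The plan is to regard $xy = xx^T\tparam + xw$ as a quadratic polynomial in the jointly Gaussian pair $(x,w)$, read off the mean and covariance by direct moment computations, and then obtain the fourth-moment bound from Gaussian hypercontractivity.

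First I would compute the mean. Since $x \sim \calN(0,\calI_p)$ has $\E[xx^T] = \calI_p$, and $w$ is independent of $x$ with $\E[w]=0$, linearity gives $\E[xy] = \E[xx^T]\tparam + \E[x]\E[w] = \tparam$.

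Next I would expand the covariance as $\cov(xy) = \E[(xy-\tparam)(xy-\tparam)^T]$, where $xy - \tparam = (xx^T - \calI_p)\tparam + xw$. The two cross terms are linear in $w$ and vanish by independence and $\E[w]=0$, while $\E[w^2 xx^T] = \calI_p$. For the remaining term I would use the Gaussian fourth-moment (Isserlis) identity $\E[xx^T A\, xx^T] = 2A + \mathrm{tr}(A)\calI_p$, valid for symmetric $A$ and $x \sim \calN(0,\calI_p)$; with $A = \tparam\tparam^T$ this gives $\E[(xx^T-\calI_p)\tparam\tparam^T(xx^T-\calI_p)] = \tparam\tparam^T + \norm{\tparam}{2}^2\calI_p$. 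Collecting the pieces yields $\cov(xy) = \tparam\tparam^T + (1 + \norm{\tparam}{2}^2)\calI_p$, whose eigenvalue along $\tparam$ is $1 + 2\norm{\tparam}{2}^2$ and along directions orthogonal to $\tparam$ is $1 + \norm{\tparam}{2}^2$; hence $\norm{\cov(xy)}{2} = 1 + 2\norm{\tparam}{2}^2$.

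For the bounded-fourth-moment claim, I would fix a unit vector $v$ and write the centered scalar $Z_v = (xy - \tparam)^Tv = (v^Tx)(x^T\tparam) - \inprod{v}{\tparam} + w\,(v^Tx)$, which is a degree-two polynomial in the standard Gaussian vector $(x,w)$ and has $\E[Z_v]=0$. Gaussian hypercontractivity states that a degree-$d$ polynomial $Z$ of independent standard Gaussians obeys $\norm{Z}{4} \le 3^{d/2}\norm{Z}{2}$; with $d=2$ this gives $\E[Z_v^4] \le 81\,(\E[Z_v^2])^2 = 81\,(\var[(xy)^Tv])^2$, i.e.\ the claim with $C_2 = 81$. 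I expect the one point that needs care, rather than the moment computations themselves, is that the constant is genuinely universal: the variance $v^T\cov(xy)v$ can be as large as $1 + 2\norm{\tparam}{2}^2$, but the inequality bounds the \emph{ratio} of the fourth central moment to the squared variance, and hypercontractivity controls this ratio through the fixed polynomial degree alone, independently of $v$ and $\tparam$. If one wishes to avoid invoking hypercontractivity, the same bound follows by expanding $\E[Z_v^4]$ using the $C_r$ inequality and Cauchy--Schwarz, exactly as in the proof of Lemma~\ref{lem:glmDist}, reducing everything to moments of $v^Tx$, $x^T\tparam$ and $w$ of order at most eight; the bookkeeping is routine, provided one checks that the highest-order-in-$\norm{\tparam}{2}$ contributions reassemble into $(\var[(xy)^Tv])^2$ and are not dominated by spurious larger terms.
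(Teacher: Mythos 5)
Your proposal is correct, and your mean and covariance computations match the paper's in substance: the paper also reduces to $\cov(xy) = \Exp[(xx^T - \calI_p)\tparam\tparam^T(xx^T-\calI_p)] + \calI_p$, but evaluates the middle expectation by writing out the matrix $(xx^T-\calI_p)\tparam$ entry by entry rather than invoking the Isserlis identity; both routes land on $\cov(xy) = (1+\norm{\tparam}{2}^2)\calI_p + \tparam\tparam^T$ and the operator norm $1+2\norm{\tparam}{2}^2$. Where you genuinely diverge is the fourth-moment claim. The paper follows the elementary route you sketch as your fallback: two applications of the $C_r$ inequality split the centered variable into the three terms $(\tparam^Tx)(x^Tv)$, $\tparam^Tv$, and $w(x^Tv)$, each controlled by Cauchy--Schwarz and Gaussian moments to get an upper bound of order $c + \norm{\tparam}{2}^4$, which is then compared against $(\var[(xy)^Tv])^2$ with the remark that "both scale with $\norm{\tparam}{2}^4$." As you correctly anticipate, making that comparison rigorous requires the uniform lower bound $\var[(xy)^Tv] = 1 + \norm{\tparam}{2}^2 + (\tparam^Tv)^2 \geq 1+\norm{\tparam}{2}^2$ for every unit $v$ (available here since $\cov(xy) \succeq (1+\norm{\tparam}{2}^2)\calI_p$), a step the paper leaves implicit. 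Your hypercontractivity argument sidesteps this entirely: since $(xy-\tparam)^Tv$ is a centered degree-two polynomial in the standard Gaussian vector $(x,w)$, the bound $\Exp[Z_v^4] \leq 81\,(\Exp[Z_v^2])^2$ controls the ratio directly, uniformly in $v$ and $\tparam$, and yields the explicit constant $C_2 = 81$. What each approach buys: yours is shorter, gives a concrete universal constant, and eliminates the direction-dependence bookkeeping; the paper's is more elementary and is the template that extends beyond the Gaussian case to distributions with only bounded eighth moments, which is exactly how the analogous claim is proved for the general GLM gradient distribution in Lemma~\ref{lem:glmDist}.
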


\begin{proof} 
\textbf{Mean.}
\begin{align}
xy &= xx^T \tparam + x.w \\
\Exp[xy] &= \Exp[xx^T \tparam + x.w] \\
\Exp[xy]  = \tparam.
\end{align}

\paragraph{Covariance.}
\begin{align}
\cov(xy) &=  \Exp [ (xx^T - I) \tparam + x.w) ((xx^T - I) \tparam + x.w)^T) ] \\
 \cov(xy)&=  \Exp[(xx^T - I)\tparam {\tparam}^T (xx^T - I)] + I_p .
\end{align}
Now, $Z = (xx^T - I)\tparam$ can be written as:
\[  (xx^T - I)\tparam =  \begin{bmatrix}
(x_1^2 - 1)  & x_1x_2 & \ldots & x_1 x_p \\
x_1 x_2 & (x_2^2 - 1) & \ldots & x_2 x_p \\
\vdots &  \vdots & \vdots & \vdots  \\
x_1 x_p & x_2 x_p & \ldots & (x_p^2 - 1)
\end{bmatrix} 
\begin{bmatrix}
\tparam_1 \\ \tparam_2 \\ \vdots \\ \tparam_p
\end{bmatrix}  = \begin{bmatrix}
\tparam_1(x_1^2 - 1 ) + x_1 x_2 \tparam_2 + \ldots + x_1 x_p \tparam_p \\ 
x_1 x_2 \tparam_1 + (x_2^2 - 1) \tparam_2  + \ldots + x_2 x_p \tparam_p \\ 
\vdots \\ 
x_1 x_p \tparam_1 + x_2 x_p \tparam_2  + \ldots + (x_p^2 - 1) \tparam_p 
\end{bmatrix} . \]
Then,
\[ \Exp \left[ ZZ^T \right] =  \begin{bmatrix}
2 {\tparam_1}^2 + {\tparam_2}^2 + \ldots + {\tparam_p}^2  &  \tparam_1 \tparam_2 & \ldots & \tparam_1 \tparam_p \\
\tparam_1\tparam_2 &  {\tparam_1}^2 + 2 {\tparam_2}^2 + \ldots + {\tparam_p}^2 & \ldots & \tparam_2 \tparam_p \\
\vdots & \vdots & \ddots & \vdots \\
\tparam_p \tparam_1 & \tparam_2 \tparam_p &  \ldots &  {\tparam_1}^2 + {\tparam_2}^2 + \ldots + 2{\tparam_p}^2
\end{bmatrix} .\]
Hence the covariance matrix can be written as:
\[\cov(xy) = I_p(1 + \norm{\tparam}{2}^2) + \tparam {\tparam}^T. \]
Therefore $ \norm{\cov (xy)}{2} = 1+2 \norm{\tparam}{2}^2$.

\paragraph{Bounded Fourth Moment.} We start from the LHS

\begin{align}
 \Exp \left[ \left[ (xy - \Exp[xy])^T v \right]^4 \right] & \leq \Exp \left[ \left[ \left|(xy - \Exp[xy])^Tv \right| \right]^4  \right] \\
&=  \Exp \left[ \left|( (xx^T - I)\tparam + w x)^T v \right| \right]^4 \\
& =  \Exp \left[ \left| ({\tparam}^T x) (x^T v) - {\tparam}^T v +  w v^T x \right| \right]^4 \\
& \leq 8 \left[ 8 \left[ \underbrace{\Exp \left| ({\tparam}^Tx) (x^T v) \right|^4}_{A} +  \underbrace{\Exp \left| {\tparam}^T v\right|^4}_{B} \right] + \Exp \underbrace{\left| w(x^Tv) \right|^4}_{C} . \right] 
\end{align}
The last line follows from two applications of the following inequality:\\ \\
\textbf{${C_r}$ inequality}. If X and Y are random variables such that $\Exp |X|^r < \infty$ and $\Exp |Y|^4 < \infty$ where $r \geq 1$ then:
\[ \Exp |X + Y|^{r} \leq 2^{r-1} \left( \Exp |X|^r + \Exp |Y|^r \right).\]
 
 Now to control each term on:
\bit
\item \textbf{Control of $A$}. Using Cauchy Schwartz, and normality of 1D projections of normal distribution  
\begin{align}
A & \leq \sqrt{\Exp[|{\tparam}^T x|^8]} \sqrt{\Exp[|x^T v|^8]} \\
 & \precsim \norm{\tparam}{2}^4.
\end{align}
\item \textbf{Control of $B$}, $B \leq \norm{\tparam}{2}^4$.
\item  \textbf{Control of $C$}, $C  = O(1)$, using independence of $w$ and normality of 1D projections of normal distribution. 
\eit
Therefore the $ \Exp \left[ \left[ (xy - \Exp[xy])^T v \right]^4 \right] \precsim c + \norm{\tparam}{2}^4$. \\

For the RHS:
\[ \var((xy)^T v)^2 = (v^T \cov(xy) v)^2 \leq \norm{\cov(xy)}{2}^2  .\]
We saw that the $\norm{\cov(xy)}{2} \precsim c + \norm{\tparam}{2}^2$, so both the LHS and RHS scale with $\norm{\tparam}{2}^4$. Hence, $xy$ has bounded fourth moments.
\end{proof}
Now that we've established that $xy$ has bounded fourth moments implies that we can use \citep{lai2016agnostic} as a mean estimation oracle. Using Theorem~1.3 \citep{lai2016agnostic}, we know that the oracle of \citep{lai2016agnostic} outputs an estimate $\eparam$ of $\Exp[xy]$ such that with probability at least $1 - 1/p^{C_1}$, we have:
\[ \norm{\eparam - \tparam}{2} \leq   C_2 \sqrt{ \norm{\cov(xy)}{2}  \log p} \Big( \epsilon^{\half} + \gamma(n,p,\delta,\epsilon) \Big) \]
 Using Lemma~\ref{lem:regressionDist_identity} to subsitute $\norm{\cov(xy)}{2} \leq 1 + 2 \norm{\tparam}{2}^2)$, we recover the statement of Corollary~\ref{cor:linregress_plugin}.

 \section{Proof of Theorem~\ref{cor:expFamily_Iterative}}
 To prove our result on Robust Exponential Family, we first study the distribution of gradients of the corresponding risk function. 

\begin{lemma}\label{lem:expFamilyGrad}
Consider the model in Equation~\eqref{eq:expfamily}, then there exists a universal constant $C_1$ such that
\begin{align}
& \Exp[\grad \poploss(\theta)] = \grad A (\theta) - \grad A(\tparam) \\
& \norm{\cov[\grad \poploss(\theta)]}{2} = \norm{\grad^2 A(\tparam)}{2}  \\
\text{Bounded fourth moments} \  \ & \Exp \left[ \left[ ( \grad \poploss(\theta)  - \Exp [\grad \poploss(\theta)])^T v  \right]^4 \right] \leq C_1( \var [\grad \poploss(\theta)^T v])^2 .
\end{align}
\end{lemma}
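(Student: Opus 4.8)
The plan is to exploit the special structure of the exponential-family loss, whose gradient is an \emph{affine} function of the sufficient statistic $\phi(z)$, so that all three claims reduce to standard log-partition-function identities together with the assumed moment bound on $\phi(z)$.

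First I would differentiate the loss in~\eqref{eq:expfamily} to obtain
\[
\grad \poploss(\theta; z) = -\phi(z) + \grad A(\theta).
\]
The only randomness here is carried by $\phi(z)$; the term $\grad A(\theta)$ is a deterministic vector. I would then invoke the two textbook exponential-family identities, obtained by differentiating the normalization constraint of $P_{\tparam}$: $\Exp_{z \sim P}[\phi(z)] = \grad A(\tparam)$ and $\cov_{z \sim P}[\phi(z)] = \grad^2 A(\tparam)$, where expectations are taken under the true distribution $P = P_{\tparam}$.

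For the mean, taking $\Exp_{z \sim P}$ of the gradient and substituting $\Exp[\phi(z)] = \grad A(\tparam)$ gives $\Exp[\grad \poploss(\theta)] = \grad A(\theta) - \grad A(\tparam)$, which is the first claim. For the covariance, the deterministic shift $\grad A(\theta)$ drops out, so that $\cov[\grad \poploss(\theta)] = \cov[\phi(z)] = \grad^2 A(\tparam)$, and taking operator norms yields $\norm{\cov[\grad \poploss(\theta)]}{2} = \norm{\grad^2 A(\tparam)}{2}$, the second claim. For the fourth-moment bound, the key observation is that the \emph{centered} gradient simplifies, independently of $\theta$:
\[
\grad \poploss(\theta) - \Exp[\grad \poploss(\theta)] = -\bigl(\phi(z) - \grad A(\tparam)\bigr) = -\bigl(\phi(z) - \Exp[\phi(z)]\bigr).
\]
Hence, for any unit vector $v$, both sides of the desired inequality are moments of the scalar $\inprod{\phi(z) - \Exp[\phi(z)]}{v}$: the left-hand side is its fourth moment, while $\var[\grad \poploss(\theta)^T v]$ equals its second moment (again since the constant shift does not affect the variance). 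The claim is therefore precisely the bounded-fourth-moment condition~\eqref{eqn:bounded_moment} with $k=2$ applied to $\phi(z)$, which holds by the standing assumption of this section, and we may take $C_1 = C_4$.

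I do not anticipate any genuine obstacle: the proof is a direct computation. The only substantive content is recognizing that the affine structure of the gradient (i) converts the moment identities for $\phi(z)$ into statements about $A$ and its derivatives, and (ii) makes the centered gradient coincide with the centered sufficient statistic, so that the fourth-moment hypothesis transfers verbatim without any further work.
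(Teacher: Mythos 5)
Your proof is correct and follows essentially the same route as the paper's: differentiate the loss to get $\grad \poploss(\theta;z) = \grad A(\theta) - \phi(z)$, apply the standard identities $\Exp[\phi(z)] = \grad A(\tparam)$ and $\cov[\phi(z)] = \grad^2 A(\tparam)$ (the paper derives the first via Fisher consistency, you via differentiating the normalization constraint, which is the same fact), and observe that the centered gradient coincides with the centered sufficient statistic so the fourth-moment assumption on $\phi(z)$ transfers directly with $C_1 = C_4$. Your write-up is actually slightly more explicit than the paper's on the last point, which the paper dispatches in one sentence.
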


 \begin{proof}
By Fisher Consistency of the negative log-likelihood, we know that
 \begin{align}
 \Exp_{\tparam}[\grad \poploss(\tparam)] =  0  \\
 \implies \grad A(\tparam) - \Exp_{\tparam}[\phi(z)]  = 0 \\
 \implies \grad A(\tparam)  = \Exp_{\tparam}[\phi(z)] .
\end{align}  
For the mean, 
\begin{align}
\grad \poploss (\theta) & = \grad A(\theta) - \phi(z) \\
\Exp[\grad \poploss(\theta)] & = \grad A(\theta) - \Exp_{\tparam}[\phi(z)]  \\
\Exp[\grad \poploss(\theta)] & = \grad A(\theta) - \grad A(\tparam).
\end{align}
Now, for the covariance:
\begin{align*}
\cov_{\tparam}  [\grad \poploss(\theta)] & = \Exp_{\tparam} \left[ \left(\grad \poploss(\theta) - \Exp_{\tparam}[\grad \poploss(\theta)] \right) \left(\grad \poploss(\theta) - \Exp_{\tparam}[\grad(\poploss(\theta)] \right)^T \right] \\
&= \Exp_{\tparam} \left[ \left( \grad A(\tparam) - \phi(z) \right) \left( \grad A(\tparam) - \phi(z) \right)^T \right] \\
&= \cov_{\tparam} \left[ \grad \poploss (\tparam) \right] = \grad^2 A (\tparam).
\end{align*}
Bounded moments follows from our assumption that the sufficient statistics have bounded 4th moments.
\end{proof}

Having studied the distribution of the gradients, we use Lemma~\ref{lem:lai2016agnostic} to characterize the stability of Huber Gradient estimator. Using Lemma~\ref{lem:lai2016agnostic}, we know that at any point $\theta$, the Huber Gradient Estimator $g(\theta,\delta/T)$ satisfies that with probability $1- \delta/T$, 
 \[ \norm{g(\theta,\delta/T) - \grad \mathcal{R}(\theta)}{2} \leq C_2 \Big( \epsilon^{\half} + \gamma(\widetilde{n},p,\widetilde{\delta}) \Big)\norm{\cov(\grad \poploss (\theta))}{2}^{\half} \sqrt{\log p}.  \]
Substituting the upper bound on $\norm{\cov(\grad \poploss (\theta))}{2}$ from Lemma~\ref{lem:expFamilyGrad}, we get that there are universal constants $C_1,C_2$ such that 
 \begin{align*}
 \norm{g(\theta) - \grad \mathcal{R}(\theta)}{2} \leq \underbrace{C_1 \Big( \epsilon^{\half} + \gamma(\tiln,p,\widetildelta) \Big) \sqrt{\log p} \sqrt{\tau_u}}_{\beta(\tiln,\widetildelta)} .
 \end{align*}
 
 In this case we have that $\alpha(\tiln,\widetildelta) = 0 < \tau_\ell$ by assumption. Therefore we just have that $\epsilon < C_1$ for some universal constant $C_1$. Plugging the corresponding $\epsilon$ and $\beta(\tiln,\widetildelta)$ into Theorem~\ref{thm:main}, we get back the result of Corollary~\ref{cor:expFamily_Iterative}.
 
 \section{Proof of Corollary~\ref{cor:expFamily_Plugin}}
Using the contraction property of projections, we know that
\[ \norm{\calP_{\Theta} \left[ (\grad A)\inv \emu \right] - \tparam}{2} =  \norm{\calP_{\Theta} \left[ (\grad A)\inv \emu \right] - \calP_{\Theta} \left[ \tparam \right] }{2} \leq  \norm{(\grad A)\inv \emu - \tparam}{2}.\]

By Fisher Consistency of the negative log-likelihood, we know that
\[ \grad A(\tparam) =   \Exp_{\tparam}[\phi(z)]. \]
The true parameter $\tparam$ can be obtained by inverting the $\grad A$ operator whenever possible. 
\begin{align}
\norm{(\grad A)\inv \emu - \tparam}{2} &= \norm{(\grad A)\inv \emu - (\grad A)\inv\Exp_{\tparam}[\phi(z)] }{2} \\
& = \norm{\grad A^* \emu - \grad A^*\Exp_{\tparam}[\phi(z)] }{2}.
\end{align}{
where $A^*$ is the convex conjugate of $A$. We can use the following result to control the Lipschitz smoothness $A^*$.

\begin{theorem}\label{thm:convexDuality}(Strong/Smooth Duality)
Assume $f(\cdot)$ is closed and convex. Then $f(\cdot)$ is smooth with
parameter $M$ if and only if its convex conjugate $f(\cdot)$ is strongly convex with parameter $m = \frac{1}{M}$.
\end{theorem}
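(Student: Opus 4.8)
The plan is to split the stated equivalence into two one-directional implications and glue them together with Fenchel biconjugation (the statement should of course read ``$f^*$ is strongly convex,'' the conjugate of $f$). First I would record that, because $f$ is closed and convex, $f^{**}=f$. It then suffices to prove: (i) $M$-smoothness of $f$ implies $\tfrac1M$-strong convexity of $f^*$; and (ii) $m$-strong convexity of a closed convex $g$ implies $\tfrac1m$-smoothness of $g^*$. Granting these, the ``only if'' direction is (i) verbatim, and the ``if'' direction follows by applying (ii) to $g=f^*$---which is $\tfrac1M$-strongly convex by hypothesis---and reading off $g^*=f^{**}=f$, so that $f$ is $M$-smooth.

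To prove (i) I would work straight from the variational definition $f^*(y)=\sup_x\{\langle x,y\rangle-f(x)\}$, which sidesteps any need to control the range of $\nabla f$. Fix $y_1,y_2$, let $x_1$ attain the supremum for $f^*(y_1)$ so that $\nabla f(x_1)=y_1$ and $f^*(y_1)=\langle x_1,y_1\rangle-f(x_1)$, and substitute the smoothness upper bound $f(x)\le f(x_1)+\langle y_1,x-x_1\rangle+\tfrac{M}{2}\|x-x_1\|^2$ into the supremum defining $f^*(y_2)$. Maximizing the resulting quadratic lower bound over $x$ (the maximizer being $x_1+\tfrac1M(y_2-y_1)$) would give
\begin{equation*}
f^*(y_2)\ \ge\ f^*(y_1)+\langle x_1,\,y_2-y_1\rangle+\frac{1}{2M}\|y_2-y_1\|^2 ,
\end{equation*}
and since $x_1\in\partial f^*(y_1)$ this is exactly the defining inequality of $\tfrac1M$-strong convexity of $f^*$.

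For (ii) I would instead use the monotonicity route. Strong convexity of the closed function $g$ makes the supremum defining $g^*$ attained at a unique point, so $g^*$ is differentiable with $\nabla g^*(y)$ equal to that maximizer, equivalently $y\in\partial g(\nabla g^*(y))$. Strong convexity also yields $m$-strong monotonicity of $\partial g$, so writing $x_i=\nabla g^*(y_i)$ gives $\langle y_1-y_2,\,x_1-x_2\rangle\ge m\|x_1-x_2\|^2$; one application of Cauchy--Schwarz then forces $\|x_1-x_2\|\le\tfrac1m\|y_1-y_2\|$, i.e.\ $\nabla g^*$ is $\tfrac1m$-Lipschitz, which is $\tfrac1m$-smoothness of $g^*$.

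The hard part is not the two displays but the regularity bookkeeping that makes them legitimate: I would need to fix the convention that $M$-smoothness means $f$ is finite and differentiable on all of $\real^p$ with $M$-Lipschitz gradient, and to justify attainment of the conjugate suprema together with the subgradient inversion $y\in\partial g(x)\iff x\in\partial g^*(y)$. Attainment in (i) is free from the global quadratic upper bound, while in (ii) it comes from the coercivity forced by strong convexity; the biconjugation identity $f^{**}=f$ (again from closedness and convexity) is what lets the two one-sided claims combine into the stated equivalence. These are all standard facts from convex duality, so the real work is assembling them correctly rather than any intricate estimate.
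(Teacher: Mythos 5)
Your overall route is the standard one---it is essentially the argument of \citet{kakade2009applications}, which is exactly the reference the paper points to in lieu of a proof (the theorem is stated inside the proof of Corollary~\ref{cor:expFamily_Plugin} with only the remark that a proof can be found there), so there is no in-paper proof to compare against beyond that citation. Your direction (ii), that $m$-strong convexity of a closed $g$ gives $\tfrac{1}{m}$-smoothness of $g^*$, is complete and correct: strong convexity makes the supremand $\inprod{x}{y} - g(x)$ strongly concave and coercive, so the maximizer exists for every $y$, is unique, equals $\nabla g^*(y)$, and strong monotonicity of $\partial g$ plus Cauchy--Schwarz yields the $\tfrac{1}{m}$-Lipschitz gradient. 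This is also the only direction the paper actually uses ($A$ is $\tau_\ell$-strongly convex, hence $(\grad A)^{-1} = \grad A^*$ is $\tfrac{1}{\tau_\ell}$-Lipschitz in Corollary~\ref{cor:expFamily_Plugin}), so the flaw noted below does not touch the paper's application.

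The gap is in direction (i), specifically your claim that ``attainment in (i) is free from the global quadratic upper bound.'' That justification is wrong: the quadratic upper bound on $f$ produces a finite \emph{lower} bound on $f^*(y_2)$ (this is precisely your display), which is the wrong direction to force the superlevel sets of $x \mapsto \inprod{x}{y} - f(x)$ to be bounded, i.e., it does not give attainment. Concretely, $f(x) = \log(1+e^x)$ is $\tfrac14$-smooth, yet $\mathrm{dom}\, f^* = [0,1]$, and at $y_1 = 0$ the supremum $\sup_x \{-f(x)\} = 0$ is approached only as $x \to -\infty$ and $\partial f^*(0) = \emptyset$, so your inequality cannot even be written at that base point; attainment holds exactly for $y_1 \in \mathrm{range}(\nabla f)$, which may be a proper subset of $\mathrm{dom}\, f^*$. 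The repair is standard and localized: run your computation only at $y_1 \in \mathrm{range}(\nabla f)$, note that this set contains $\mathrm{ri}(\mathrm{dom}\, f^*)$ since $\mathrm{ri}(\mathrm{dom}\, f^*) \subseteq \mathrm{range}(\partial f)$, conclude that $h = f^* - \tfrac{1}{2M}\| \cdot \|_2^2$ admits a subgradient at every point of $\mathrm{ri}(\mathrm{dom}\, h)$ and is therefore convex on its relative interior, and finally use closedness of $f^*$ to extend convexity of $h$ to all of its domain, which is the definition of $\tfrac{1}{M}$-strong convexity of $f^*$. With that one patch, your proof is correct and coincides with the argument in the cited literature; the rest of your bookkeeping (biconjugation $f^{**} = f$ to glue the two implications, the maximizer $x_1 + \tfrac{1}{M}(y_2 - y_1)$, and the subgradient inversion) is sound.
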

A proof of the above theorem can be found in \cite{kakade2009applications}. Hence, we have that:
\begin{equation}\label{eq:111}
\norm{\calP_{\Theta} \left[ (\grad A)\inv \emu \right] - \tparam}{2} \leq \frac{1}{\tau_\ell} \norm{\emu - \Exp_{\tparam}[\phi(z)] }{2} 
\end{equation} 
By assumption, we have that the fourth moments of the sufficient statistics are bounded. We also know that $\cov(\phi(z) = \grad^2 A(\tparam)$ which implies that we can use \citep{lai2016agnostic} as our oracle. Using Lemma~\ref{lem:lai2016agnostic}, we get that, there exists universal constants $C_1,C_2$ such that with probability at least $ 1 - 1/p^{C_1}$, 
\[ \norm{\emu - \Exp_{\tparam}[\phi(z)] }{2} \leq  C_2 \sqrt{\tau_u \log p} \Big( \epsilon^{\half} + \gamma(n,p,\delta,\epsilon) \Big).\]
Combining the above with Equation~\eqref{eq:111} recovers the result of Corollary~\ref{cor:expFamily_Plugin}.

\section{Proof of Theorem~\ref{corr:heavy_linregress}}
Before we present the proof of Theorem~\ref{corr:heavy_linregress}, we first study the distribution of gradients of the loss function. This will help us bound the error in the gradient estimator.

\begin{lemma}\label{lem:heavyTailedRegressionDist}
Consider the model in Equation~\eqref{eq:linregress}. Suppose the covariates $x \in \real^p$ have bounded $4^{th}$-moments and the noise $w$ has bounded $2^{th}$ moments. Then there exist universal constants $C_1,C_2$ such that
\begin{align*}
\Exp [ \grad \poploss (\theta) ]  &=  \Sigma \Delta \\
\norm{ \cov (\grad \poploss(\theta) }{2} &\leq  \sigma^2 \norm{\Sigma}{2} + C_1 \norm{\Delta}{2}^2\norm{\Sigma}{2}^2,
 \end{align*}
where $\Delta = \theta - \tparam$ and $E[xx^T] = \Sigma$.
\end{lemma}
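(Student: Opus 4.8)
The plan is to compute the gradient of the squared loss in closed form, read off its mean, and then control the operator norm of its covariance using the bounded fourth-moment assumption on the covariates. Since $y = x^T\tparam + w$, writing $\Delta = \theta - \tparam$ gives $y - x^T\theta = w - x^T\Delta$, so the gradient is $\grad\poploss(\theta) = (x^T\theta - y)x = (xx^T)\Delta - xw$. Taking expectations and using that the noise $w$ has mean zero and is independent of the mean-zero covariate $x$ (so that $\Exp[xw] = 0$ and $\Exp[x] = 0$), the first claim $\Exp[\grad\poploss(\theta)] = \Sigma\Delta$ follows immediately.

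For the covariance, I would center the gradient as $\grad\poploss(\theta) - \Exp[\grad\poploss(\theta)] = (xx^T - \Sigma)\Delta - xw$, expand the outer product, and again invoke independence and $\Exp[w]=0$ so that the two cross terms vanish; the noise term then contributes $\Exp[w^2 xx^T] = \sigma^2\Sigma$. This leaves $\cov(\grad\poploss(\theta)) = \Exp[(xx^T-\Sigma)\Delta\Delta^T(xx^T-\Sigma)] + \sigma^2\Sigma$, so by the triangle inequality for the operator norm it suffices to bound $\norm{\Exp[(xx^T-\Sigma)\Delta\Delta^T(xx^T-\Sigma)]}{2}$.

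The main work, and the only place the moment assumption enters, is bounding this last PSD matrix. Since it has the form $\Exp[ZZ^T]$ with $Z = (xx^T-\Sigma)\Delta$, its operator norm equals $\sup_{\norm{y}{2}=1}\Exp[(y^T(xx^T-\Sigma)\Delta)^2]$. Expanding $y^T(xx^T-\Sigma)\Delta = (y^Tx)(x^T\Delta) - y^T\Sigma\Delta$ and applying the elementary inequality $(a-b)^2 \le 2a^2+2b^2$, I would split the bound into $2\Exp[(y^Tx)^2(x^T\Delta)^2]$ and $2(y^T\Sigma\Delta)^2$. The second term is at most $2\norm{\Sigma}{2}^2\norm{\Delta}{2}^2$ by Cauchy--Schwarz. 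For the first, factoring out $\norm{\Delta}{2}^2$ and applying Cauchy--Schwarz gives $\Exp[(y^Tx)^2(x^T\Delta)^2] \le \norm{\Delta}{2}^2\sqrt{\Exp[(y^Tx)^4]\,\Exp[(z^Tx)^4]}$ for the unit vector $z = \Delta/\norm{\Delta}{2}$; the bounded fourth-moment condition then controls each quartic moment by $C_4(y^T\Sigma y)^2 \le C_4\norm{\Sigma}{2}^2$. Collecting terms yields $\norm{\Exp[ZZ^T]}{2} \le C_1\norm{\Delta}{2}^2\norm{\Sigma}{2}^2$ for a universal constant $C_1$, which together with the $\sigma^2\Sigma$ term gives the stated bound.

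I expect no genuine obstacle here: the argument is a direct moment computation, and the only assumptions it consumes are bounded fourth moments of the covariates (so that $\Exp[(y^Tx)^4]$ is finite and controlled by $\norm{\Sigma}{2}^2$) and bounded second moments of the noise (so that $\Exp[w^2]=\sigma^2 < \infty$), which are exactly the stated heavy-tailed hypotheses. In contrast to the Huber-contamination setting, no bound on the fourth moment of the gradient itself is needed, since the heavy-tailed gradient estimator is governed only by the trace and operator norm of the covariance via Lemma~\ref{lem:gmom_concentration}.
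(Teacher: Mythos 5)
Your proposal is correct and follows essentially the same route as the paper's proof: the identical gradient and covariance decomposition $\cov(\grad \poploss(\theta)) = \Exp[(xx^T-\Sigma)\Delta\Delta^T(xx^T-\Sigma)] + \sigma^2\Sigma$, followed by the same expansion via $(a-b)^2 \le 2a^2 + 2b^2$, Cauchy--Schwarz, and the bounded fourth-moment condition on the covariates. The only cosmetic difference is that you fix the unit vector $z = \Delta/\norm{\Delta}{2}$ where the paper takes a supremum over both unit vectors $y$ and $z$; the resulting bound is the same.
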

\begin{proof} We start by deriving the results for $\Exp [ \grad \poploss (\theta) ]$.
\begin{align}
\poploss(\theta) =  \half (y - x^T \theta)^2 = \half (x^T(\Delta) - w)^2 \\
\grad \poploss(\theta) =  xx^T \Delta -  x.w \\
 \Exp [\grad \poploss (\theta) ] =  \Sigma \Delta .
\end{align}
Next, we bound the operator norm of the covariance of the gradients $\grad \poploss (\theta)$ at any point $\theta$.
\paragraph{Covariance.}
\begin{align}
\cov(\grad \poploss(\theta)) &=  \Exp [ (xx^T - \Sigma) \Delta - x.w) ((xx^T - \Sigma) \Delta - x.w)^T) ] \\
 \cov(\grad \poploss(\theta))&=  \Exp[(xx^T - \Sigma)\Delta \Delta^T (xx^T - \Sigma)] + \sigma^2 \Sigma .
\end{align}
Now, we want to bound $\norm{\cov(\grad \poploss(\theta))}{2} = \lambda_{\max}( \cov(\grad \poploss(\theta)))$.
\begin{align}
\lambda_{\max}( \cov(\grad \poploss(\theta))) &\leq \sigma^2 \lambda_{\max}(\Sigma) + \lambda_{\max}\left(\Exp[(xx^T - \Sigma)\Delta \Delta^T (xx^T - \Sigma)]\right)\\
& \leq \sigma^2 \lambda_{\max}(\Sigma) + \sup_{y \in \mathbb{S}^{p-1} } y^T\left(\Exp[(xx^T - \Sigma)\Delta \Delta^T (xx^T - \Sigma)]\right) y \\
& \leq \sigma^2 \lambda_{\max}(\Sigma) + \sup_{y \in \mathbb{S}^{p-1} } y^T\left(\Exp[(xx^T - \Sigma)\Delta \Delta^T (xx^T - \Sigma)]\right) y \\
& \leq \sigma^2 \lambda_{\max}(\Sigma) + \norm{\Delta}{2}^2 \sup \limits_{y,z \in \mathbb{S}^{p-1}} \Exp \left[ (y^T (xx^T - \Sigma) z)^2 \right] \\
& \leq \sigma^2 \lambda_{\max}(\Sigma) + \norm{\Delta}{2}^2 \sup \limits_{y,z \in \mathbb{S}^{p-1}} \left( \Exp \left[ 2 (y^T x)^2 (x^T z) ^2  + 2 (y^T \Sigma z)^2 \right] \right) \\
&\leq \sigma^2 \lambda_{\max}(\Sigma) +2 \norm{\Delta}{2}^2 \sup \limits_{y,z \in \mathbb{S}^{p-1}} \left( \Exp \left[ (y^T x)^2 (x^T z) ^2 \right]  +  \norm{\Sigma}{2}^2 \right) \\
&\leq \sigma^2 \lambda_{\max}(\Sigma) +2 \norm{\Delta}{2}^2 \sup \limits_{y,z \in \mathbb{S}^{p-1}} \left( \Exp \left[ (y^T x)^2 (x^T z) ^2 \right]  +  \norm{\Sigma}{2}^2 \right) \\
& \leq \sigma^2 \lambda_{\max}(\Sigma) +2 \norm{\Delta}{2}^2 \sup \limits_{y,z \in \mathbb{S}^{p-1}} \left( \sqrt{\Exp \left[ (y^T x)^4 \right] }  \sqrt{\Exp \left[ (z^T x)^4 \right] }  +  \norm{\Sigma}{2}^2 \right) \\
& \leq \sigma^2 \norm{\Sigma}{2} + 2\norm{\Delta}{2}^2 ( \norm{\Sigma}{2}^2 + C_4 \norm{\Sigma}{2}^2) ,
\end{align}
where the second last step follows from Cauchy-Schwartz and the last step follows from our assumption of bounded 4$^{th}$ moments (see Equation~\eqref{eqn:bounded_moment}).
\end{proof}%
We now proceed to the proof of Theorem \ref{corr:heavy_linregress}. From Lemma~\ref{lem:gmom_concentration}, we know that at any point $\theta$, the gradient estimator described in Algorithm~\ref{algo:heavy_tailed_estimation}, $g(\theta; D_{\tiln}, \tildelta)$,  satisfies the following with probability at least $1-\delta$, 
\begin{equation*}
 \begin{array}{lll}
 \norm{g(\theta; D_{\tiln}, \tildelta) - \grad \mathcal{R}(\theta)}{2} &\leq& C\sqrt{\frac{\text{tr}(\text{Cov}(\nabla \poploss (\theta)))\log{1/\tildelta}}{\tiln}}.
 \end{array}
 \end{equation*}
We substitute the upper bound  for $\norm{\cov(\grad \poploss(\theta))}{2}$ from Lemma~\ref{lem:heavyTailedRegressionDist} in the above equation
 \begin{equation*}
 \begin{array}{lll}
 \norm{g(\theta; D_{\tiln}, \tildelta) - \grad \mathcal{R}(\theta)}{2} &\leq& C\sqrt{\frac{\text{tr}(\text{Cov}(\nabla \poploss (\theta)))\log{1/\tildelta}}{\tiln}} \vspace{0.1in}\\
 &\leq& C\sqrt{\frac{p\left(\sigma^2 \norm{\Sigma}{2} + 2\norm{\Delta}{2}^2 ( \norm{\Sigma}{2}^2 + C_4 \norm{\Sigma}{2}^2)\right)\log{1/\tildelta}}{\tiln}} \vspace{0.1in}\\
 &\leq& \underbrace{C_1\sqrt{\frac{\norm{\Sigma}{2}^2 p\log{1/\tildelta}}{\tiln}}}_{\alpha(\tiln,\tildelta)} \norm{\theta - \tparam}{2} \\
  &&+ \underbrace{C_2  \sigma \sqrt{\frac{\norm{\Sigma}{2} p\log{1/\tildelta}}{\tiln}}}_{\beta(\tiln,\tildelta)}.
 \end{array}
 \end{equation*}
To complete the proof of this theorem, we use the results from Theorem~\ref{thm:main}.
Note that the  gradient estimator satisfies the stability condition if $\alpha(\tiln,\tildelta) < \tau_l$. This holds when $$\tiln > \frac{C_1^2\tau_u^2}{\tau_l^2} p\log{1/\tildelta}.$$ Now suppose $\tiln$ satisfies the above condition, then plugging $\beta(\tiln,\tildelta)$ into Theorem~\ref{thm:main} gives us the required result. 

\section{Proof of Theorem~\ref{thm:heavy_glm}}
To prove the Theorem we use the result from Lemma \ref{lem:glmDist}, where we derived the following expression for covariance of $\nabla \poploss (\theta)$
\begin{align*}
\norm{ \cov (\grad \poploss (\theta) }{2} \leq & C_1 \norm{\Delta}{2}^2 \norm{\Sigma}{2} \left( \sqrt{C_4} \sqrt{L_{\link,4}} + L_{\link,2}\right) \\
 & + C_2 \norm{\Sigma}{2} \left(B_{\link,2} + \sqrt{B_{\link,4}} + c(\sigma) \sqrt{3M_{\link,2,2}} +  \sqrt{c(\sigma)^3 M_{\link,4,1}} \right)  
\end{align*}
From Lemma~\ref{lem:gmom_concentration}, we know that at any point $\theta$, the gradient estimator described in Algorithm~\ref{algo:heavy_tailed_estimation}, $g(\theta; D_{\tiln}, \tildelta)$,  satisfies the following with probability at least $1-\delta$, 
\begin{equation*}
 \begin{array}{lll}
 \norm{g(\theta; D_{\tiln}, \tildelta) - \grad \mathcal{R}(\theta)}{2} &\leq& C\sqrt{\frac{\text{tr}(\text{Cov}(\nabla \poploss (\theta)))\log{1/\tildelta}}{\tiln}}.
 \end{array}
 \end{equation*}
Substituting the upper bound  for $\norm{\cov(\grad \poploss(\theta))}{2}$ in the above equation, we get
 \begin{equation*}
 \begin{array}{lll}
 \norm{g(\theta; D_{\tiln}, \tildelta) - \grad \mathcal{R}(\theta)}{2} &\leq& C\sqrt{\frac{\text{tr}(\text{Cov}(\nabla \poploss (\theta)))\log{1/\tildelta}}{\tiln}} \vspace{0.1in}\\
 &\leq& \underbrace{C_1\sqrt{\frac{\norm{\Sigma}{2} \left( \sqrt{C_4} \sqrt{L_{\link,4}} + L_{\link,2}\right) p\log{1/\tildelta}}{\tiln}}}_{\alpha(\tiln,\tildelta)} \norm{\theta - \tparam}{2} \\
  &&+ \underbrace{C_2  \sqrt{\frac{\norm{\Sigma}{2} \left(B_{\link,2} + \sqrt{B_{\link,4}} + c(\sigma) \sqrt{3M_{\link,2,2}} +  \sqrt{c(\sigma)^3 M_{\link,4,1}} \right)   p\log{1/\tildelta}}{\tiln}}}_{\beta(\tiln,\tildelta)}.
 \end{array}
 \end{equation*}
 We now use the results from Theorem~\ref{thm:main}.  The  gradient estimator satisfies the stability condition if $\alpha(\tiln,\tildelta) < \tau_l$. This holds when $$\tiln > \frac{C_1^2\norm{\Sigma}{2}\left( \sqrt{C_4} \sqrt{L_{\link,4}} + L_{\link,2}\right)}{\tau_l^2} p\log{1/\tildelta}.$$ Now suppose $\tiln$ satisfies the above condition, then plugging $\beta(\tiln,\tildelta)$ into Theorem~\ref{thm:main} gives us the required result.
\section{Proof of Theorem~\ref{cor:heavy_expFamily}}
The proof proceeds along similar lines as the proof of Theorem~\ref{thm:heavy_glm}. To prove the Theorem we utilize the result of Lemma~\ref{lem:expFamilyGrad}, where we showed that
$\norm{\cov[\grad \poploss(\theta)]}{2} = \norm{\grad^2 A(\tparam)}{2}$. 
Combining this result with Lemma~\ref{lem:gmom_concentration} we get that with probability at least $1-\delta$
\begin{equation*}
 \begin{array}{lll}
 \norm{g(\theta; D_{\tiln}, \tildelta) - \grad \mathcal{R}(\theta)}{2} &\leq& C\sqrt{\frac{\text{tr}(\text{Cov}(\nabla \poploss (\theta)))\log{1/\tildelta}}{\tiln}} \vspace{0.1in}\\
 &\leq& \underbrace{C\sqrt{\frac{\norm{\grad^2 A(\tparam)}{2} p\log{1/\tildelta}}{\tiln}}}_{\beta(\tiln,\tildelta)}.
 \end{array}
 \end{equation*}
Since $\alpha(\tiln,\tildelta) = 0$, the stability condition is always satisfied, as long as $\tau_l > 0$. Substituting $\beta(\tiln,\tildelta)$ into Theorem~\ref{thm:main} gives us the required result.
\section{Upper bound on Contamination Level}
 We  provide a complementary result, which gives an upper bound for the contamination level $\epsilon$ based on the initialization point $\theta^0$, above which, Algorithm~1 would not work.  The key idea is that the error incurred by any mean estimation oracle is lower bounded by the variance of the distribution, and that if the zero vector lies within that error ball, then any mean oracle can be forced to output $\mathbf{0}$ as the mean. For Algorithm~1, this implies that, in estimating the mean of the gradient, if the error is high, then one can force the mean to be $\mathbf{0}$ which forces the algorithm to converge. For the remainder of the section we consider the case of linear regression with $x \sim \calN(0,\calI_p)$ in the asymptotic regime of $n \rightarrow \infty$.

  \begin{lemma}\label{lem:lowerBoundGradient}
 Consider the model in equation\eqref{eq:linregress} with $x \sim \calN(0,\calI_p)$ and $w \sim \calN(0,1)$, then there exists a universal constant $C_1$ such that if $\epsilon > C_1 \frac{\norm{\theta^0 - \tparam}{2}}{\sqrt{1+2 \norm{\theta^0 - \tparam}{2}^2}}$, then for every gradient oracle, there exists a contamination distribution $Q$ such that, Algorithm~1 will converge to $\theta^0$ even when the number of samples $n \rightarrow \infty$.
 \end{lemma}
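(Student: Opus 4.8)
The plan is to establish a two-point (non-identifiability) argument at the initialization $\theta^0$. Write $\Delta^0 = \theta^0 - \tparam$ and $r = \norm{\Delta^0}{2}$, and consider two data-generating versions of the linear model~\eqref{eq:linregress}: the true model $M_0$ with regression parameter $\tparam$, and a decoy model $M_1$ whose regression parameter is $\theta^0$ itself. Let $P_0$ and $P_1$ be the laws of the gradient $\grad \poploss(\theta^0; z)$ at the fixed point $\theta^0$ under $M_0$ and $M_1$. A direct computation along the lines of Lemma~\ref{lem:regressionDist_identity} gives $P_0 = \mathcal{L}(xx^T \Delta^0 - xw)$ with mean $\Delta^0$ and $\norm{\cov(P_0)}{2} = 1 + 2r^2$, while $P_1 = \mathcal{L}(-xw)$ has mean $\mathbf{0}$. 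The crux of the construction is that under $M_1$ the true gradient at $\theta^0$ vanishes, so $\theta^0$ is a stationary point of the population risk; if the corrupted gradient data at $\theta^0$ can be made indistinguishable from clean $M_1$-data, the iterate cannot leave $\theta^0$.

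Next I would show that $P_0$ and $P_1$ are close enough for Huber indistinguishability, namely $\tvdist{P_0}{P_1} \le \epsilon/(1-\epsilon)$. Whenever this holds there exist contamination distributions $Q_0, Q_1$ with
\[
(1-\epsilon) P_0 + \epsilon Q_0 = (1-\epsilon) P_1 + \epsilon Q_1 =: M,
\]
because the positive part of $(1-\epsilon)(P_1 - P_0)$ has total mass $(1-\epsilon)\tvdist{P_0}{P_1} \le \epsilon$ and can therefore be absorbed into $\epsilon Q_0$ (and symmetrically into $\epsilon Q_1$). Thus a single corrupted gradient law $M$ at $\theta^0$ is simultaneously a valid $\epsilon$-contamination of a mean-$\Delta^0$ distribution and of a mean-$\mathbf{0}$ distribution, so the population gradient at $\theta^0$ is not identifiable. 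The adversary selects $Q = Q_0$; since any oracle fed $M$ returns the same value it must return in the decoy world $M_1$, where $\theta^0$ is the truth and correctness forces that value to $\mathbf{0}$, the update reduces to $\theta^1 = \calP_{\Theta}(\theta^0) = \theta^0$. As the iterate never moves and each round draws fresh samples from the same $M$, the algorithm remains at $\theta^0$ for all $t$, even as $\tiln \to \infty$.

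The main work, and the step I expect to be the crux, is the total-variation bound $\tvdist{P_0}{P_1} \le C\, r/\sqrt{1+2r^2}$, which converts into the stated threshold $\epsilon > C_1 r/\sqrt{1+2r^2}$. I would obtain it by conditioning on the covariate $x$: given $x$, both $P_0$ and $P_1$ are supported on the line $\mathbb{R}\cdot x$, and in the scalar coordinate they reduce to $\calN(x^T\Delta^0, 1)$ and $\calN(0,1)$ respectively, so the conditional total variation is that of two unit-variance Gaussians with mean gap $\abs{x^T\Delta^0}$, at most $\abs{x^T\Delta^0}/\sqrt{2\pi}$. Averaging over $x$ (a convexity/data-processing inequality for total variation under a shared covariate law) and applying Cauchy--Schwarz gives $\tvdist{P_0}{P_1} \le \Exp \abs{x^T\Delta^0}/\sqrt{2\pi} \le \sqrt{\Exp[(x^T\Delta^0)^2]}/\sqrt{2\pi} = r/\sqrt{2\pi}$; combining with the trivial bound $\tvdist{P_0}{P_1}\le 1$ yields $\tvdist{P_0}{P_1} \le \min(r/\sqrt{2\pi},\,1) \le C\, r/\sqrt{1+2r^2}$, and choosing $C_1$ appropriately forces $\tvdist{P_0}{P_1} \le \epsilon/(1-\epsilon)$. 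The two delicate points to get right are (i) the averaging step that upgrades the conditional Gaussian total variation to the unconditional bound, and (ii) phrasing the ``for every oracle'' conclusion through non-identifiability of the gradient mean rather than through any property of a single estimator, so that the argument does not covertly assume the oracle is well-behaved.
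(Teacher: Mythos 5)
Your proof is correct in substance but takes a genuinely different route from the paper's. The paper's proof is a black-box reduction to known minimax theory: it computes (via Lemma~\ref{lem:regressionDist_identity}) that the gradient distribution at $\theta^0$ has mean $\Delta^0 = \theta^0 - \tparam$ and covariance operator norm $1+2\norm{\Delta^0}{2}^2$, invokes Theorem~2.1 of \citet{chen2015robust} to assert that any mean oracle must incur error $\Omega\bigl(\epsilon\sqrt{1+2\norm{\Delta^0}{2}^2}\bigr)$ in estimating the gradient mean, and concludes that once $\norm{\Delta^0}{2}$ falls inside this radius the adversary can force the oracle to return $\mathbf{0}$, freezing the iteration at $\theta^0$. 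You instead unpack what is essentially the proof of that cited lower bound in this special case: you build the two-point confusion explicitly, taking the decoy world to be the linear model with parameter $\theta^0$ itself (so the decoy gradient law at $\theta^0$ is exactly mean-zero), bound $\tvdist{P_0}{P_1}$ by conditioning on $x$, applying the unit-variance Gaussian bound, and averaging, and then use the standard fact that $(1-\epsilon)\,\tvdist{P_0}{P_1}\leq \epsilon$ permits a common contaminated law $M$ realizable from either world. Your route buys self-containedness and an explicit adversary $Q_0$, and it makes transparent where the threshold $r/\sqrt{1+2r^2}$ comes from (a ratio of TV distance to mean gap); the paper's route is shorter and ties the threshold directly to the gradient covariance computed for the plugin analysis, consistent with its later discussion of variance-limited contamination levels. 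One caveat applies equally to both arguments: the final step --- ``indistinguishability forces every oracle to output $\mathbf{0}$'' --- is informal, since a literally arbitrary oracle (e.g., one that ignores the data and always returns a fixed nonzero vector) cannot be forced to output $\mathbf{0}$; what the two-point argument rigorously yields is that any oracle must misbehave in at least one of the two indistinguishable worlds. You flag this delicacy yourself, but your phrase ``correctness forces that value to $\mathbf{0}$'' quietly assumes the oracle is correct in the decoy world, which is precisely the same implicit assumption the paper makes, so your proposal is at parity with the paper's level of rigor on this point.
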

\begin{proof}
Using Lemma~\ref{lem:regressionDist_identity}, we know that for any point $\theta$,
\begin{align*}
\grad \poploss(\theta) = xx^T \Delta - x.w \\
\Exp_{\tparam} [\grad \poploss(\theta) ] =  (\theta - \tparam) =  \Delta \\
\norm{\cov(\grad \poploss(\theta)}{2} = 1 + 2 \norm{\Delta}{2}^2,
\end{align*}
where $\Delta = \theta - \tparam$. \\
Let $P_{\grad \poploss(\theta)}$ represent the distribution $\grad \poploss \theta$. Similarly, let $P_{\epsilon, \grad \poploss (\theta), Q}$ represent the corresponding $\epsilon$-contaminated distribution. Then, using Theorem 2.1 \cite{chen2015robust}, we know that the minimax rate for estimating the mean of the distribution of gradients is given by:
\[ \inf_{\widehat{\mu}} \sup_{\theta \in \real^p, Q } P_{\epsilon, \grad \poploss(\theta), Q} \left\{ \norm{\widehat{\mu} - \Exp_{\tparam} [ \grad \poploss(\theta) ]}{2}^2 \geq  C \epsilon^2  (1 + 2 \norm{\Delta}{2}^2) \right\} \geq c .\]

The above statement says that at any point $\theta$, any mean oracle $\Psi$ will always incur an error of $ \Omega(\sqrt{C \epsilon^2  (1 + 2 \norm{\Delta}{2}^2)}) $ in estimating the gradient $\Exp_{\tparam} [ \grad \poploss(\theta)]$.
\[ \norm{\Psi (\theta) - \Exp_{\tparam} [ \grad \poploss(\theta) ]}{2} \geq C \epsilon  \sqrt{(1 + 2 \norm{\Delta}{2}^2)}~~~  \forall ~~ \Psi \]
For any oracle $\Psi$, there exists some adversarial contamination $Q$, such that whenever $\norm{\Exp_{\tparam} [ \grad \poploss(\theta) ]}{2} < C \epsilon  \sqrt{(1 + 2 \norm{\Delta}{2}^2)}$, then $\norm{\Psi(\theta)}{2} = 0$.  \\ \\

Suppose that the contamination level $\epsilon$ is such that,
\[ \epsilon > \frac{1}{C} \frac{\norm{\Exp_{\tparam} [ \grad \poploss(\theta^0) ]}{2}}{\sqrt{(1 + 2 \norm{\theta^0 - \tparam}{2}^2)} },  \]
then for every oracle there exists a corresponding $Q$ such that Algorithm~1 will remain stuck at $\theta^0$.

Plugging $\Exp_{\tparam} [ \grad \poploss(\theta^0) ]= \theta^0 - \tparam$, we recover the statement of the lemma.
\end{proof}
 \citet{chen2015robust} provide a general minimax lower bound of $\Omega(\epsilon)$ for  $\epsilon$-contamination models in this setting. In contrast, using Algorithm 1 with \citep{lai2016agnostic} as oracle, we can only $O(\sqrt{\epsilon \log p})$ close to the true parameter even when the contamination is small, which implies that our procedure is not minimax optimal. Our approach is nonetheless the only practical algorithm for robust estimation of general statistical models.

\section{Proof of Lemma~\ref{lem:lai2016agnostic}}\label{sec:proof:lem:lai2016agnostic}
In this section we present a refined, non-asymptotic analysis of the robust mean estimator of~\citep{lai2016agnostic}, described in Algorithm~\ref{algo:huber_mean_estimation}. We begin by introducing some preliminaries. We subsequently analyze the algorithm
in 1-dimension and finally turn our attention to the general algorithm.

\subsection{Preliminaries}
Unless otherwise stated, we assume throughout that the random variable $X$ has bounded fourth moments, i.e. for every unit vector $v$,
\begin{align*}
\mathbb{E} \left[ \inprod{X - \mu}{v}^4  \right] \leq C_4 \left[ \mathbb{E}\left[\inprod{X-\mu}{v}^2\right] \right]^2.
\end{align*}
We summarize some useful results from \citep{lai2016agnostic}, which bound the deviation of the conditional mean/covariance from the true mean/covariance. 
\begin{lemma}\label{lem:meanshift}[Lemma 3.11 \citep{lai2016agnostic}]
Let $X$ be a univariate random variable with bounded fourth moments, and let $A$ be any with event with probability $\mathbb{P}(A) = 1 - \smconst \geq \half$. Then,
\begin{align*}
\left| \mathbb{E}(X|A) - \mathbb{E}(X) \right| \leq \sigma \sqrt[4]{8C_4 \smconst^3}.
\end{align*}
\end{lemma}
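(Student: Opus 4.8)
The plan is to reduce the claim to a single application of Hölder's inequality after re-centering. First I would pass to the centered variable $Y := X - \mathbb{E}(X)$, so that $\mathbb{E}(Y) = 0$, $\sigma^2 = \mathbb{E}(Y^2)$, and, in this univariate setting, the fourth-moment hypothesis reads $\mathbb{E}(Y^4) \le C_4\sigma^4$. Since $\mathbb{E}(X \mid A) - \mathbb{E}(X) = \mathbb{E}(Y \mid A) = \mathbb{E}(Y\,\mathbf{1}_A)/\mathbb{P}(A)$, and $\mathbb{E}(Y) = 0$ lets me rewrite $\mathbb{E}(Y\,\mathbf{1}_A) = -\mathbb{E}(Y\,\mathbf{1}_{A^c})$, the entire problem collapses to controlling $|\mathbb{E}(Y\,\mathbf{1}_{A^c})|$, where $\mathbb{P}(A^c) = \smconst$.

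The crux is choosing the right exponent in Hölder's inequality. Applying it with the conjugate pair $(4,4/3)$ gives
\[ |\mathbb{E}(Y\,\mathbf{1}_{A^c})| \le \big(\mathbb{E}|Y|^4\big)^{1/4}\,\mathbb{P}(A^c)^{3/4} \le C_4^{1/4}\,\sigma\,\smconst^{3/4}. \]
This is exactly where the fourth-moment control is cashed in: a cruder Cauchy--Schwarz bound would produce only a $\smconst^{1/2}$ factor, whereas pairing the indicator with the fourth moment yields the sharper $\smconst^{3/4}$ growth that appears in the target. Dividing by $\mathbb{P}(A) = 1 - \smconst$ and invoking the hypothesis $\mathbb{P}(A) \ge 1/2$ to absorb the factor $1/(1-\smconst) \le 2$ then gives a bound of the form $|\mathbb{E}(Y\mid A)| \le \sigma\,\sqrt[4]{c\,C_4\,\smconst^3}$ for a universal constant $c$, which is precisely the claimed shape.

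I expect no serious obstacle: the only genuine decision is the Hölder exponent, and everything else is bookkeeping. The one point requiring mild care is the numerical constant. The plain argument above yields $c = 16$; recovering the stated $c = 8$ requires a slightly tighter accounting in the regime $\smconst \le 1/2$---for instance, retaining the conditional fourth moment on $A$ (rather than discarding it) when relating $\mathbb{E}(Y\,\mathbf{1}_{A^c})$ back to $\sigma$---but this refinement does not alter the structure of the proof. I would also emphasize that the event $A$ is arbitrary, so no independence or structural property of $A$ may be used beyond its probability, which is exactly why the moment-plus-Hölder route is the natural one.
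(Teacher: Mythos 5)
The paper itself never proves this lemma; it is imported verbatim as Lemma 3.11 of \citet{lai2016agnostic}, so your argument has to stand on its own, and it essentially does. The centering step, the identity $\mathbb{E}(Y\mathbf{1}_A) = -\mathbb{E}(Y\mathbf{1}_{A^c})$, and H\"older with exponents $(4,4/3)$ are all correct, and with $\mathbb{P}(A) = 1-\smconst \ge 1/2$ they give
\[
\left|\mathbb{E}(X|A) - \mathbb{E}(X)\right| \;=\; \frac{|\mathbb{E}(Y\mathbf{1}_{A^c})|}{1-\smconst} \;\le\; \frac{C_4^{1/4}\,\sigma\,\smconst^{3/4}}{1-\smconst} \;\le\; \sigma\sqrt[4]{16\,C_4\,\smconst^3},
\]
i.e., the lemma with $16$ in place of $8$. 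For the role this lemma plays in the paper (it enters Lemmas~\ref{lem:vempala1D} and~\ref{lem:controlVar} only through unnamed universal constants) this weakening is immaterial, but as written your proof does not recover the stated constant.

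Your sketched repair --- keep the fourth-moment mass on $A$ instead of discarding it --- is exactly right, and since it is the only non-routine point, here is the accounting. Put $m = \mathbb{E}(Y|A)$; total expectation gives $\mathbb{E}(Y|A^c) = -\tfrac{1-\smconst}{\smconst}\,m$. Jensen's inequality applied separately on $A$ and on $A^c$ yields
\[
C_4\sigma^4 \;\ge\; \mathbb{E}(Y^4) \;\ge\; (1-\smconst)\,m^4 + \smconst\left(\tfrac{1-\smconst}{\smconst}\right)^4 m^4 \;=\; \frac{m^4}{\smconst^3}\Bigl[\smconst^3(1-\smconst) + (1-\smconst)^4\Bigr].
\]
The bracketed expression is decreasing in $\smconst$ on $[0,1/2]$ (its derivative is increasing there and still negative at $\smconst = 1/2$) and equals $1/16 + 1/16 = 1/8$ at $\smconst = 1/2$, so it is at least $1/8$ throughout; hence $m^4 \le 8\,C_4\,\sigma^4\,\smconst^3$, which is precisely the claimed bound. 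So your proposal is structurally correct, honest about the factor $2^{1/4}$ it loses, and the fix you anticipate is exactly the two-sided Jensen argument above, which is in essence how the constant $8$ arises in \citet{lai2016agnostic}.
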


\begin{lemma}\label{lem:covarianceAuxillary}[Lemma 3.12 \citep{lai2016agnostic}]
Let $X$ be a univariate  random variable with $\mathbb{E}[X] = \mu$, $\mathbb{E} \left( (X - \mu)^2 \right) = \sigma^2$ and let $\mathbb{E}( (X - \mu)^4) \leq C_4 \sigma^4$. Let $A$ be any with event with probability $\mathbb{P}(A) = 1 - \smconst \geq \half$. Then,
\begin{align*} 
(1 - \sqrt{C_4 \smconst}) \sigma^2 \leq \mathbb{E} ((X - \mu)^2 | A ) \leq ( 1 + 2 \smconst) \sigma^2. 
\end{align*}
\end{lemma}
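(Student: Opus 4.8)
The plan is to reduce the claim to elementary moment estimates for the nonnegative random variable $Y := (X-\mu)^2$, for which the hypotheses give $\mathbb{E}[Y] = \sigma^2$ and $\mathbb{E}[Y^2] = \mathbb{E}[(X-\mu)^4] \leq C_4 \sigma^4$. The whole argument hinges on the law of total expectation applied to $Y$ and the event $A$:
\[ \sigma^2 = \mathbb{E}[Y] = (1-\smconst)\,\mathbb{E}[Y \mid A] + \smconst\,\mathbb{E}[Y \mid A^{c}], \]
which expresses the target quantity $\mathbb{E}[Y \mid A]$ through $\sigma^2$ and the contribution of the low-probability event $A^{c}$.

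For the upper bound I would use only nonnegativity of $Y$: since $\mathbb{E}[Y \mid A^{c}] \geq 0$, the identity forces $(1-\smconst)\,\mathbb{E}[Y \mid A] \leq \sigma^2$, i.e. $\mathbb{E}[Y \mid A] \leq \sigma^2/(1-\smconst)$. The stated bound $(1+2\smconst)\sigma^2$ then follows from the elementary inequality $1/(1-\smconst) \leq 1 + 2\smconst$, which rearranges to $\smconst(1-2\smconst) \geq 0$ and is valid exactly because $\smconst \leq 1/2$; this is the only place the hypothesis $\mathbb{P}(A) \geq 1/2$ is used.

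For the lower bound I would instead isolate the deficit: rewriting the identity as $(1-\smconst)\,\mathbb{E}[Y \mid A] = \sigma^2 - \mathbb{E}[Y\,\mathbf{1}\{A^{c}\}]$ and bounding the subtracted term by Cauchy--Schwarz,
\[ \mathbb{E}[Y\,\mathbf{1}\{A^{c}\}] \leq \sqrt{\mathbb{E}[Y^2]}\,\sqrt{\mathbb{P}(A^{c})} \leq \sqrt{C_4 \sigma^4}\,\sqrt{\smconst} = \sigma^2 \sqrt{C_4 \smconst}. \]
This gives $(1-\smconst)\,\mathbb{E}[Y \mid A] \geq \sigma^2(1 - \sqrt{C_4\smconst})$; dividing by $1-\smconst \leq 1$ yields $\mathbb{E}[Y \mid A] \geq \sigma^2(1 - \sqrt{C_4\smconst})$ whenever the right-hand side is nonnegative, and in the complementary regime $1 - \sqrt{C_4\smconst} < 0$ the inequality is trivial because $\mathbb{E}[Y \mid A] \geq 0$.

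I do not expect a genuine obstacle here: the only step carrying real content is the Cauchy--Schwarz estimate, which is precisely what converts the bounded-fourth-moment hypothesis $\mathbb{E}[Y^2] \leq C_4\sigma^4$ into control of the conditional second moment over the rare event $A^{c}$. The remaining work is bookkeeping --- justifying the constant in $1/(1-\smconst) \leq 1+2\smconst$ from $\smconst \leq 1/2$, and separating out the degenerate case where the lower bound is vacuously negative.
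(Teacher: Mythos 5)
Your proof is correct. Note that the paper does not actually prove this lemma; it is quoted verbatim as Lemma 3.12 of \citet{lai2016agnostic}, and your argument --- total expectation applied to $Y=(X-\mu)^2$, nonnegativity of $\mathbb{E}[Y\mid A^{c}]$ plus $1/(1-\smconst)\leq 1+2\smconst$ for the upper bound, and Cauchy--Schwarz on $\mathbb{E}[Y\,\mathbf{1}\{A^{c}\}]$ for the lower bound --- is exactly the standard proof of that result, so there is nothing missing.
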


\begin{corollary}\label{cor:covarianceShift}[Corollary 3.13 \citep{lai2016agnostic}]
Let $A$ be any event with probability $\mathbb{P}(A) = 1 - \smconst \geq \half$, 
and let $X$ be a random variable with bounded fourth moments. We denote 
$\Sigma|_{A} = \mathbb{E} (XX^T | A) - (\mathbb{E}(X|A)) (\mathbb{E}(X|A))^T$ to be the conditional covariance matrix. We have that,
\begin{align*} 
(1 - \sqrt{C_4 \smconst} - \sqrt{ 8  C_4 \smconst^3}) \Sigma  \preceq \Sigma|_{A} \preceq ( 1 + 2 \smconst) \Sigma.
\end{align*}
\end{corollary}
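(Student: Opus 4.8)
The plan is to reduce the asserted matrix inequality to a one-dimensional statement along each direction and then invoke the two preceding univariate lemmas. Recall that for symmetric matrices the ordering $M \preceq N$ is equivalent to $v^T M v \leq v^T N v$ for every unit vector $v$, so it suffices to control $v^T \Sigma|_{A} v$ against $v^T \Sigma v$ for an arbitrary fixed unit vector $v$. To this end I would introduce the univariate random variable $Y = \inprod{X}{v}$, which inherits bounded fourth moments (with the same constant $C_4$) from the multivariate assumption simply by specializing it to this $v$. Writing $\mu_Y = \E[Y]$ and $\sigma_Y^2 = \var(Y) = v^T \Sigma v$, the definition of the conditional covariance gives directly $v^T \Sigma|_{A} v = \E[(v^T X)^2 \mid A] - (\E[v^T X \mid A])^2 = \var(Y \mid A)$.

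The one subtlety is that $\var(Y \mid A)$ is a second central moment about the \emph{conditional} mean $\E[Y \mid A]$, whereas Lemma~\ref{lem:covarianceAuxillary} controls the conditional second moment about the \emph{unconditional} mean $\mu_Y$. I would bridge the two with the elementary identity $\E[(Y - \mu_Y)^2 \mid A] = \var(Y \mid A) + (\E[Y \mid A] - \mu_Y)^2$, which holds for any constant $\mu_Y$. Rearranged, it reads $\var(Y \mid A) = \E[(Y - \mu_Y)^2 \mid A] - (\E[Y \mid A] - \mu_Y)^2$, isolating precisely the mean-shift correction that distinguishes the upper from the lower bound.

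For the upper bound I would discard the nonnegative correction term and apply Lemma~\ref{lem:covarianceAuxillary}, giving $\var(Y \mid A) \leq \E[(Y - \mu_Y)^2 \mid A] \leq (1 + 2\smconst)\sigma_Y^2$. For the lower bound I would lower-bound $\E[(Y - \mu_Y)^2 \mid A] \geq (1 - \sqrt{C_4 \smconst})\sigma_Y^2$ by the same lemma, and upper-bound the correction using Lemma~\ref{lem:meanshift}, namely $(\E[Y \mid A] - \mu_Y)^2 \leq \sigma_Y^2 \sqrt{8 C_4 \smconst^3}$; combining these yields $\var(Y \mid A) \geq (1 - \sqrt{C_4 \smconst} - \sqrt{8 C_4 \smconst^3})\sigma_Y^2$. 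Translating back through $v^T \Sigma|_{A} v = \var(Y \mid A)$ and $\sigma_Y^2 = v^T \Sigma v$, and recalling that $v$ was arbitrary, delivers the claimed two-sided matrix inequality.

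The only real work is this mean-shift bookkeeping: conditioning on $A$ can displace the mean of $Y$ away from $\mu_Y$, and it is exactly this displacement — controlled at order $\smconst^{3/4}$ by Lemma~\ref{lem:meanshift} — that produces the extra $\sqrt{8 C_4 \smconst^3}$ term in the lower bound while leaving the upper bound unaffected. Everything else is a direct transcription of the scalar lemmas into quadratic-form language, so I anticipate no genuine difficulty beyond keeping the conditional and unconditional means carefully distinguished.
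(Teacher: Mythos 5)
Your proof is correct and is precisely the intended derivation: the paper states this result as a corollary imported from \citep{lai2016agnostic} without writing out details, and the constants in the bound ($\sqrt{C_4\smconst}$ from Lemma~\ref{lem:covarianceAuxillary} and $\sqrt{8C_4\smconst^3}$ as the square of the mean-shift bound in Lemma~\ref{lem:meanshift}) confirm that the cited proof follows exactly your reduction to quadratic forms along a unit direction plus the bias--variance identity $\E[(Y-\mu_Y)^2\mid A] = \var(Y\mid A) + (\E[Y\mid A]-\mu_Y)^2$. Nothing is missing.
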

\noindent For random variables with bounded fourth moments we can use Chebyshev's inequality to obtain tail bounds.
\begin{lemma}\label{lem:tail}[Lemma 3.14 \citep{lai2016agnostic}] Let $X$ have bounded fourth moments, then for every unit vector $v$ we have that,
\begin{align*}
\mathbb{P}(|\inprod{X}{v} - \mathbb{E}[\inprod{X}{v}]| \geq t \sqrt{\left[ \mathbb{E}\left[\inprod{X-\mu}{v}^2\right] \right]}) \leq \frac{C_4}{t^4}.
\end{align*}
\end{lemma}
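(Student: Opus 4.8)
The plan is to recognize this as a fourth-moment Markov (Chebyshev-type) bound. First I would observe that, writing $\mu = \mathbb{E}[X]$, the centered scalar projection satisfies $\inprod{X}{v} - \mathbb{E}[\inprod{X}{v}] = \inprod{X-\mu}{v}$, so the event in the statement is precisely $\{\, |\inprod{X-\mu}{v}| \geq t\sqrt{\mathbb{E}[\inprod{X-\mu}{v}^2]} \,\}$. Introducing the scalar random variable $Y = \inprod{X-\mu}{v}$ with $\mathbb{E}[Y]=0$ and $\sigma^2 := \mathbb{E}[Y^2] = \mathbb{E}[\inprod{X-\mu}{v}^2]$, the claim reduces to showing $\mathbb{P}(|Y| \geq t\sigma) \leq C_4/t^4$.

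Next I would apply Markov's inequality to the nonnegative random variable $Y^4$. Since the event $\{|Y| \geq t\sigma\}$ is identical to $\{Y^4 \geq t^4\sigma^4\}$, Markov's inequality yields $\mathbb{P}(|Y| \geq t\sigma) = \mathbb{P}(Y^4 \geq t^4\sigma^4) \leq \mathbb{E}[Y^4]/(t^4\sigma^4)$. The choice of the fourth power here is exactly what lets the bounded-fourth-moment hypothesis enter; a second-moment (classical Chebyshev) argument would only give a $1/t^2$ rate, so the sharper $1/t^4$ control in the statement forces us to work at the fourth moment.

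Finally, I would invoke the bounded fourth moment assumption in the displayed form~\eqref{eqn:bounded_moment}, instantiated at the unit vector $v$, to obtain $\mathbb{E}[Y^4] = \mathbb{E}[\inprod{X-\mu}{v}^4] \leq C_4 \left(\mathbb{E}[\inprod{X-\mu}{v}^2]\right)^2 = C_4\,\sigma^4$. Substituting this into the Markov bound cancels the $\sigma^4$ factors and produces $\mathbb{P}(|Y| \geq t\sigma) \leq C_4\sigma^4/(t^4\sigma^4) = C_4/t^4$, which is the desired inequality. I expect no genuine obstacle in this argument: it is essentially a one-line application of Markov's inequality to $Y^4$, and the only nontrivial input is the definition of bounded fourth moments, which is tailored to supply precisely the $C_4\sigma^4$ numerator the bound requires.
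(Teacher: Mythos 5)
Your proof is correct and matches the paper's intended argument: the paper states this lemma as a standard fourth-moment Chebyshev bound imported from \citet{lai2016agnostic}, which is exactly your application of Markov's inequality to $\inprod{X-\mu}{v}^4$ combined with the bounded-fourth-moment condition~\eqref{eqn:bounded_moment}. No gaps.
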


\noindent Our proofs also use the matrix Bernstein inequality for rectangular matrices. As a preliminary, we consider a finite sequence $\{Z_k\}$ of independent, random matrices of size $d_1 \times d_2$. We assume that each random matrix satisfies $\mathbb{E}(Z_k) = 0$, and $ \norm{Z_k}{\op} \leq R$ almost surely. 
We define:
\begin{align*}
\sigma^2 := \max \left\{ \norm{\sum \limits_k \mathbb{E}(Z_k Z_k^T)} {\op}, \norm{\sum \limits_k \mathbb{E}(Z_k Z_k^T)}{\op}\right\}.
\end{align*}
With these preliminaries in place we use the following result from~\citep{tropp2012user}.
\begin{lemma}\label{lem:bernstein}
For all $t \geq 0$,
\begin{align*}
\mathbb{P} \Big( \Big\|\sum \limits_k Z_k\Big\|_{\text{op}} \geq t \Big) \leq (d_1 + d_2) \exp \left( \frac{-t^2/2}{\sigma^2 + Rt/3} \right).
\end{align*}
Equivalently, with probability at least $1 - \delta$, 
\begin{align*} \Big\|\sum \limits_k Z_k\Big\|_{\text{op}}  \leq \sqrt{2 \sigma^2 \log \left( \frac{d_1 + d_2}{\delta } \right)} +  \frac{2R}{3} \log \left( \frac{d_1 + d_2}{\delta} \right).
\end{align*}
\end{lemma}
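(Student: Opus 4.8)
The plan is to prove this rectangular matrix Bernstein inequality by reducing it to the Hermitian case and then invoking the matrix Laplace-transform method. First I would apply the Hermitian dilation: for each $d_1 \times d_2$ matrix $Z_k$ set $\mathcal{H}(Z_k) = \begin{pmatrix} 0 & Z_k \\ Z_k^T & 0 \end{pmatrix}$, a self-adjoint matrix of size $(d_1+d_2)\times(d_1+d_2)$. This map is linear, satisfies $\mathcal{H}(Z_k)^2 = \begin{pmatrix} Z_k Z_k^T & 0 \\ 0 & Z_k^T Z_k \end{pmatrix}$, and preserves the operator norm, so $\norm{\mathcal{H}(Z_k)}{\op} = \norm{Z_k}{\op} \le R$. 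Writing $S = \sum_k Z_k$, we have $\norm{S}{\op} = \lambda_{\max}(\mathcal{H}(S)) = \lambda_{\max}(\sum_k \mathcal{H}(Z_k))$, so bounding $\norm{S}{\op}$ is the same as bounding the largest eigenvalue of a sum of independent, centered, self-adjoint matrices $X_k := \mathcal{H}(Z_k)$, each with $\norm{X_k}{\op}\le R$. The block structure of $\mathcal{H}(Z_k)^2$ shows $\norm{\sum_k \E[X_k^2]}{\op} = \max\{\norm{\sum_k \E[Z_k Z_k^T]}{\op},\, \norm{\sum_k \E[Z_k^T Z_k]}{\op}\} = \sigma^2$, which is exactly the variance parameter in the statement (the two one-sided matrix variances).

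Next I would control the upper tail of $\lambda_{\max}(\sum_k X_k)$ through the exponential transform: for any $\theta>0$,
\[
\P\big(\lambda_{\max}(\textstyle\sum_k X_k)\ge t\big) \le e^{-\theta t}\,\E\,\mathrm{tr}\exp\big(\theta\textstyle\sum_k X_k\big).
\]
The essential step, and the main obstacle, is to bound $\E\,\mathrm{tr}\exp(\theta\sum_k X_k)$ when the $X_k$ do not commute. Here I would invoke the subadditivity of the matrix cumulant generating function, a consequence of Lieb's concavity theorem (Tropp's master tail bound), namely $\E\,\mathrm{tr}\exp(\theta\sum_k X_k)\le \mathrm{tr}\exp\big(\sum_k \log\E\,e^{\theta X_k}\big)$. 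Then, using $\E[X_k]=0$, $\norm{X_k}{\op}\le R$, and the scalar estimate $e^x\le 1+x+\tfrac{x^2/2}{1-|x|/3}$ transported to the Loewner order, I would establish the matrix mgf bound $\E\,e^{\theta X_k}\preceq \exp\!\big(g(\theta)\,\E[X_k^2]\big)$ with $g(\theta)=\tfrac{\theta^2/2}{1-\theta R/3}$ for $0<\theta<3/R$. Monotonicity of the trace exponential then yields $\E\,\mathrm{tr}\exp(\theta\sum_k X_k)\le (d_1+d_2)\exp\!\big(g(\theta)\,\sigma^2\big)$.

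Combining these gives $\P(\norm{S}{\op}\ge t)\le (d_1+d_2)\exp(-\theta t + g(\theta)\sigma^2)$ for all $0<\theta<3/R$. Optimizing the exponent over $\theta$ — the standard Bernstein trade-off $\theta=t/(\sigma^2+Rt/3)$ — recovers the stated bound $(d_1+d_2)\exp\!\big(\tfrac{-t^2/2}{\sigma^2+Rt/3}\big)$. Finally, to obtain the high-probability form, I would set this equal to $\delta$, write $L=\log((d_1+d_2)/\delta)$, and solve the quadratic $t^2 - \tfrac{2LR}{3}t - 2L\sigma^2=0$; bounding its positive root by $\tfrac{2R}{3}L + \sqrt{2\sigma^2 L}$ yields $\norm{S}{\op}\le \sqrt{2\sigma^2\log((d_1+d_2)/\delta)} + \tfrac{2R}{3}\log((d_1+d_2)/\delta)$ with probability at least $1-\delta$. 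The only genuinely non-trivial ingredient is the Lieb-concavity step that licenses treating the non-commuting summands as though their cumulant generating functions simply add; everything else is scalar Bernstein bookkeeping carried through the dilation.
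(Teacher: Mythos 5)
Your proposal is correct, but note that the paper does not prove this lemma at all: it is imported verbatim as a known result, with a citation to Tropp's \emph{User-friendly tail bounds for sums of random matrices} \citep{tropp2012user}, and is then used as a black box in the analysis of the Huber gradient estimator. What you have written is, in essence, a faithful reconstruction of Tropp's own proof of the rectangular case: the Hermitian dilation $\mathcal{H}(Z_k)$ to reduce to self-adjoint summands (with the key facts $\norm{S}{\op} = \lambda_{\max}(\mathcal{H}(S))$ and $\norm{\sum_k \E[\mathcal{H}(Z_k)^2]}{\op} = \sigma^2$), the matrix Laplace transform combined with the Lieb-concavity/master-bound step $\E\,\mathrm{tr}\exp(\theta\sum_k X_k)\le \mathrm{tr}\exp(\sum_k \log \E e^{\theta X_k})$, the Bernstein moment-generating-function bound with $g(\theta)=\tfrac{\theta^2/2}{1-\theta R/3}$, the choice $\theta = t/(\sigma^2+Rt/3)$, and finally the quadratic inversion with $\sqrt{a+b}\le\sqrt{a}+\sqrt{b}$ to pass from the tail form to the $1-\delta$ form; each of these steps checks out, including the constants. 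Two small remarks: first, your variance parameter $\max\{\norm{\sum_k \E[Z_k Z_k^T]}{\op},\,\norm{\sum_k \E[Z_k^T Z_k]}{\op}\}$ silently corrects a typo in the paper, which writes the same term $\E(Z_k Z_k^T)$ twice in its definition of $\sigma^2$; second, the one place where care is needed in your sketch is transporting the scalar inequality $e^x \le 1+x+\tfrac{x^2/2}{1-x/3}$ to the Loewner order, which is legitimate here because the bound is applied spectrally to each $X_k$ with $\norm{X_k}{\op}\le R$ and $0<\theta<3/R$, giving $e^{\theta X_k} \preceq I + \theta X_k + g(\theta)X_k^2$ before taking expectations.
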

\noindent We let $\mathcal{I}$ denote the set of all intervals in $\mathbb{R}$. The following is a standard uniform convergence result.
\begin{lemma}
\label{lem:VC}
Suppose $X_1,\ldots,X_n \sim \mathbb{P}$, then 
with probability at least $1 - \delta$,
\begin{align*}
\sup_{I \in \mathcal{I}} \left| \mathbb{P}(I) - \frac{1}{n} \sum_{i=1}^n \mathbb{I}(X_i \in I) \right| \leq 2 \sqrt{ \frac{ 4 \log (en) + 2 \log (2 /\delta) }{ n} }.
\end{align*}
\end{lemma}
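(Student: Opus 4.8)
The plan is to recognize Lemma~\ref{lem:VC} as the classical Vapnik--Chervonenkis uniform convergence bound specialized to the class $\mathcal{I}$ of all intervals on $\mathbb{R}$, which is a Vapnik--Chervonenkis class of dimension $2$ (two points can be shattered by intervals, but no set of three can). The quantity I would control is the \emph{growth function} (shattering coefficient) $S(\mathcal{I}, m)$, the maximum number of distinct subsets that intervals can carve out of any $m$ points. By the Sauer--Shelah lemma, a class of VC dimension $2$ satisfies $S(\mathcal{I}, m) \le \sum_{i=0}^{2}\binom{m}{i} \le (em/2)^2 \le (em)^2$; this polynomial-in-$m$ bound is exactly what produces the $\log(en)$ term in the final inequality.

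With the growth function in hand, I would invoke the standard double-sample/symmetrization argument. First introduce an independent ghost sample $X_1', \dots, X_n' \sim \mathbb{P}$ and bound $\mathbb{P}\big(\sup_{I}|\mathbb{P}(I) - \frac1n\sum_i \mathbb{I}(X_i\in I)| > t\big)$ by a constant multiple of $\mathbb{P}\big(\sup_{I}|\mathbb{P}_n(I) - \mathbb{P}_n'(I)| > t/2\big)$, where $\mathbb{P}_n, \mathbb{P}_n'$ are the two empirical measures. Conditioning on the pooled $2n$ points, the supremum over the infinite family $\mathcal{I}$ collapses to a maximum over at most $S(\mathcal{I}, 2n)$ distinct subsets; for each fixed subset, randomizing the labels with Rademacher signs and applying Hoeffding's inequality yields a sub-Gaussian tail of order $e^{-nt^2/8}$. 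A union bound over the $S(\mathcal{I},2n)$ subsets then gives a tail estimate of the form $\mathbb{P}\big(\sup_I |\mathbb{P}(I)-\mathbb{P}_n(I)| > t\big) \le c_1\, S(\mathcal{I},2n)\, e^{-nt^2/c_2}$ for absolute constants $c_1,c_2$.

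Finally I would set the right-hand side equal to $\delta$ and solve for $t$: using $S(\mathcal{I},2n)\le (en)^2$ this produces $t \asymp \sqrt{(\log(en) + \log(1/\delta))/n}$, and tracking the constants through symmetrization and Hoeffding (which land on $c_1 = 2$, $c_2 = 8$) yields precisely $t = 2\sqrt{(4\log(en) + 2\log(2/\delta))/n}$, as claimed. The argument is entirely standard empirical-process theory, so there is no genuine conceptual obstacle; the only care required is bookkeeping the absolute constants through the symmetrization and Sauer--Shelah steps so they match the stated form. A convenient shortcut that avoids re-deriving the machinery is to note that every interval is a set difference of two half-lines, whence $\sup_{I}|\mathbb{P}(I)-\mathbb{P}_n(I)| \le 2\sup_{x}|F(x) - F_n(x)|$ in terms of the (empirical) CDF; one may then apply the corresponding bound for the VC-dimension-$1$ class of half-lines, which transparently explains the leading factor of $2$ in the statement.
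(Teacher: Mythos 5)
The paper itself offers no proof of Lemma~\ref{lem:VC}: it is introduced with the sentence ``The following is a standard uniform convergence result'' and invoked as a black box, so there is no internal argument to compare against. Your proposal supplies exactly the standard argument one would cite: intervals form a VC class of dimension $2$, Sauer--Shelah bounds the growth function by $(en)^2$ at sample size $2n$, and symmetrization plus Rademacher/Hoeffding plus a union bound gives a tail of the form $c_1\,S(\mathcal{I},2n)\,e^{-nt^2/8}$, which after inverting yields the stated rate. This is correct, with one bookkeeping caveat: the careful derivation gives $c_1 = 4$ (a factor $2$ from symmetrization and another factor $2$ from the two-sided Hoeffding bound), not $c_1 = 2$ as you assert, so solving for $t$ produces $\log(4/\delta)$ rather than $\log(2/\delta)$. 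The stated inequality still follows, because $S(\mathcal{I},2n) = \binom{2n}{2} + 2n + 1 \leq (en)^2/2$ for $n \geq 2$ (and the case $n=1$ is vacuous since the right-hand side then exceeds $1$), so the slack in the Sauer--Shelah bound absorbs the extra factor of $2$; you should make that absorption explicit rather than claim the constants ``land on'' $c_1=2$. Your closing shortcut via half-lines is also sound and, if you combine it with the Dvoretzky--Kiefer--Wolfowitz inequality instead of the VC bound for half-lines, it actually yields the stronger bound $2\sqrt{\log(2/\delta)/(2n)}$ with no $\log(en)$ term at all --- a small improvement the paper's stated form does not exploit.
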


\begin{algorithm}[H]
\centering
\caption{Huber Outlier Gradients Truncation}
        \begin{algorithmic}
\Function{HuberOutlierGradientTruncation(Sample Gradients $S$, Corruption Level $\epsilon$, Dimension $p$,$\delta$)}{}
\If{p=1} 
\State Let $[a,b]$ be smallest interval containing $\left(1 - \epsilon - C_5 \left(
\sqrt{\frac{1}{|S|} \log \left( \frac{|S|}{\delta} \right)} \right) \right)(1 - \epsilon)$ fraction of points.
\State $\widetilde{S} \leftarrow S \cap [a,b]$.
\State \Return $\widetilde{S}$
\Else
\State Let $[S]_i$ be the samples with the $i^{th}$ co-ordinates only,  $[S]_i = \{ \left\langle x , e_i \right\rangle | x \in S \}$
\For{$i=1$ to $p$} 
\State $a[i] = \text{{\sc HuberGradientEstimator}}([S]_i,\epsilon,1,\delta/p)$.
\EndFor
\State Let $B(r, a)$ be the ball of smallest radius centered at $a$  containing \small{$(1 - \epsilon - C_p \left(
\sqrt{\frac{p}{|S|} \log \left( \frac{|S|}{p \delta} \right)} \right) (1 - \epsilon)$} \normalsize { fraction of points in $S$.
\State $\widetilde{S} \leftarrow S \cap B(r,a)$.
\State \Return $\widetilde{S}$}
\EndIf
\EndFunction
\end{algorithmic}
\label{algo:huber_outlier_truncation}
\end{algorithm}

\noindent We now turn our attention to an analysis of Algorithm~\ref{algo:huber_mean_estimation} for the 1-dimensional case.
\subsection{The case when $p=1$}
Firstly, we analyze Algorithm~\ref{algo:huber_mean_estimation} when $p=1$.
\begin{lemma}\label{lem:vempala1D}
Suppose that, $P_\tparam$ is a distribution on $\real^1$ with mean $\mu$, variance
$\sigma^2$, and bounded fourth moments. There exist positive universal constants $C_1,C_2,C_8$ > 0, such that given $n$ samples from the distribution in \eqref{eqn:mixture}, the algorithm with  probability at least $1-\delta$, returns an estimate $\emu$ such that,
\[ \norm{\emu - \mu}{2} \leq C_1 C_4^{\frac{1}{4}}  \sigma \left(\epsilon +  \sqrt{\frac{\log 3 / \delta}{2n}} + t  \right)^{\frac{3}{4}} + C_2 \sigma \left(\epsilon +  \sqrt{\frac{\log 3 / \delta}{2n}} + t  \right)^{\half} \sqrt{\frac{\log(3 / \delta) }{{n}}} \]
where $t = C_8 \sqrt{\frac{1}{n} \log \left( \frac{n}{\delta} \right)} $.
which can be further simplified to,
\[ \norm{\emu - \mu}{2} \leq C_1 C_4^{\frac{1}{4}} \sigma \left(\epsilon +  C_8 \sqrt{\frac{1}{n} \log \left( \frac{n}{\delta} \right)} \right)^{\frac{3}{4}} + C_2 \sigma \left(\epsilon +  C_8 \sqrt{\frac{1}{n} \log \left( \frac{n}{\delta} \right)} \right)^{\half} \sqrt{\frac{\log(1 / \delta) }{{n}}} \]
\end{lemma}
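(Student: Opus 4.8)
The plan is to analyze the one-dimensional estimator directly: by construction (Algorithm~\ref{algo:huber_mean_estimation} together with the truncation step of Algorithm~\ref{algo:huber_outlier_truncation} with $p=1$), $\emu$ is simply the empirical mean of the sample points falling in the smallest interval $[a,b]$ capturing a $\rho := (1-\epsilon - C_5\sqrt{\frac{1}{|S|}\log(|S|/\delta)})(1-\epsilon)$ fraction of the $n$ observations. It is convenient to collect the slack into a single effective corruption level $\beta := \epsilon + \sqrt{\frac{\log(3/\delta)}{2n}} + t$ with $t = C_8\sqrt{\frac1n\log(n/\delta)}$, and to show $|\emu - \mu| = O(\sigma C_4^{1/4}\beta^{3/4})$, from which the stated two-term bound follows (the second displayed term is $O(\sigma\beta^{3/2})$, since $\sqrt{\log(3/\delta)/n}\le\sqrt2\,\beta$, and is therefore dominated by the first whenever $\beta\le1$).

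First I would establish the deterministic consequences of a few high-probability events, split so that each is controlled at level $\delta/3$ (this is the source of the $\log(3/\delta)$ in the statement). Using the uniform-convergence bound over intervals (Lemma~\ref{lem:VC}) together with a Hoeffding bound on the empirical fraction of contaminated draws, I would argue that with probability at least $1-2\delta/3$: (i) the empirical measure of every interval is within $O(t)$ of its population ($P_\epsilon$) measure, and the empirical contamination fraction is within $\sqrt{\log(3/\delta)/2n}$ of $\epsilon$; (ii) consequently the truncation discards at most a $\gamma = O(\beta)$ fraction of the clean mass and the surviving contamination is at most an $O(\beta)$ fraction of the retained set $\widetilde S$; and (iii) invoking the fourth-moment tail bound (Lemma~\ref{lem:tail}) to locate where the clean mass concentrates, the interval $[a,b]$ has width $O(\sigma(C_4/\beta)^{1/4})$ and lies within $[\mu - O(\sigma(C_4/\beta)^{1/4}),\,\mu + O(\sigma(C_4/\beta)^{1/4})]$. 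Claim (iii) is what converts an arbitrary, adversarially influenced interval into one of bounded reach around $\mu$, and is the key structural step.

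Next I would decompose the error into the clean-conditioning bias, the empirical fluctuation, and the contamination contribution. Writing $\mu_A := \mathbb{E}[X\mid X\in[a,b]]$ for the clean distribution, the conditioning bias obeys $|\mu_A - \mu|\le\sigma(8C_4\gamma^3)^{1/4} = O(\sigma C_4^{1/4}\beta^{3/4})$ by the mean-shift estimate (Lemma~\ref{lem:meanshift}); this produces the first displayed term. The contamination contribution is at most (bad fraction)$\times$(reach) $= O(\beta)\cdot O(\sigma(C_4/\beta)^{1/4}) = O(\sigma C_4^{1/4}\beta^{3/4})$ by (ii) and (iii), hence of the same order. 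Finally, the fluctuation of the retained clean mean about $\mu_A$ is controlled by a Bernstein bound (Lemma~\ref{lem:bernstein} in one dimension): the conditional variance of the retained variable is $O(\sigma^2)$ by Corollary~\ref{cor:covarianceShift}, and bounding the variance of the removed-tail contribution by $\mathbb{E}[(X-\mu)^2\mathbf{1}\{X\notin[a,b]\}]\le\sqrt{C_4}\,\sigma^2\sqrt\gamma$ via Cauchy--Schwarz yields a fluctuation term of order $\sigma\,\beta^{1/2}\sqrt{\log(3/\delta)/n}$, matching the second displayed term, on an event of probability at least $1-\delta/3$. A union bound over the three events gives probability $1-\delta$.

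The main obstacle is the data-dependence of the truncation interval: because $[a,b]$ is chosen after seeing the samples, neither the fluctuation of the retained mean nor the width/location bound can be obtained from a pointwise Hoeffding inequality, which is exactly why uniform convergence over all intervals (Lemma~\ref{lem:VC}) is required. The subtler bookkeeping is threading the fourth-moment tail bound against the empirical-fraction concentration so that the removed clean mass, the surviving contamination fraction, and the interval reach come out at the matched scales $O(\beta)$, $O(\beta)$, and $O(\sigma\beta^{-1/4})$ respectively; it is this matching that makes both bias contributions collapse onto the single exponent $\beta^{3/4}$. Tracking the exact power of $\beta$ in the fluctuation term, and choosing the truncation-slack constant $C_5$ compatibly with $t$, is the most delicate part of the constant-level calculation.
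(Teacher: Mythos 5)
Your proposal follows essentially the same route as the paper's proof: a Hoeffding bound on the realized contamination fraction, uniform convergence over intervals (Lemma~\ref{lem:VC}) to handle the data-dependence of the truncation interval, the fourth-moment tail bound (Lemma~\ref{lem:tail}) to bound the interval's reach around $\mu$ by $O(\sigma C_4^{1/4}\eta^{-1/4})$, and the identical three-term decomposition: conditioning bias via Lemma~\ref{lem:meanshift}, contamination contribution as (surviving bad fraction)$\times$(reach), and sampling fluctuation of the retained clean mean, followed by a union bound at level $\delta/3$ each.

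One sub-step of your argument is, however, incorrect --- though immaterially so. You claim the fluctuation of the retained clean mean is of order $\sigma\beta^{1/2}\sqrt{\log(3/\delta)/n}$, justified by a Cauchy--Schwarz bound on the variance of the \emph{removed} tail. But the fluctuation of an empirical mean is governed by the variance of what is \emph{retained}, and Lemma~\ref{lem:covarianceAuxillary} bounds this from below by $(1-\sqrt{C_4\gamma})\sigma^2$: the conditional variance is $\Theta(\sigma^2)$, so no argument can yield a fluctuation below order $\sigma\sqrt{\log(3/\delta)/n}$, and the extra shrinking factor $\beta^{1/2}\leq 1$ you claim is unobtainable. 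This does not affect the validity of the lemma: since $\beta\geq\sqrt{\log(3/\delta)/(2n)}$ and $C_4\geq 1$, one has $\sigma\sqrt{\log(3/\delta)/n}\lesssim C_4^{1/4}\sigma\beta^{3/4}$ whenever $\log(3/\delta)\leq n$, so the true fluctuation term is absorbed into the first term by enlarging $C_1$. Indeed the paper's own derivation has the same cosmetic mismatch with the lemma statement --- it produces a fluctuation of $C\sigma(1+2\eta)^{1/2}\sqrt{\log(3/\delta)/n}$ rather than one carrying the $(\epsilon+\cdots)^{1/2}$ factor, and this discrepancy is resolved by exactly the same absorption --- so your proof is right where the paper's is right, and loose where the paper's is loose; you should simply not present the $\beta^{1/2}$ factor as something your Bernstein step actually delivers.
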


\begin{proof}
By an application of Hoeffding's inequality we obtain that 
with probability at least $ 1 - \delta/3$, 
the fraction of corrupted samples (i.e. samples from the distribution $Q$) is less than $\epsilon + \sqrt{\frac{ \log (3/\delta)}{2n}}$. We condition on this event through the remainder of this proof. We let $\eta$ denote the fraction of corrupted samples.
Further, we let $\truesamp$ be the samples from the true distribution. Let $\ngood$ be the cardinality of this set, i.e. $\ngood := |\truesamp|$.

Let $I_{1 - \eta }$ be the interval around $\mu$ containing $1 - \eta$ mass of $P_\tparam$. Then, using Lemma~\ref{lem:tail}, we have that:
\begin{align*}
\len{I_{1 - \eta }} \leq  \frac{C_4^{\frac{1}{4}} \sigma}{\eta^{\frac{1}{4}}}.
\end{align*}
Using Lemma~\ref{lem:VC} we obtain that with probability at least $1 - \delta/3$ the number of samples from the distribution $P$ that fall in the interval $I_{1-\eta}$ is at least $1 - \eta - t$ where $t$ is upper bounded as: 
\begin{align*}
t \leq 2 \sqrt{ \frac{ 4 \log (en) + 2 \log (6 /\delta) }{ n} }.
\end{align*}
Now we let $\widetilde{S}$ be the set of points in the smallest interval containing $(1-\eta - t)(1-\eta)$ fraction of all the points. 

\begin{itemize}
\item Using VC theory, we know that for every interval $I \subset \real$, there exists some universal constant $C_3$ such that
\begin{align}\label{eqn:VC_interval}
\P  \left( \left| \left( P(x \in I | x \sim D) - P(x \in I | x \in_u S_D) \right) \right|  > t/2 \right)  \leq n_D^2  \exp( - n_D t^2 / 8)
\end{align}
This can be re-written as, that with probability at least $(1-\delta/3)$,  there exists a universal constant $C_0$ such that, 
\[ \sup_{I } \left| \left( P(x \in I | x \sim D) - P(x \in I | x \in_u S_D) \right) \right|  \leq C_0 \sqrt{\frac{1}{n_D} \log \left( \frac{n_D}{\delta} \right)} \leq \underbrace{C_5 \sqrt{\frac{1}{n} \log \left( \frac{n}{\delta} \right)}}_{t} \]
\item Using Equation~\eqref{eqn:VC_interval}, we know that $(1 - \eta - t)$ fraction of $S_D$ lie in $\calI_{1 - \eta}$. 

Let $\widetilde{S}$ be the set of points in the smallest interval containing $(1-\eta - t)(1-\eta)$ fraction of the points. 

\item We know that the length of minimum interval containing $(1 - \eta - t)(1 - \eta)$ fraction of the points of $S$ is less than length of smallest interval  containing $(1 - \eta - t)$ fraction of points of $S_D$, which in turn is less than length of $I_{1 - \eta }$.
\item Now, $I_{1 - \eta }$ and minimum interval containing  $(1 - \eta - t)$ fraction of points of $S_D$ need to overlap. This is because, $n$ is large enough such that $t < \half - \eta$ hence, the extreme points for such an interval can be atmost $2 \len {I_{1 - \eta}}$ away. 
\item Hence, the distance of all chosen noise-points from $\mu$ will be within the $ \len {I_{1 -\eta}}$.
\item Moreover, the interval of minimum length with $(1 - \eta - t) (1 - \eta)$ fraction of $S$ will contain at least $1 - 3\eta - t$ fraction of $S_D$. 
\item Hence, we can bound the error of $mean(\widetilde{S})$ by controlling the sources of error.
\begin{itemize}
\item All chosen noise points are within $\len {I_{1 -\eta}}$, and there are atmost $\eta$ of them, hence the maximum error can be $\eta  \len {I_{1 -\eta}}$.
\item Next, the mean of chosen good points will converge to the mean of the conditional distribution. \ie points sampled from $D$ but conditioned to lie in the minimum length interval. The variance of these random variables is upper bounded using Lemma~\ref{lem:covarianceAuxillary}.
\item To control the distance between the mean($E(X)$ and the conditional mean($E(X|A)$), where $A$ is the event that a sample $x$ is in the chosen interval. We know that $P(A) \geq 1 - 3\eta - t$, hence, using Lemma~3.11\citep{lai2016agnostic}, we get that there exists a constant $C_{13}$ such that,
\[ | E[X] - E[X|A] | \leq C_{13} C_4^{\frac{1}{4}} \sigma (\eta + t)^{\frac{3}{4}} \]
\end{itemize}
\item Hence, with probability at least $1 - \delta/3$,  the mean of $\widetilde{S}$ will be within
\[ \eta \times \len {I_{1 - \eta}}  + C_{13} C_4^{\frac{1}{4}} \sigma (\eta + t)^{\frac{3}{4}} + C_6 \sigma (1 + 2 \eta)^{\half} \sqrt{\frac{\log(3/\delta)}{n}}  \]

\item Taking union-bound over all conditioning statements, and upper bounding, $\eta$ with  $\epsilon + \sqrt{\frac{ \log (3/\delta)}{2n}}$, we recover the statement of the lemma.
\end{itemize}
\end{proof}

\subsection{The case when $p > 1$}
To prove the case for $p>1$, we use a series of lemmas. Lemma~\ref{lem:vempala-pD} proves that the outlier filtering constrains the points in a ball around the true mean. Lemma~\ref{lem:controlVar} controls the error in the mean and covariance the true distribution after outlier filtering ($\widetilde{D}$). Lemma~\ref{lem:bottomHalf} controls the error for the mean of $\widetilde{S}$ when projected onto the bottom span of the covariance matrix $\Sigma_{\widetilde{S}}$. 

\begin{lemma}\label{lem:vempala-pD}
Suppose that, $P_\tparam$ is a distribution on $\real^p$ with mean $\mu$, covariance
$\Sigma$, and bounded fourth moments. There exist positive universal constants $C_1,C_2,C_8$ > 0, such that given $n$ samples from the distribution in Equation~\eqref{eqn:mixture}, we can find a vector $a \in \real^p$ such that with probability at least $1 - \delta$, 
 \begin{align*} 
 \norm{a - \mu}{2} \leq & C_1 C_4^{\frac{1}{4}} \sqrt{\trace{\Sigma}} \left(\epsilon +  C_8 \sqrt{\frac{1}{n} \log \left( \frac{np}{\delta} \right)} \right)^{\frac{3}{4}} \\
 & + C_2  \left(\epsilon +  C_8 \sqrt{\frac{1}{n} \log \left( \frac{np}{\delta} \right)} \right)^{\half} \sqrt{\trace{\Sigma}} \sqrt{\frac{\log(p / \delta)}{{n}}}
 \end{align*}
 \end{lemma}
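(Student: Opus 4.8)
The plan is to observe that the vector $a$ whose distance to $\mu$ we must control is exactly the coordinate-wise output of Algorithm~\ref{algo:huber_outlier_truncation}: for each coordinate $i \in \{1,\ldots,p\}$ the entry $a[i]$ is produced by calling the one-dimensional estimator \textsc{HuberGradientEstimator} on the projected samples $[S]_i$ with confidence parameter $\delta/p$, which (when the dimension equals $1$) returns the mean of the truncated set analyzed in Lemma~\ref{lem:vempala1D}. Thus the $p$-dimensional guarantee should follow by applying the already-established one-dimensional bound separately to each coordinate and then aggregating across coordinates.

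First I would fix a coordinate $i$ and note that under $P_{\tparam}$ the $i$-th coordinate is a univariate random variable with mean $\mu_i$, variance $\sigma_i^2 = \Sigma_{ii}$, and bounded fourth moments with the same constant $C_4$ (the bounded-fourth-moment assumption holds for every unit vector, in particular the standard basis vector $e_i$). Applying Lemma~\ref{lem:vempala1D} to $[S]_i$ with confidence $\delta/p$ gives, with probability at least $1 - \delta/p$,
\begin{align*}
|a[i] - \mu_i| \leq C_1 C_4^{\frac{1}{4}} \sqrt{\Sigma_{ii}}\, \xi^{\frac{3}{4}} + C_2 \sqrt{\Sigma_{ii}}\, \xi^{\half} \sqrt{\frac{\log(p/\delta)}{n}},
\end{align*}
where $\xi := \epsilon + C_8 \sqrt{\tfrac{1}{n} \log(np/\delta)}$ is the \emph{same} error factor for every coordinate. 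Taking a union bound over the $p$ coordinates ensures all these inequalities hold simultaneously with probability at least $1 - \delta$.

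Next I would aggregate. Since $\norm{a - \mu}{2}^2 = \sum_{i=1}^p |a[i] - \mu_i|^2$, applying $(u+v)^2 \leq 2u^2 + 2v^2$ termwise and using that $\xi$ does not depend on $i$ yields
\begin{align*}
\norm{a - \mu}{2}^2 \leq 2 C_1^2 C_4^{\frac{1}{2}}\, \xi^{\frac{3}{2}} \sum_{i=1}^p \Sigma_{ii} + 2 C_2^2\, \xi\, \frac{\log(p/\delta)}{n} \sum_{i=1}^p \Sigma_{ii}.
\end{align*}
The key structural fact is $\sum_{i=1}^p \Sigma_{ii} = \trace{\Sigma}$, which converts the coordinate-wise variances into the trace appearing in the statement. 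Taking square roots and using $\sqrt{u+v} \leq \sqrt{u} + \sqrt{v}$ recovers the claimed bound, with the factors of $\sqrt{2}$ absorbed into the universal constants $C_1, C_2$.

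The argument is essentially bookkeeping once the coordinate-wise reduction is in place; the only points requiring care are (i) choosing confidence $\delta/p$ in each one-dimensional call so that the union bound over coordinates yields overall failure probability $\delta$ while producing the $\log(np/\delta)$ and $\log(p/\delta)$ terms, and (ii) verifying that the per-coordinate factor $\xi$ is genuinely coordinate-independent so it can be pulled outside the sum before invoking $\sum_i \Sigma_{ii} = \trace{\Sigma}$. There is no genuine high-dimensional difficulty here: the SVD-based recursion that handles the correlated directions is invoked elsewhere (in the main Algorithm~\ref{algo:huber_mean_estimation}), whereas this lemma only certifies that the cheap coordinate-wise center $a$ is already within the stated radius of $\mu$, which is what the subsequent truncation step needs.
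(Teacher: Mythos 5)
Your proposal is correct and follows essentially the same route as the paper's own (very terse) proof: the paper simply says to pick orthogonal directions, apply the one-dimensional method along each, and union bound, which is exactly your coordinate-wise application of Lemma~\ref{lem:vempala1D} with confidence $\delta/p$ followed by aggregation via $\sum_{i=1}^p \Sigma_{ii} = \trace{\Sigma}$. Your write-up actually supplies the bookkeeping (the $(u+v)^2 \leq 2u^2+2v^2$ step, the coordinate-independence of $\xi$, and the identification of $a$ with the output of Algorithm~\ref{algo:huber_outlier_truncation}) that the paper leaves implicit.
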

 \begin{proof}
 Pick $n$ orthogonal directions $v_1,v_2,\ldots,v_n$, and use method for one-dimensions, and using union bound, we can recover the result.
 \end{proof}
 
 Next, we prove the case when $p > 1$. Firstly, we prove that after the outlier step, 
 
 \begin{lemma}\label{lem:outlier}
 After the outlier removal step, there exists universal constants $C_{11} > 0$ such that with probability at least $1 - \delta$, every remaining point $x$ satisfies,
 \[  \norm{x - \mu}{2}  \leq r_1^* + 2r_2^*  \]
 where $r_1^* = C_{10}\frac{C_4^{\frac{1}{4}} \sqrt{p \norm{\Sigma}{2}}}{\eta^\frac{1}{4}}$ and $r_2^* = C_1C_4^{\frac{1}{4}} \sqrt{p \norm{\Sigma}{2}} (\eta + t)^{\frac{3}{4}} + C_2 \sqrt{p \norm{\Sigma}{2}} (\eta + t)^{ \frac{1}{2} } \sqrt{\frac{\log (1 / \delta)}{n}}$ and $t = C_8 \sqrt{\frac{1}{n} \log \left( \frac{np}{\delta} \right)}$. Here $\eta \leq \epsilon + \sqrt{\frac{\log(1/\delta)}{2n}}$ is the fraction of samples corrupted.
\end{lemma}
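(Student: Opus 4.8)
The plan is to reduce the claim to two ingredients already in hand: the accuracy of the coordinate-wise center estimate $a$ from Lemma~\ref{lem:vempala-pD}, and a $p$-dimensional concentration statement showing that a Euclidean ball of radius $r_1^*$ around the true mean $\mu$ captures almost all of the mass of the clean distribution $P_\tparam$. First I would condition on three high-probability events, each holding with probability at least $1-\delta/3$ (constants rescaled at the end): (i) by Hoeffding's inequality the empirical fraction $\eta$ of corrupted samples is at most $\epsilon + \sqrt{\log(1/\delta)/(2n)}$; (ii) by Lemma~\ref{lem:vempala-pD} the vector $a$ returned by the coordinate-wise sub-routine satisfies $\norm{a - \mu}{2} \leq r_2^*$, after bounding $\trace{\Sigma} \leq p\norm{\Sigma}{2}$; and (iii) a uniform-convergence bound over the class of all Euclidean balls in $\real^p$, which has VC dimension $O(p)$, so that empirical and population ball-masses differ by at most $t = C_8\sqrt{n^{-1}\log(np/\delta)}$.

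The core estimate is the radius $r_1^*$. Using the bounded-fourth-moment assumption along every direction together with Cauchy--Schwarz,
\[ \E\norm{X - \mu}{2}^4 = \sum_{i,j} \E\!\left[\inprod{X-\mu}{e_i}^2 \inprod{X-\mu}{e_j}^2\right] \leq \sum_{i,j}\sqrt{\E\inprod{X-\mu}{e_i}^4\,\E\inprod{X-\mu}{e_j}^4} \leq p^2 C_4 \norm{\Sigma}{2}^2, \]
so Markov's inequality gives $\P(\norm{X-\mu}{2} \geq s) \leq p^2 C_4\norm{\Sigma}{2}^2/s^4$, which is at most $\eta$ once $s \geq C_{10}C_4^{1/4}\sqrt{p\norm{\Sigma}{2}}\,\eta^{-1/4} = r_1^*$. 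Hence the clean ball $B(\mu, r_1^*)$ carries at least a $(1-\eta)$ fraction of the mass of $P_\tparam$; combining with event (iii) and the fact that clean points form a $(1-\eta)$ fraction of the sample, the empirical fraction of \emph{all} points lying in $B(\mu, r_1^*)$ is at least $(1-\eta-t)(1-\eta)$, which is exactly the target fraction the truncation step seeks to enclose.

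Finally I would chain the two balls. Since $\norm{a-\mu}{2} \leq r_2^*$ we have $B(\mu, r_1^*) \subseteq B(a, r_1^* + r_2^*)$, so the ball $B(a, r_1^*+r_2^*)$ already encloses the target fraction of the points; because the algorithm selects the \emph{smallest}-radius ball centered at $a$ meeting this requirement, its radius obeys $r \leq r_1^* + r_2^*$. Every retained point $x \in \widetilde{S}$ then satisfies $\norm{x - \mu}{2} \leq \norm{x-a}{2} + \norm{a-\mu}{2} \leq (r_1^* + r_2^*) + r_2^* = r_1^* + 2r_2^*$, which is the claim, and a union bound over (i)--(iii) yields the stated probability. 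The main obstacle is the interplay of step (iii) with the radius estimate: I must invoke uniform convergence over the VC class of \emph{balls} (not merely intervals, as in Lemma~\ref{lem:VC}) so that the data-dependence of the center $a$ is absorbed, and I must check that directional fourth-moment control alone produces the sharp $\eta^{-1/4}$ scaling of $r_1^*$ rather than the weaker $\eta^{-1/2}$ that a second-moment Chebyshev bound would yield.
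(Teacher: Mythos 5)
Your proposal is correct and follows essentially the same route as the paper's own proof: a directional fourth-moment plus Markov argument showing $B(\mu, r_1^*)$ captures a $(1-\eta)$ fraction of the clean mass (your Cauchy--Schwarz computation of $\mathbb{E}\|X-\mu\|_2^4 \leq C_4 p^2 \|\Sigma\|_2^2$ is a cosmetic variant of the paper's bound $\mathbb{E}[Z^4] \leq p^2 \max_i \mathbb{E}[z_i^4] \leq C_4 p^2 \|\Sigma\|_2^2$), VC uniform convergence over Euclidean balls, the center-accuracy guarantee of Lemma~\ref{lem:vempala-pD}, and two applications of the triangle inequality yielding the radius $r_1^* + r_2^*$ for the selected ball and hence $r_1^* + 2r_2^*$ for the distance to $\mu$. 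No gaps; this matches the paper's argument step for step.
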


\noindent \begin{proof}

\begin{itemize}
\item Let $\widetilde{S}$ be the set of points chosen after the outlier filtering. Let $\widetilde{S}_D$ be set of good points chosen after the outlier filtering. Let $\widetilde{S}_N$ be the set of bad points chosen after the outlier filtering.
\item Using VC theory we know that for every closed ball $\calB(\mu,r) = \{ x | \norm{x - \mu}{2} \leq r \}$, there exists a constant $C_9$ such that with probability at least $1 - \delta$
\[ \sup \limits_{B}  \left|  P( x \in B | x \sim D) - P( x \in B | x \in_u S_D) \right|  \leq  \underbrace{C_9  \sqrt{\frac{p}{n} \log \left( \frac{n}{p \delta} \right)}}_{t_2}\] 

\item Let $B^* = B(\mu,r^*)$ for $r_1^* = C_{10} \frac{C_4^{\frac{1}{4}}}{(\eta )^{\frac{1}{4}}} \sqrt{p \norm{\Sigma}{2}}$. Then, we claim that 
 \[ P(x \in B^* | x \sim D) \geq 1 - \eta \]
 \begin{itemize}
 \item To see this, suppose we have some $x \in D$. Let $z = x - \mu$. Let $z_i = z^Tv_i$ for some orthogonal directions $v_1,v_2,\ldots,v_p$. Let $Z^2 = \sum z_i^2 = \norm{z}{2}^2$.
 \item \begin{align*}
 P \left(Z^2 \geq \frac{C_4^\half p \norm{\Sigma}{2}}{(\eta )^\half} \right) = P \left( Z^4 \geq \frac{C_4 p^2 \norm{\Sigma}{2}^2}{(\eta )} \right) \leq \frac{(\eta  ) E(Z^4)}{C_4 p^2 \norm{\Sigma}{2}^2}
 \end{align*}
 \item Now, $E(Z^4) \leq p^2 \max_i E(z_i^4) \leq C_4 p^2 \norm{\Sigma}{2}^2$. Plugging this in the above, we have that $P(x \in B^* | x \sim D) \geq 1 - \eta$.
 \end{itemize}
 \item Hence, we have that $P(x \in B^* | x \in_u S_D) \geq 1 - \eta - t_2$.
 \item Using Lemma~\ref{lem:vempala-pD}, we have that at least $(1 - \eta - t_2)$ fraction of good points are $r_1^* + r_2^*$ away from $a$. Hence, we have that the minimum radius of the ball containing all the $(1 - \eta - t_2) (1 - \eta)$ has a radius of atmost $r_1^* + r_2^*$, which when combined with the triangle inequality recovers the statement of lemma.
\end{itemize}
\end{proof}

As before, let $\tilS$ be the set of points after outlier filtering. Let $\mu_{\tilS} = mean(\tilS)$, $\mu_{\tilS_D} = mean(\tilS_D)$, $\mu_{\tilS_N} = mean(\tilS_N)$.
\begin{lemma}\label{lem:controlVar}
Let $\tilS_D$ be the set of clean points remaining after the outlier filtering. Then, with probability at least $1 - \delta$, we have that
\begin{align*}
\norm{\mu_{\tilS_D} - \mu}{2} \leq C_1 C_4^{\frac{1}{4}} (\eta + t_2)^{\frac{3}{4}} \sqrt{\norm{\Sigma}{2}} \left( 1 + \frac{\log(p/\delta)}{n} \right) & + \sqrt{\norm{\Sigma}{2}} \sqrt{(1 + 2(\eta + t_2)} \sqrt{\frac{1}{n} \log (p /\delta)} \\
& + C_{15} \frac{(r_1^*+2r_2^*)}{n} \log (p /\delta)
\end{align*}
and
\[  \norm{\Sigma_{\tilS_D}}{2} \leq   \beta(n,\delta) \norm{\Sigma}{2},    \]
where 
\[ \beta(n,\delta) = \left(1 + 2C(\eta + t_2) +  \left(1 + \frac{p\sqrt{C_4} }{\sqrt{\eta}} + \sqrt{C_4}p (\eta + t)^{\frac{3}{2}} + (\eta + t_2)^{\frac{3}{2}} \right) \left( \sqrt{\frac{\log(p/\delta}{n}} + \frac{\log (p/\delta)}{n} \right) \right) \]

\end{lemma}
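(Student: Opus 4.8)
\noindent The plan is to treat $\tilS_D$ as (essentially) an i.i.d.\ sample from $P$ conditioned on the retained region $B(a,r)$ produced by the truncation step, and to bound each target by splitting it into a \emph{conditioning bias} term and a \emph{sampling fluctuation} term. Write $A$ for the event $\{X \in B(a,r)\}$ under $X \sim P$. The analysis of the outlier step (Lemmas~\ref{lem:vempala-pD} and~\ref{lem:outlier}) shows that the retained ball captures all but a $\smconst := \eta + t_2$ fraction of the clean mass, so $\P(A) \geq 1 - \smconst$ with $\smconst \leq \epsilon + \sqrt{\log(1/\delta)/(2n)} + t_2$, and that every retained point obeys the almost-sure bound $\norm{x - \mu}{2} \leq R := r_1^* + 2r_2^*$. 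Since $B(a,r)$ is itself data-dependent, I would first invoke the uniform (VC) deviation over the class of Euclidean balls --- exactly the bound producing $t_2$ in Lemma~\ref{lem:outlier} --- so that the conditioning event may be decoupled from the averages being concentrated.

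For the mean, decompose $\mu_{\tilS_D} - \mu = (\mu_{\tilS_D} - \E[X \mid A]) + (\E[X \mid A] - \mu)$. The second term is a pure conditioning bias: applying Lemma~\ref{lem:meanshift} along an arbitrary unit vector $v$ and using $v^\top \Sigma v \leq \norm{\Sigma}{2}$ gives $\abs{\E[\inprod{X}{v}\mid A] - \E[\inprod{X}{v}]} \leq \sqrt{\norm{\Sigma}{2}}\,(8C_4\smconst^3)^{1/4}$, and taking the supremum over $v$ yields the $C_1 C_4^{1/4}(\eta+t_2)^{3/4}\sqrt{\norm{\Sigma}{2}}$ contribution. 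The first term is controlled by the vector form of the Bernstein inequality (Lemma~\ref{lem:bernstein} with $d_1 = p$, $d_2 = 1$): the summands are centered, bounded in norm by $R$ via Lemma~\ref{lem:outlier}, and have total variance proxy at most $|\tilS_D|\,\norm{\Sigma|_A}{2} \leq |\tilS_D|(1 + 2\smconst)\norm{\Sigma}{2}$ by Corollary~\ref{cor:covarianceShift}. This produces the $\sqrt{\norm{\Sigma}{2}}\sqrt{1 + 2(\eta + t_2)}\sqrt{\log(p/\delta)/n}$ fluctuation term together with the large-deviation correction $C_{15}(r_1^* + 2r_2^*)\log(p/\delta)/n$; the extra $(1 + \log(p/\delta)/n)$ factor on the bias term absorbs the discrepancy between the conditional mean and the empirical conditional mean used in the centering.

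For the covariance, write $\Sigma_{\tilS_D} = \big(\Sigma_{\tilS_D} - \Sigma|_A\big) + \Sigma|_A$. Corollary~\ref{cor:covarianceShift} bounds the second piece in operator norm by $(1 + 2\smconst)\norm{\Sigma}{2}$, giving the $(1 + 2C(\eta + t_2))$ factor in $\beta$. For the first piece I would apply the matrix Bernstein inequality (Lemma~\ref{lem:bernstein} with $d_1 = d_2 = p$) to the zero-mean matrices $Z_i = (X_i - \mu)(X_i - \mu)^\top - \E[(X-\mu)(X-\mu)^\top \mid A]$, which are bounded in operator norm by $R^2 = (r_1^* + 2r_2^*)^2$ by Lemma~\ref{lem:outlier}, with matrix variance controlled using the bounded fourth moments together with the same almost-sure radius. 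Recalling $r_1^* \asymp C_4^{1/4}\sqrt{p\norm{\Sigma}{2}}\,\eta^{-1/4}$ and $r_2^* \asymp C_4^{1/4}\sqrt{p\norm{\Sigma}{2}}(\eta + t)^{3/4}$, dividing $R^2$ by $\norm{\Sigma}{2}$ reproduces exactly the parenthetical factor $\big(1 + p\sqrt{C_4}/\sqrt{\eta} + \sqrt{C_4}\,p\,(\eta+t)^{3/2} + (\eta + t_2)^{3/2}\big)$ multiplying $\big(\sqrt{\log(p/\delta)/n} + \log(p/\delta)/n\big)$ in $\beta(n,\delta)$. A final step re-centers at $\mu_{\tilS_D}$ rather than $\mu$, perturbing the bound only by a rank-one term of order $\norm{\mu_{\tilS_D} - \mu}{2}^2$ already dominated by the mean bound above.

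The main obstacle I expect is the data-dependence of the retained ball $B(a,r)$: because $a$ and $r$ are functions of the same samples being averaged, a naive conditional-expectation argument is circular, and one must launder this dependence through the uniform-over-balls VC bound (the source of $t_2$) before the Bernstein concentration can be applied with a fixed conditioning event. The secondary technical nuisance is faithfully propagating the almost-sure radius $R = r_1^* + 2r_2^*$ --- in particular its square $R^2$ --- through the matrix Bernstein variance and large-deviation terms so that the somewhat baroque factor in $\beta(n,\delta)$ emerges with the stated exponents rather than looser ones.
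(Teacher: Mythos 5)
Your proposal is correct and follows essentially the same route as the paper's proof: the same bias-plus-fluctuation decomposition of $\mu_{\tilS_D}-\mu$ (your $\E[X\mid A]$ is the paper's $\mu_{\widetilde{D}}$), with the bias handled by Lemma~\ref{lem:meanshift}, the fluctuation by Bernstein (Lemma~\ref{lem:bernstein}) with the almost-sure radius $r_1^*+2r_2^*$ from Lemma~\ref{lem:outlier}, and the covariance split into the conditional shift (Corollary~\ref{cor:covarianceShift}) plus a matrix-Bernstein deviation whose $R^2$ term yields the parenthetical factor in $\beta(n,\delta)$. The only cosmetic difference is that the paper centers the covariance summands at $\mu_{\widetilde{D}}$ rather than at $\mu$ with a re-centering correction, which changes nothing material.
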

\begin{proof}
We first prove the bounds on the mean shift.
\begin{align*}
\norm{\mu_{\tilS_D} - \mu}{2} \leq \underbrace{\norm{\mu_{\tilS_D} - \mu_{\widetilde{D}}}{2}}_{A} + \underbrace{\norm{\mu_{\widetilde{D}} - \mu}{2}}_{B} 
\end{align*}
\begin{itemize}
\item  \textbf{Control of B.} We use Lemma~\ref{lem:meanshift} on $X = x^T \frac{\mu_{\widetilde{D}} - \mu}{\norm{\mu_{\widetilde{D}} - \mu}{2}}$ for $x \sim D$, and $A$ be the event that $x$ is not removed by the outlier filtering.
\begin{align}
\norm{\mu_{\widetilde{D}} - \mu}{2} \leq C_1 C_4^{\frac{1}{4}} (\eta + t_2)^{\frac{3}{4}} \sqrt{\norm{\Sigma}{2}}
\end{align}
\item \textbf{Control of A}.  Using Lemma~\ref{lem:covarianceAuxillary},we have that $\norm{\Sigma_{\widetilde{D}}}{2} \leq (1 + 2(\eta + t_2)) \norm{\Sigma}{2}$. Now, we use Bernstein's inequality . Lemma~\ref{lem:bernstein} with $R = C(r_1^*+2r_2^* + B)$, we get that, with probability at least $1 - \delta$,
\begin{align}
\norm{\mu_{\tilS_D} - \mu_{\widetilde{D}}}{2} \leq C_{14} \sqrt{\norm{\Sigma}{2}} \sqrt{(1 + 2(\eta + t_2)} \sqrt{\frac{1}{n} \log (p /\delta)} + C_{15} \frac{(r_1^*+2r_2^* + B)}{n} \log (p /\delta)
\end{align}
\end{itemize}
Next, we prove the bound for covariance matrix.
\begin{align}
\norm{\Sigma_{\tilS_D}}{2} \leq \norm{\Sigma_{\tilS_D} - \Sigma_{\widetilde{D}}}{2} + \underbrace{\norm{\Sigma_{\widetilde{D}} - \Sigma}{2}}_{ \leq 2C (\eta + t_2) \norm{\Sigma}{2} (\text{By Corollary~\ref{cor:covarianceShift})})} + \norm{\Sigma}{2}
\end{align}
To control $\norm{\Sigma_{\tilS_D} - \Sigma_{\widetilde{D}}}{2}$, we use Bernstein's inequality, with $Z_k = \frac{(x_k - \mu_{\widetilde{D}})(x_k - \mu_{\widetilde{D}})^T - \Sigma_{\widetilde{D}}}{n}$. From, Lemma~\ref{lem:outlier}, we know that the points are constrained in a ball. Plugging this into Lemma~\ref{lem:bernstein},
\begin{align*}
\norm{\Sigma_{\tilS_D} - \Sigma_{\widetilde{D}}}{2} \leq C (\norm{\Sigma}{2} + R^2) \left( \sqrt{\frac{\log(p/\delta}{n}} + \frac{\log (p/\delta)}{n} \right)
\end{align*}
where $R^2 = C \left( r_1^{*^2} + r_2^{*^2} +B^2 \right)$.
\end{proof}
Plugging in the values, we get that,
\[ \norm{\Sigma_{\tilS_D} - \Sigma_{\widetilde{D}}}{2} \leq C \norm{\Sigma}{2} \left(1 + \frac{p\sqrt{C_4} }{\sqrt{\eta}} + \sqrt{C_4}p (\eta + t)^{\frac{3}{2}} + (\eta + t_2)^{\frac{3}{2}} \right) \left( \sqrt{\frac{\log(p/\delta}{n}} + \frac{\log (p/\delta)}{n} \right)
  \]
Finally, we have that,
\[ \norm{\Sigma_{\tilS_D}}{2} \leq \norm{\Sigma}{2} \underbrace{\left(1 + 2C(\eta + t_2) +  \left(1 + \frac{p\sqrt{C_4} }{\sqrt{\eta}} + \sqrt{C_4}p (\eta + t)^{\frac{3}{2}} + (\eta + t_2)^{\frac{3}{2}} \right) \left( \sqrt{\frac{\log(p/\delta}{n}} + \frac{\log (p/\delta)}{n} \right) \right)}_{\beta(n,\delta)} \]
\begin{lemma}\label{lem:bottomHalf}
Let $W$ be the bottom $p/2$ principal components of the covariance matrix after filtering $\Sigma_{\tilS}$. Then there exists a universal constant $C>0$ such that with probability at least $1 - \delta$, we have that
\[ \norm{ \eta P_W \delta_\mu }{2}^2 \leq C \eta \left( \beta(n,\delta) +  \gamma(n,\delta) C_4^{\half}) \norm{\Sigma}{2} \right), \]
\end{lemma}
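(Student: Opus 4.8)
The plan is to exploit the structural fact that an adversarial mean shift necessarily inflates the sample variance, so a shift with a large component inside the low-variance subspace $W$ is impossible. First I would write the post-filtering covariance $\Sigma_{\tilS}$ as the covariance of the two-component mixture consisting of the surviving clean points $\tilS_D$ (mean $\mu_{\tilS_D}$) and the surviving bad points $\tilS_N$ (mean $\mu_{\tilS_N}$), with $\eta$ the remaining fraction of bad points (bounded as in the statement) and $\delta_\mu = \mu_{\tilS_N} - \mu_{\tilS_D}$. This yields the exact decomposition
\begin{align*}
\Sigma_{\tilS} = (1-\eta)\,\Sigma_{\tilS_D} + \eta\,\Sigma_{\tilS_N} + \eta(1-\eta)\,\delta_\mu \delta_\mu^T,
\end{align*}
and the final, rank-one term is precisely the object we must control.

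Second, I would isolate $P_W\delta_\mu$ using that $W$ is the bottom-half eigenspace of $\Sigma_{\tilS}$. Evaluating the quadratic form of $\Sigma_{\tilS}$ along $P_W\delta_\mu$, discarding the two PSD terms $(1-\eta)\Sigma_{\tilS_D}$ and $\eta\Sigma_{\tilS_N}$, and using $P_W^2 = P_W$ gives
\begin{align*}
\eta(1-\eta)\,\norm{P_W\delta_\mu}{2}^4 \le (P_W\delta_\mu)^T \Sigma_{\tilS}\,(P_W\delta_\mu) \le \lambda_{\max}\big(P_W \Sigma_{\tilS} P_W\big)\,\norm{P_W\delta_\mu}{2}^2 .
\end{align*}
Since $\lambda_{\max}(P_W\Sigma_{\tilS}P_W)$ is exactly the largest eigenvalue of $\Sigma_{\tilS}$ restricted to its bottom half, i.e. the median eigenvalue $\lambda_{p/2+1}(\Sigma_{\tilS})$, this rearranges to $\norm{P_W\delta_\mu}{2}^2 \le \lambda_{p/2+1}(\Sigma_{\tilS})/(\eta(1-\eta))$. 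Multiplying by $\eta^2$ and using $1-\eta \ge 1/2$ then yields $\norm{\eta P_W\delta_\mu}{2}^2 \le 2\eta\,\lambda_{p/2+1}(\Sigma_{\tilS})$, reducing the whole claim to a bound on the median eigenvalue of the filtered covariance.

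Third, I would bound $\lambda_{p/2+1}(\Sigma_{\tilS})$. The key is that the troublesome rank-one term can raise at most one eigenvalue: by Weyl's interlacing for a positive rank-one update, $\lambda_{p/2+1}(\Sigma_{\tilS}) \le \lambda_{p/2}\big((1-\eta)\Sigma_{\tilS_D} + \eta\Sigma_{\tilS_N}\big)$, so the median eigenvalue only sees the within-component covariances. The clean part is handled by Lemma~\ref{lem:controlVar}, which gives $\norm{\Sigma_{\tilS_D}}{2} \le \beta(n,\delta)\norm{\Sigma}{2}$, while the $\eta\Sigma_{\tilS_N}$ contribution together with the empirical-to-population covariance fluctuation is controlled by the matrix Bernstein bound of Lemma~\ref{lem:bernstein} and the conditional-covariance estimates of Lemma~\ref{lem:covarianceAuxillary} and Corollary~\ref{cor:covarianceShift}, which produce the sampling factor $\gamma(n,\delta)C_4^{1/2}$. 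Combining these gives $\lambda_{p/2+1}(\Sigma_{\tilS}) \le C(\beta(n,\delta) + \gamma(n,\delta)C_4^{1/2})\norm{\Sigma}{2}$, and substituting into the second step proves the lemma.

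The main obstacle is this third step. The surviving bad points are only known to lie in the ball of radius $R = r_1^* + 2r_2^*$ from Lemma~\ref{lem:outlier}, and this radius carries factors of $p/\sqrt{\eta}$, so $\norm{\Sigma_{\tilS_N}}{2}$ on its own is far too large to feed into an operator-norm bound. The argument therefore must genuinely use that $\eta\Sigma_{\tilS_N}$ enters the \emph{median} eigenvalue only after the rank-one mean-shift direction has been stripped away by interlacing, and that the bad points contribute merely an $\eta$-fraction; making precise that the adversary cannot simultaneously inflate a constant fraction of the bottom-half eigenvalues is exactly where the fourth-moment bound (through $C_4$) and the concentration lemmas must be combined with care, and is the delicate part of the proof.
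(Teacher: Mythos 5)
Your first two steps are sound and essentially coincide with the paper's argument: the mixture decomposition $\Sigma_{\tilS} = (1-\eta)\Sigma_{\tilS_D} + \eta\Sigma_{\tilS_N} + \eta(1-\eta)\delta_\mu\delta_\mu^T$ is exactly the paper's starting point, and your quadratic-form evaluation along $P_W\delta_\mu$ (discarding the PSD terms, using $P_W^2 = P_W$, rearranging, and using $1-\eta \geq 1/2$) is precisely the ``algebraic manipulation'' the paper delegates to Lai et al.; making it explicit is fine. The genuine gap is in your third step: you never supply the mechanism that controls the bottom-half eigenvalue, and the mechanism you do propose is invalid. You suggest bounding the $\eta\Sigma_{\tilS_N}$ contribution via the matrix Bernstein inequality (Lemma~\ref{lem:bernstein}) together with the conditional-covariance estimates (Lemma~\ref{lem:covarianceAuxillary}, Corollary~\ref{cor:covarianceShift}). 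Those tools apply only to i.i.d.\ samples from the clean (conditional) distribution; $\tilS_N$ consists of adversarially placed points, about which the only available information is the deterministic ball constraint of Lemma~\ref{lem:outlier}. No concentration argument can be run on them, and in the paper those lemmas are used only inside Lemma~\ref{lem:controlVar}, for the clean part. You then concede in your final paragraph that the real difficulty --- showing the adversary cannot inflate a constant fraction of the bottom-half eigenvalues --- is ``the delicate part of the proof,'' which is exactly the part a proof must contain.

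The missing idea is a one-line piece of deterministic linear algebra, not a probabilistic one. For the PSD matrix $F = \eta\Sigma_{\tilS_N} + \eta(1-\eta)\delta_\mu\delta_\mu^T$, the $(p/2)$-th largest eigenvalue is at most the average of its top $p/2$ eigenvalues, so
\begin{align*}
\lambda_{p/2}(F) \leq \frac{\trace{F}}{p/2}.
\end{align*}
By Lemma~\ref{lem:outlier} every surviving point, and hence also $\mu_{\tilS_N}$ and $\mu_{\tilS_D}$, lies within distance $r_1^* + 2r_2^*$ of $\mu$, so $\trace{F} \leq C\,\eta\left((r_1^*)^2 + (r_2^*)^2 + B^2\right)$. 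The squared radius $(r_1^*)^2 \asymp C_4^{\half}\, p\, \norm{\Sigma}{2}/\eta^{\half}$ indeed carries the factor of $p$ you worried about, but dividing the trace by $p/2$ cancels it exactly, leaving $\lambda_{p/2}(F) \leq C\, C_4^{\half}\,\gamma(n,\delta)\,\norm{\Sigma}{2}$; this is where the $\gamma(n,\delta)\,C_4^{\half}$ term in the lemma statement actually comes from. Weyl's inequality then gives $\lambda_{p/2}(\Sigma_{\tilS}) \leq \lambda_1\left((1-\eta)\Sigma_{\tilS_D}\right) + \lambda_{p/2}(F) \leq (1-\eta)\beta(n,\delta)\norm{\Sigma}{2} + C\,C_4^{\half}\gamma(n,\delta)\norm{\Sigma}{2}$, and your step two finishes the proof. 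Note that with this trace argument your rank-one interlacing step becomes unnecessary: the rank-one term is simply absorbed into $F$ and averaged away along with the adversarial covariance.
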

where $\delta_\mu  = \mu_{\tilS_N} - \mu_{\tilS_D}$, $P_W$ is the projection matrix on the bottom $p/2$-span of $\Sigma_{\tilS}$, $\beta(n,\delta)$ is as defined in Lemma~\ref{lem:controlVar} and $\gamma(n,\delta) = \left( \eta^{\half} + (\eta + t)^{5/2} + \eta(\eta + t) \frac{\log (1/\delta)}{n} \right) $
\begin{proof}
We have
\begin{align}
\Sigma_{\tilS }= \underbrace{(1 - \eta) \Sigma_{\tilS_D}}_{E} + \underbrace{\eta\Sigma_{\tilS_N} + (\eta - \eta^2) \delta_{\mu} \delta_{\mu^T }}_{F} \\
\end{align}
By Weyl's inequality we have that, 
\[ \lambda_{p/2}(\Sigma_{\tilS })  \leq \lambda_1(E) + \lambda_{p/2}(F)\]
\begin{itemize}
\item \textbf{Control of $\lambda_{p/2}(F)$.} 
\begin{align*}
\lambda_{p/2}(F) & \leq \frac{\trace{F}}{p/2} \\
& \leq C_{15} \eta \frac{((r_1^*)^2 + (r_2^*)^2)+B^2}{p/2} \\ 
& \leq C_{16} C_4^{\half} \norm{\Sigma}{2} \underbrace{\left( \eta^{\half} + (\eta + t)^{5/2} + \eta(\eta + t) \frac{\log (1/\delta)}{n} \right)}_{\gamma(n,\delta)}
\end{align*}
where $t = C_8 \sqrt{\frac{1}{n} \log \left( \frac{np}{\delta} \right)}$.

\item \textbf{Control of $\lambda_{1}(E)$.} 
\begin{align*}
\lambda_1(E) \leq (1-\eta) \beta \norm{\Sigma}{2}
\end{align*}
\end{itemize}
Hence, we have that:
\[ \lambda_{p/2}(\Sigma_{\tilS })  \leq (1 - \eta) \beta \norm{\Sigma}{2} +  C_{16} \gamma \sqrt{C_4} \norm{\Sigma}{2} \]
Using that $W$ is the space spanned by the bottom $p/2$ eigenvectors of $\Sigma_{\tilS}$ and $P_W$ is corresponding projection operator, we have that:
\[ P_W^T \Sigma_{\tilS} P_W \preceq \left[ (1 - \eta) \beta + C_{16} \gamma \sqrt{C_4} \right] \norm{\Sigma}{2} I_{p} \]
Following some algebraic manipulation in \citep{lai2016agnostic}, we get that,
\[  \norm{ \eta P_W \delta_\mu }{2}^2 \leq \eta \left( (\beta(n,\delta) +  \gamma C_4^{\half}) \norm{\Sigma}{2}  \right) \]
\end{proof}
Having established all required results, we are now ready to prove Lemma~\ref{lem:lai2016agnostic}. We first present a result for general mean estimation. The proof of Lemma~\ref{lem:lai2016agnostic} then follows directly from this result.
\begin{lemma}
Suppose that, $P_\tparam$ is a distribution on $\real^p$ with mean $\mu$, covariance $\Sigma$ and bounded fourth moments. There exist positive universal constant $C$ > 0, such that given $n$ samples from the distribution in Equation~\eqref{eqn:mixture}, the algorithm with  probability at least $1-\delta$, returns an estimate $\emu$ such that,
\begin{align*}
\norm{\emu - \mu}{2} \leq C \norm{\Sigma}{2}^\half (1 +\sqrt{\log p}) \left( \sqrt{\eta} + C_4^{\frac{1}{4}}(\eta + t_2)^{\frac{3}{4}} +   \left( \sqrt{\eta} p C_4^{\half} \sqrt{\frac{\log p \log(p \log (p/\delta))}{n}} \right)^\half \right)
\end{align*}
where $\eta = \epsilon + \sqrt{\frac{\log(p) \log (\log p / \delta)}{2n}}$ and $t_2 = \sqrt{\frac{p \log (p) \log (n/(p \delta))}{n}}$.
\end{lemma}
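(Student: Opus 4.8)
The plan is to argue by induction on the recursion depth of Algorithm~\ref{algo:huber_mean_estimation}, which is $\lceil \log_2 p \rceil$ since each recursive call halves the ambient dimension. The base case $p=1$ is exactly Lemma~\ref{lem:vempala1D}. For the inductive step I would first invoke the outlier-removal analysis: Lemma~\ref{lem:outlier} shows that after \textsc{HuberOutlierGradientTruncation} every surviving point lies in a ball of radius $r_1^* + 2r_2^*$ around $\mu$, and Lemma~\ref{lem:controlVar} then certifies that the surviving clean points $\tilS_D$ have mean close to $\mu$ and covariance bounded by $\norm{\Sigma_{\tilS_D}}{2} \leq \beta(n,\delta)\norm{\Sigma}{2}$.

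Given the orthogonal split $\real^p = V \oplus W$ into the top and bottom $p/2$ principal subspaces of $\Sigma_{\tilS}$, I would decompose
$$\norm{\emu - \mu}{2}^2 = \norm{P_V(\emu - \mu)}{2}^2 + \norm{P_W(\emu - \mu)}{2}^2.$$
In the $W$ block the estimator is a plain projected sample mean, so its error splits into (i) the deviation of the clean projected mean, handled by the matrix-Bernstein bound inside Lemma~\ref{lem:controlVar}, and (ii) the contamination-induced shift $\eta P_W \delta_\mu$, whose norm is exactly the quantity controlled by Lemma~\ref{lem:bottomHalf}, namely $\norm{\eta P_W \delta_\mu}{2}^2 \leq C\eta(\beta(n,\delta) + \gamma(n,\delta)C_4^{\half})\norm{\Sigma}{2}$. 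In the $V$ block the estimate is the recursive output on $S_1 = P_V(\tilS)$; since $V$ is the top-half eigenspace its effective covariance is still bounded by $\beta(n,\delta)\norm{\Sigma}{2}$, so the inductive hypothesis applies with dimension $p/2$ against the same $\norm{\Sigma}{2}$.

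Unrolling the recursion is where the $\sqrt{\log p}$ factor appears. Because the $W$-subspaces produced at successive recursion levels are mutually orthogonal (each level's $V$ contains the entire subspace on which the next level operates), the per-level $W$-errors add in quadrature rather than linearly. Summing the per-level bound of order $\eta\norm{\Sigma}{2}$, together with the lower-order sampling terms, over the $O(\log p)$ levels therefore gives a total squared error of order $\eta\,\log p\,\norm{\Sigma}{2}$, and taking the square root produces the $\norm{\Sigma}{2}^{\half}(1+\sqrt{\log p})$ prefactor. Collecting the three sources of error — the $\sqrt{\eta}$ term from the contamination shift, the $C_4^{\frac{1}{4}}(\eta+t_2)^{3/4}$ term from the clean conditional-mean shift (via Lemma~\ref{lem:meanshift}), and the residual $\gamma(n,\delta)$-driven term — then yields the stated bound.

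The main technical obstacle will be controlling the recursion \emph{uniformly}: one must verify that $\beta(n,\delta)$ and the truncation radii $r_1^*, r_2^*$ remain bounded at every level, so that each level's contribution is genuinely of order $\eta\norm{\Sigma}{2}$ rather than compounding geometrically with depth, and one must take a union bound over all $O(\log p)$ recursive calls together with the $p$ coordinate-wise one-dimensional estimations performed inside each truncation step. This union bound is precisely what forces the effective confidence parameter from $\delta$ down to roughly $\delta/(p\log p)$, and thereby accounts for the $\log p$ factors nested inside the definitions of $\eta$ and $t_2$.
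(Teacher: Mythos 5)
Your plan follows the same route as the paper's proof: the same orthogonal decomposition of the error into the $W$-block (sample-mean plus contamination shift, controlled by Lemma~\ref{lem:controlVar} and Lemma~\ref{lem:bottomHalf}) and the $V$-block (recursion), the same observation that the successive $W$-subspaces are mutually orthogonal so squared errors add across the $O(\log p)$ levels and yield the $(1+\sqrt{\log p})$ prefactor, the same monotonicity-in-dimension argument to keep every level's contribution at the dimension-$p$ rate, and a union bound over levels. Up to that point the proposal and the paper coincide.

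There is, however, one genuine gap. You apply the inductive hypothesis directly to $S_1 = P_V(\tilS)$, i.e.\ to the \emph{same} samples that were already truncated and then projected onto a subspace $V$ computed from those very samples. All of the per-level ingredients (Lemma~\ref{lem:vempala1D}, Lemma~\ref{lem:outlier}, Lemma~\ref{lem:controlVar}, Lemma~\ref{lem:bottomHalf}) are statements about i.i.d.\ draws from the contaminated mixture in~\eqref{eqn:mixture}; after data-dependent truncation and projection onto a data-dependent subspace, the points fed to the next level are no longer such a sample, so the inductive hypothesis cannot be invoked verbatim, and no union bound repairs this dependence. The paper's proof resolves exactly this issue by splitting the $n$ samples into $\lfloor \log p \rfloor$ disjoint batches and running each recursion level on a fresh batch of $n/\log p$ samples, so that conditionally on the subspace chosen at the previous level the per-level lemmas apply. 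This also corrects your accounting of where the logarithmic factors in the statement come from: the factor $\log p$ multiplying $1/n$ inside $\eta = \epsilon + \sqrt{\log(p)\log(\log p/\delta)/(2n)}$ and inside $t_2$ is the effective sample-size deflation $n \mapsto n/\log p$ caused by sample splitting, while the union bound over the $\log p$ levels (and the $p$ coordinate-wise $1$-D calls inside truncation) only accounts for the $\log(\log p/\delta)$ and $\log(n/(p\delta))$ terms — not, as you claim, for all the $\log p$ factors nested in $\eta$ and $t_2$.
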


\begin{proof}
We divide $n$ samples into $\floor{\log(p)}$ different sets. We choose the first set and keep that as our active set of samples. We run our outlier filtering on this set, and let the remaining samples after the outlier filtering be $\tilS_D$. By orthogonality of subspaces spanned by eigenvectors, coupled with triangle inequality and contraction of projection operators, we have that
\[ \norm{\emu - \mu}{2}^2 \leq 2\norm{P_W (\emu - \mu_{\tilS_D})}{2}^2 + 2\norm{P_W (\mu_{\tilS_D} - \mu)}{2}^2 + \norm{\emu_V - P_V \mu}{2}^2  \]
\[ \norm{\emu - \mu}{2}^2 \leq 2\norm{ P_W(\emu - \mu_{\tilS_D})}{2}^2 + 2\norm{(\mu_{\tilS_D} - \mu)}{2}^2 + \norm{\emu_V - P_V \mu}{2}^2  \]
where $V$ is the span of the top $p/2$ principal components of $\Sigma_{\tilS}$ and where $\emu_V$ is the mean vector of returned by the running the algorithm on the reduced dimensions $dim(V ) = p/2$. From Lemma~\ref{lem:bottomHalf}, both $\beta(n,\delta)$ and $\gamma(n,\delta)$ are monotonically increasing in the dimension; moreover the upper bound in Lemma~\ref{lem:controlVar} is also monotonically increasing in the dimension $p$, hence, the error at each step of the algorithm can be upper bounded by error incurred when running on dimension $p$, with $n/\log(p)$ samples, and probability of $\delta / \log p$. Hence, the overall error for the recursive algorithm can be upper bounded as,
\begin{align*}
\norm{\emu - \mu}{2}^2 \leq \left(2\norm{P_W(\emu - \mu_{\tilS_D})}{2}^2 + 2\norm{\mu_{\tilS_D} - \mu}{2}^2 \right) (1 + \log p) 
\end{align*}

Combining Lemma~\ref{lem:controlVar} and Lemma~\ref{lem:bottomHalf} which are instantiated for $ n/\log p$ samples and probability $\delta / \log p$, we get,
\[ \norm{\emu - \mu}{2} \leq C \norm{\Sigma}{2}^\half \sqrt{\log p} \left( \sqrt{\eta} + C_4^{\frac{1}{4}}(\eta + t_2)^{\frac{3}{4}} +   \left( \sqrt{\eta} p C_4^{\half} \sqrt{\frac{\log p \log(p \log (p/\delta))}{n}} \right)^\half \right) \]
\end{proof}

\clearpage
\end{document}